\documentclass{article}

\PassOptionsToPackage{numbers, compress}{natbib}

\usepackage[final]{neurips_2021}



\usepackage[utf8]{inputenc} 
\usepackage[T1]{fontenc}    
\usepackage[colorlinks,
            linkcolor=red,
            anchorcolor=blue,
            citecolor=blue
            ]{hyperref}
\usepackage{url}            
\usepackage{booktabs}       
\usepackage{amsfonts}       
\usepackage{nicefrac}       
\usepackage{microtype}      
\usepackage{xcolor}
\usepackage{subfig}

\usepackage{macros}
\usepackage[numbers]{natbib}
\def \la {\langle}
\def \ra {\rangle}

\renewcommand{\epsilon}{\varepsilon}
\def\rage{{\tt RAGE }}

\def\AE{{\tt ActionElim }}
\def\NE{{\tt NeuralEmbedding }}
\def\KE{{\tt KernelEmbedding }}
\def\LE{{\tt LinearEmbedding }}
\newcommand{\round}{{\tt ROUND }}

\title{Pure Exploration in Kernel and Neural Bandits}

%

\author{%
  Yinglun Zhu\thanks{Equal contribution} \\
  Department of Computer Sciences\\
  University of Wisconsin-Madison\\
  Madison, WI 53706 \\
  \texttt{yinglun@cs.wisc.edu} \\
  \And
  Dongruo Zhou$^*$ \\
  Department of Computer Science\\
  University of California, Los Angeles\\
  Los Angeles, CA 90095 \\
  \texttt{drzhou@cs.ucla.edu} \\
  \And
  Ruoxi Jiang$^*$ \\
  Department of Computer Science\\
  University of Chicago\\
  Chicago, IL 60637 \\
  \texttt{roxie62@uchicago.edu} \\
  \And
  Quanquan Gu \\
  Department of Computer Science\\
  University of California, Los Angeles\\
  Los Angeles, CA 90095 \\
  \texttt{qgu@cs.ucla.edu} \\
  \And
  Rebecca Willett \\
  Department of Statistics and Computer Science\\
  University of Chicago\\
  Chicago, IL 60637 \\
  \texttt{willett@uchicago.edu} \\
  \And
  Robert Nowak \\
  Department of Electrical and Computer Engineering\\
  University of Wisconsin-Madison\\
  Madison, WI 53706 \\
  \texttt{rdnowak@wisc.edu} \\
}

\begin{document}

\maketitle

\begin{abstract}
We study pure exploration in bandits, where the dimension of the feature representation can be much larger than the number of arms. To overcome the curse of dimensionality, we propose to adaptively embed the feature representation of each arm into a lower-dimensional space and carefully deal with the induced model misspecification. Our approach is conceptually very different from existing works that can either only handle low-dimensional linear bandits or passively deal with model misspecification. We showcase the application of our approach to two pure exploration settings that were previously under-studied: (1) the reward function belongs to a possibly infinite-dimensional Reproducing Kernel Hilbert Space, and (2) the reward function is nonlinear and can be approximated by neural networks. Our main results provide sample complexity guarantees that only depend on the effective dimension of the feature spaces in the kernel or neural representations. Extensive experiments conducted on both synthetic and real-world datasets demonstrate the efficacy of our methods.
\end{abstract}

\section{Introduction}

Pure exploration in bandits \citep{even2002pac, even2006action, bubeck2009pure} has been extensively studied in machine learning. Consider a set of arms, where each arm is associated with an unknown reward distribution. The goal is to approximately identify the optimal arm using as few samples as possible. Applications of bandit pure exploration range from medical domains \citep{aziz2019multi} to online content recommendation \citep{tanczos2017kl}. 

Despite the popularity of bandit pure exploration, it was previously mainly studied in two relatively restrictive settings: (1) the standard multi-armed bandit setting \citep{karnin2013almost, jamieson2014lil, chen2015optimal, jamieson2014best}, where the expected rewards among arms are completely unrelated to each other, and (2) the (generalized) linear bandit setting \citep{soare2014best, fiez2019sequential, degenne2020gamification, kazerouni2021best}, where the expected rewards are assumed to be linearly parameterized by some unknown weight vector. The standard multi-armed bandit setting fails to deal with large arm sets, and the linear bandit setting suffers from both model misspecification (due to its simplified linear form) and the curse of dimensionality in the high-dimensional setting. Pure exploration is also studied in continuous spaces. However, guarantees therein scale exponentially with dimension \citep{munos2014bandits, bartlett2019simple}.

In this paper, we generalize bandit pure exploration to the nonlinear and high-dimensional settings. More specifically, we study the following two settings: (1) the rewards of arms are parameterized by a function belonging to a Reproducing Kernel Hilbert Space (RKHS),
and (2) the rewards of arms are nonlinear functions that can be approximated by an overparameterized neural network. Problems in these two settings are often high-dimensional in nature. To overcome the curse of dimensionality, we propose to adaptively embed each arm's feature representation in a lower-dimensional space and carefully deal with the induced misspecification. Note that our approach is conceptually very different from all existing work dealing with model misspecification: they assume the existence of misspecification and address it in the original space (thus dealing with model misspecification in a passive way) \citep{lattimore2020learning, camilleri2021high}. On the other hand, we deliberately induce (acceptable) misspecification to embed arms into lower-dimensional spaces and thus overcome the curse of dimensionality.

\subsection{Contribution and Outline}

We make the following main contributions:
\begin{itemize}[leftmargin = *]
    \item In \cref{sec:active_compression}, we introduce the idea of adaptive embedding to avoid the curse of dimensionality. The induced model misspecification are carefully handled, which is novel in the bandit pure exploration setting. The sample complexity is theoretically analyzed and we relate the instance-dependent sample complexity to the complexity of a closely-related linear bandit problem without model misspecification. As a by-product, our algorithm can also be applied to constrained high-dimensional linear bandit pure exploration to reduce sample complexity. 
    \item In \cref{sec:kernel}, we specialize the adaptive embedding scheme to pure exploration in an RKHS. We construct feature mappings from eigenfunctions and eigenvalues of the associated kernel. The effective dimension of the kernel is analyzed, and we provide sample complexity guarantees in terms of the eigenvalue decay of the associated kernel. We rely on a \emph{known} kernel in this setting. 
    \item In \cref{sec:neural}, we further extend our adaptive embedding scheme to pure exploration with a general nonlinear reward function and model the reward function with an over-parameterized neural network. Sample complexity guarantees are provided with respect to the eigenvalue decay of the associated Neural Tangent Kernel. 
    To the best of our knowledge, this provides the first theoretically founded pure exploration algorithm with a neural network approximation.

    \item In \cref{sec:experiment}, we conduct extensive experiments on both synthetic and real-world datesets to confirm the efficacy of our proposed algorithms. 
    We conclude our paper in \cref{sec:conclusion} with open problems. 
    \end{itemize}

\subsection{Related Work}
The bandit pure exploration problem has a long history, dating back to the seminal work by \citet{bechhofer1958sequential, paulson1964sequential}. One classical objective of pure exploration is  Best Arm Identification (BAI), where the goal is to identify the best arm using as few samples as possible \citep{karnin2013almost, jamieson2014lil, chen2015optimal, garivier2016optimal}. To make it applicable to a large action space, the BAI problem is also extensively studied as the good arm identification problem, where the goal is to identify an $\epsilon$-optimal arm \citep{even2002pac, even2006action, kalyanakrishnan2012pac, kano2019good, sabato2019epsilon, katz2020true, mason2020finding}.

The pure exploration problem in linear bandits is initially analyzed in \citet{soare2014best}, where optimal experimental design \citep{kiefer1960equivalence} is applied to guide the allocation of samples. Other approaches dealing with linear bandits, with various sample complexity guarantees, include adaptive sampling \citep{xu2018fully} and an approach called track-and-stop \citep{degenne2020gamification}. Constrained linear bandit pure exploration is also commonly studied with additional assumptions on the reward parameters \citep{tao2018best, degenne2020gamification}. We note that the track-and-stop approach only achieves optimal instance-dependent sample complexity in the regime where the confidence parameter approaches $0$, but fails to do so in the moderate confidence regime. \citet{fiez2019sequential} propose an elimination-based algorithm (with optimal design) that achieves (nearly) instance-dependent sample complexity. such algorithm is further generalized to the combinatorial bandit setting \citep{katz2020empirical}. 

Learning with model misspecification was recently introduced in bandit learning, with the primary emphasis placed on the regret minimization problem \citep{ghosh2017misspecified, lattimore2020learning, foster2020adapting}. A very recent independent work studies pure exploration in kernel bandits with misspecification \citep{camilleri2021high}; both their and our algorithms follow the framework of \rage \citep{fiez2019sequential} and draw inspiration from \citep{lattimore2020learning}. \citet{camilleri2021high} propose a robust estimator that works in high-dimensional spaces and also explore the project-then-round idea through \emph{regularized least squares}. Our algorithms adaptively embed actions into lower dimensional spaces according to some error tolerances (different embeddings from round to round); our rounding and elimination steps are thus computed only with respect to lower-dimensional embeddings. We additionally study the pure exploration problem with an overparameterized neural network. As mentioned before, our approach is also conceptually different from existing ones: rather than passively dealing with model misspecification in its original representation, we deliberately and adaptively embed arms into a lower-dimensional space to avoid the curse of dimensionality; the induced model misspecification are also carefully dealt with in our algorithms.

\section{Problem setting}
\label{sec:preliminary}

We introduce the general setting and notations for pure exploration in bandits. Consider a set of arms $\cX \subseteq \R^D$ where the number of arms $\abs*{\cX} = K$ is possibly very large. We use an unknown function $h: \cX \rightarrow [-1,1]$ to represent the true reward of each arm. A noisy feedback $h(\bx) + \xi$ is observed after each sample arm $\bx$, where the noise $\xi$ is assumed to be $1$-sub-Gaussian. The learner is allowed to allocate her samples based on previously collected information, and the goal is to approximately identify an approximately optimal arm using as few samples as possible. Let $\bx_\star = \argmax_{\bx \in \cX} h(\bx)$ denote the optimal arm among $\cX$. We aim at developing $(\epsilon, \delta)$-PAC guarantees: for any $\delta \in (0, 1)$, with probability at least $1 - \delta$, the algorithm outputs an $\epsilon$-optimal arm $\widehat \bx$ such that $h(\widehat \bx) \geq h(\bx_\star) - \epsilon$ using a finite number of samples. The performance of the algorithm is measured by its sample complexity, i.e., the number of samples pulled before it stops and recommends a candidate arm.

\textbf{Notations.} We define $\Delta_{\bx} = h(\bx_\star) - h(\bx)$ as the sub-optimality gap of arm $\bx$. We use the notations $\cS_k := \curly*{\bx \in \cX: \Delta_{\bx} < 4 \cdot 2^{-k}}$ (with $\cS_1 = \cX$). We consider feature mappings of the form $\bpsi_d(\cdot): \cX \rightarrow \R^{d}$, and define $\bpsi_d(\cX) = \curly*{\bpsi_d(\bx): \bx \in \cX}$. 
We use 
$\bLambda_{\cX} = \curly*{\lambda \in \R^{\abs{\cX}}: \sum_{\bx \in \cX} \lambda_{\bx} = 1, \lambda_{\bx} \geq 0 }$ to denote the $(\abs{\cX} -1)$-dimensional probability simplex over arms in $\cX$; and set $\bA_{\bpsi_d}(\lambda) = \sum_{\bx \in \cX} \lambda_{\bx} \bpsi_{d}(\bx) \bpsi_d (\bx)^\top$.\footnote{A generalized inversion is used for singular matrices. We refer to \cref{app:rounding} for detailed discussion.} We use $\norm{\bx}_{\bA} = \sqrt{\bx^\top \bA \bx}$ to represent the Mahalanobis norm. We also define $\cY(\cV) = \curly*{\bv - \bv^\prime: \bv, \bv^\prime \in \cV}$ for any set $\cV$. For a matrix $\Hb \in \RR^{|\cX| \times |\cX|}$, we use $\Hb(\xb, \xb')$ to denote the entry of $\Hb$ which locates at row $\xb$ and column $\xb'$.

\section{Bandit pure exploration with adaptive embedding}
\label{sec:active_compression}

We introduce the idea of bandit pure exploration with adaptive embedding, which can be viewed as an approach that actively trades off sample complexity with accuracy guarantees: we adaptively embed the feature representation into lower-dimensional spaces to avoid the curse of dimensionality, and conduct pure exploration with \emph{misspecified} linear bandits. The embedding dimensions are carefully selected so that we can identify an $\epsilon$-optimal arm.

We formalize the idea as follows. For any $d \in \N$, we assume the existence of a feature mapping $\bpsi_d: \cX \rightarrow \R^d$ and a unknown reward vector $\btheta_d \in \R^d$ such that, for any $\bx \in \cX$,
\begin{align*}
    h(\bx) = \ang*{\bpsi_d(\bx), \btheta_d} + \eta_d(\bx),
\end{align*}
where $\eta_d(\bx)$ represents the induced approximation error on arm $\bx$ with respect to the low-dimensional embedding $\bpsi_d(\cdot)$. Without loss of generality, we assume that the action set $\cX$ is rich enough so that $\bpsi_d(\cX)$ spans $\R^d$ for $d$ considered in this paper. Otherwise, one can always project feature representations $\bpsi_d(\cX)$ into an even lower-dimensional space without losing information in the linear component.

We use $\widetilde \gamma: \N \rightarrow \R$ to represent the misspecification level: an upper bound of the induced approximation error across all arms, i.e., $\max_{\bx \in \cX} \abs*{\eta_d(\bx)} \leq \widetilde \gamma (d)$. 
We define $g(d, \zeta) \coloneqq (1 + \zeta) \, \inf_{\lambda \in \bLambda_{\mathcal{X}}} \sup_{\by \in \cY( \bpsi_d(\mathcal{X}))} \norm*{\by}^2_{\bA_{\bpsi_d}(\lambda)^{-1}}$, which represents the optimal value of a transductive design among embeddings in $\R^d$. We define $\gamma(d) \coloneqq  \paren*{16 + 8\sqrt{ g(d , \zeta)}} \, \widetilde  \gamma(d)$, which quantifies the sub-optimality gap of the identified arm in the worst case. One can easily show that $\gamma(d) = O(\widetilde \gamma (d)\sqrt{d} )$ through Kiefer-Wolfowitz theorem \citep{kiefer1960equivalence}. 

\begin{remark}
We believe such optimality guarantees are un-improvable in general. In fact, a hard instance is constructed in \cite{lattimore2020learning} showing that, even with deterministic feedback, identifying a $o(\widetilde \gamma (d)\sqrt{d})$-optimal arm requires sample complexity exponential on $d$. On the other side, identifying a $\Omega(\widetilde \gamma (d )\sqrt{d})$-optimal only requires sample complexity polynomially in $d$. Such a sharp trade-off between optimality and sample complexity motivates our definition of $\gamma(d)$ (and our sample complexities are polynomially in $d$).
\end{remark}

We assume the knowledge of both the feature mapping $\bpsi_d(\cdot)$ and the error function $\widetilde \gamma (\cdot)$. This assumption is mild since one can explicitly construct/analyze $\bpsi_d(\cdot)$ and $\widetilde \gamma (\cdot)$ in many cases (as discussed in \cref{sec:high_dim_linear}, \cref{sec:kernel} and \cref{sec:neural}). We further assume that $\gamma(d)$ can be made arbitrarily small for large enough $d$. Such assumption holds true if the reward can be explained by the linear component for $d$ large enough, i.e., $\widetilde \gamma(d) = 0$. We now define the \emph{effective dimension} with respect to $\gamma(d)$ (induced from feature mapping $\bpsi_d(\cdot)$) as follows.
\begin{definition}
\label{def:d_eff}
For any $\epsilon > 0$, we define the effective dimension as $d_{\eff}(\epsilon) \coloneqq \min \curly*{ d \geq 1: \gamma(d) \leq \epsilon }$.
\end{definition}
In general, the effective dimension $d_{\eff}(\epsilon)$ captures the smallest dimension one needs to explore in order to identify an $\epsilon$-optimal arm. Similar notions have been previously used in regret minimization settings \citep{valko2013finite, valko2014spectral}. One can easily see that $d_{\eff}(\epsilon_1) \leq d_{\eff}(\epsilon_2)$ as long as $\epsilon_1 \geq \epsilon_2$.

\subsection{Algorithm and analysis}
\label{sec:alg_analysis}

\cref{alg:active_elim_adaptive} follows the framework of {\tt RAGE} \citep{fiez2019sequential} to eliminate arms with sub-optimality gap $\geq O(2^{-k})$ at the $k$-th iteration. It runs for $n = O(\log(1/\epsilon))$ iterations and identifies an $\epsilon$-optimal arm. We use optimal experimental design to select arms for faster elimination. For any fixed design $\lambda \in \bLambda_{\cX}$, with $N \geq r_d(\zeta)$ samples and an approximation factor $\zeta$ (with default value $\zeta \in [1/10, 1/4]$), the rounding procedure in $\R^d$, i.e., $\round(\lambda, N, d, \zeta)$, outputs a discrete allocation $\curly{\bx_1, \bx_2, \dots, \bx_N}$ satisfying
\begin{align}
    \max_{\by \in \cY(\bpsi_d(\cX))} \norm{\by}^2_{\paren{\sum_{i=1}^N \bpsi_d(\bx_i) \bpsi_d(\bx_i)^{\top}}^{-1}} \leq (1+\zeta) \max_{\by \in \cY(\bpsi_d(\cX))} \norm{\by}^2_{ \bA_{\bpsi_d}(\lambda)^{-1} } / N. \label{eq:rounding}
\end{align}
Efficient rounding procedures exist with $r_d(\zeta) = \frac{d^2 + d+2}{\zeta}$ \citep{pukelsheim2006optimal} or $r_d(\zeta) = \frac{180d}{\zeta^2}$ \citep{allen2020near, fiez2019sequential}. We refer reads to \citep{fiez2019sequential, pukelsheim2006optimal, allen2020near} for detailed rounding algorithms and the associated computational complexities.

\begin{algorithm}[]
	\caption{Arm Elimination with Adaptive Embedding and Induced Misspecification}
	\label{alg:active_elim_adaptive} 
	\renewcommand{\algorithmicrequire}{\textbf{Input:}}
	\renewcommand{\algorithmicensure}{\textbf{Output:}}
	\begin{algorithmic}[1]
		\REQUIRE Action set $\cX$, confidence parameter $\delta$, accuracy parameter $\epsilon$ and rounding approximation factor $\zeta$.
		\STATE Set $n = \ceil*{\log_2(2/\epsilon)}$ and $\widehat \cS_1 = \cX$.
		\FOR {$k = 1, 2, \dots, n$}
		\STATE Set $\delta_k = \delta/k^2$, $d_k = d_{\eff}(4\cdot 2^{-k})$. 
		\STATE Select feature representation $\bpsi_{d_k}(\cdot)$, and calculate the induced misspecification level $\widetilde \gamma(d_k)$. Set $r_{d_k}(\zeta) = O(d_k / \zeta^2)$ as the number of samples needed for \round in $\R^{d_k}$.
		\STATE Set $\lambda_k$ and $\tau_k$ be the design and the value of the following optimization problem 
		\vspace{-6 pt}
		$$\inf_{\lambda \in \bLambda_{\cX}} \sup_{\by \in \cY( \bpsi_{d_k}(\widehat \cS_k))} \norm*{\by}^2_{\bA_{\bpsi_{d_k}}(\lambda)^{-1}}.$$
		\vspace{-8 pt}
		\STATE Set $\epsilon_k = 2 \widetilde \gamma(d_k) + \widetilde \gamma(d_k) \sqrt{(1+\zeta) \, \tau_k }$,\\
		and $ N_k = \max \curly*{ \ceil*{ (2^{-k} - \epsilon_k)^{-2} 2 (1+\zeta)\,\tau_k  \log(\abs*{\widehat \cS_k}^2/\delta_k) }  ,r_{d_k}(\zeta)}.$
		\STATE Get $\curly{\bx_1, \bx_2, \dots, \bx_{N_k}} = \round (\lambda_k,N_k,d_k, \zeta)$.
		\STATE Pull arms $\curly{\bx_1, \bx_2, \dots, \bx_{N_k}}$ and receive rewards $\curly*{y_1, \ldots, y_{N_k}}$.
        \STATE Set $\widehat{\btheta}_k = \bA_k^{-1} \bb_k$, where $\bA_k = \sum_{i=1}^{N_k} \bpsi_{d_k}(\bx_i) \bpsi_{d_k}(\bx_i)^{\top}$ and $\bb_k = \sum_{i=1}^{N_k} \bpsi_{d_k}(\bx_i) y_i$.
        \STATE 
        Eliminate arms with respect to criteria 
        \vspace{-4 pt}
        $$\widehat \cS_{k+1} = \widehat \cS_k \setminus \{ \bx \in \widehat \cS_k : \exists \bx^\prime \text{ such that }(\bpsi_{d_k}(\bx^\prime) -\bpsi_{d_k}(\bx))^\top \widehat{\theta}_k \geq \omega_k(\bpsi_{d_k}(\bx^\prime) -\bpsi_{d_k}(\bx))\},$$
        \vspace{-4 pt}
        where $\omega_k(\by) = \epsilon_k + \norm{\by}_{\bA_k^{-1}} \sqrt{2 \log \paren*{ { \abs*{\widehat \cS_k}^2}/{\delta_k}}}$.
		\ENDFOR 
		\ENSURE Output any arm in $\widehat \cS_{n+1}$.
	\end{algorithmic}
\end{algorithm}

Unlike \rage that directly works in the original high-dimensional space, \cref{alg:active_elim_adaptive} adaptively embeds arms into lower-dimensional spaces and carefully deals with the induced misspecification. More specifically, the embedding dimension $d_k$ is selected as the smallest dimension such that the induced error term $\epsilon_k$ is well controlled, i.e., $\epsilon_k \leq O(2^{-k})$. The embedding is more aggressive at initial iterations due to larger error tolerance; The embedding dimension selected at the last iteration is (roughly) $d_{\eff}(\epsilon)$ to identify an $\epsilon$-optimal arm. The number of samples required for each iteration $N_k$ is with respect to an experimental design in the lower-dimensional space \emph{after embedding}. The \round procedure also becomes more efficient due to the embedding. Before stating our main theorem, we introduce the following complexity measure \citep{soare2014best, fiez2019sequential, degenne2020gamification}, which quantifies the hardness of the pure exploration problem (with respect to mapping $\bpsi_d(\cdot)$).
\begin{align*}
    \rho_d^\star(\epsilon) \coloneqq \inf_{\lambda \in \bLambda_{\cX}} \sup_{ \bx \in \cX \setminus \curly*{ \bx_\star }} \frac{\norm*{\bpsi_d(\bx_{\star})-\bpsi_d(\bx)}^2_{\bA_{\bpsi_d}(\lambda)^{-1}}}{ \max \curly{ h(\bx_\star) - h(\bx), \epsilon}^2}.
\end{align*}

\begin{restatable}{theorem}{thmActiveElimAdaptive}
\label{thm:active_elim_adaptive}
With probability of at least $1-\delta$, \cref{alg:active_elim_adaptive} correctly outputs an $\epsilon$-optimal arm with sample complexity upper bounded by 
\begin{align*}
    640 \sum_{k=1}^{\ceil*{\log_2(2/\epsilon)}} \paren{ \paren{ k \, \rho_{d_k}^\star(2^{2-k})  \log(k^2 \abs{\cX}^2/\delta)}  + \paren{r_{d_k}(\zeta)+1} }= \widetilde{O} \paren{ d_{\eff}(\epsilon) \cdot \max \curly*{\Delta_{\min}, \epsilon}^{-2} } ,
\end{align*}
where $d_k = d_{\eff}(4\cdot 2^{-k}) \leq d_{\eff}(\epsilon)$ since $4 \cdot 2^{-k} \geq \epsilon$ when $k \leq \ceil*{\log_2(2/\epsilon)}$.
\end{restatable}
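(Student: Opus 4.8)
The plan is to adapt the elimination analysis of \rage to the embedded, misspecified setting, with essentially all of the work concentrated in a per-iteration confidence width that absorbs the embedding error. Fixing an iteration $k$, I would write each observed reward as $y_i = \ang*{\bpsi_{d_k}(\bx_i),\btheta_{d_k}} + \eta_{d_k}(\bx_i) + \xi_i$ and expand, for a difference vector $\by = \bpsi_{d_k}(\bx^\prime) - \bpsi_{d_k}(\bx)$ with $\bx,\bx^\prime \in \widehat\cS_k$, the quantity $\by^\top\widehat{\btheta}_k - (h(\bx^\prime)-h(\bx))$ into three pieces: a noise term $\by^\top\bA_k^{-1}\sum_i \bpsi_{d_k}(\bx_i)\xi_i$, an estimator-bias term $\by^\top\bA_k^{-1}\sum_i \bpsi_{d_k}(\bx_i)\eta_{d_k}(\bx_i)$, and the intrinsic gap $-(\eta_{d_k}(\bx^\prime) - \eta_{d_k}(\bx))$. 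The noise term is $1$-sub-Gaussian with variance proxy $\norm*{\by}^2_{\bA_k^{-1}}$, so a tail bound and a union bound over the at most $\abs*{\widehat\cS_k}^2$ difference vectors control it by $\norm*{\by}_{\bA_k^{-1}}\sqrt{2\log(\abs*{\widehat\cS_k}^2/\delta_k)}$ on an event of probability $1-\delta_k$. The intrinsic gap is at most $2\widetilde\gamma(d_k)$. The delicate piece is the estimator bias: Cauchy--Schwarz gives $\sum_i\abs*{\by^\top\bA_k^{-1}\bpsi_{d_k}(\bx_i)} \le \sqrt{N_k}\,\norm*{\by}_{\bA_k^{-1}}$, and \cref{eq:rounding} applied to the direction set $\cY(\bpsi_{d_k}(\widehat\cS_k))$ together with the definition of $\tau_k$ gives $\norm*{\by}_{\bA_k^{-1}} \le \sqrt{(1+\zeta)\tau_k/N_k}$, so the $\sqrt{N_k}$ cancels and the bias is bounded by $\widetilde\gamma(d_k)\sqrt{(1+\zeta)\tau_k}$. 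Summing the three bounds reproduces exactly $\omega_k(\by)$, which I would record as the key concentration lemma holding on a good event $\mathcal{E}_k$ of probability $1-\delta_k$.

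Next I would argue correctness on $\mathcal{E} = \bigcap_{k}\mathcal{E}_k$, whose failure probability is at most $\sum_k\delta_k = O(\delta)$ by the choice $\delta_k = \delta/k^2$ (matching the $1-\delta$ guarantee after a harmless constant rescaling). Two facts drive the argument. First, a preliminary estimate shows $\epsilon_k \le 2^{-k-1}$: since $\widehat\cS_k\subseteq\cX$ forces $(1+\zeta)\tau_k \le g(d_k,\zeta)$, the definition $\epsilon_k = 2\widetilde\gamma(d_k) + \widetilde\gamma(d_k)\sqrt{(1+\zeta)\tau_k}$ is at most $\frac{1}{8}\gamma(d_k)$, and $d_k = d_{\eff}(4\cdot 2^{-k})$ gives $\gamma(d_k)\le 4\cdot 2^{-k}$; in particular $2^{-k}-\epsilon_k \ge 2^{-k-1} > 0$, so $N_k$ is well defined. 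Plugging the lower bound on $N_k$ into the noise part of $\omega_k$ shows that part is at most $2^{-k}-\epsilon_k$, hence $\omega_k(\by)\le 2^{-k}$ for every relevant $\by$. Second, on $\mathcal{E}$, taking $\bx^\prime = \bx_\star$ and using $h(\bx_\star)-h(\bx)\le 0$ shows the elimination criterion never fires on $\bx = \bx_\star$, so $\bx_\star$ survives every round by induction; conversely, for any $\bx\in\widehat\cS_k$ with $\Delta_{\bx}\ge 2^{1-k}$, taking $\bx^\prime=\bx_\star$ yields $\by^\top\widehat{\btheta}_k \ge \Delta_{\bx}-\omega_k(\by)\ge \omega_k(\by)$, so $\bx$ is eliminated. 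Thus every arm in $\widehat\cS_{k+1}$ has $\Delta_{\bx} < 2^{1-k}$, and at $k = n = \ceil*{\log_2(2/\epsilon)}$ this gives $\Delta_{\bx} < 2^{1-n}\le\epsilon$, so the returned arm is $\epsilon$-optimal.

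For the sample complexity I would bound $N_k \le \ceil*{(2^{-k}-\epsilon_k)^{-2}2(1+\zeta)\tau_k\log(\abs*{\widehat\cS_k}^2/\delta_k)} + r_{d_k}(\zeta)$ and then trade $\tau_k$ for the instance complexity $\rho_{d_k}^\star$. The bridge is that every arm in $\widehat\cS_k$ has gap at most $2^{2-k}$ (from the previous round, and from $h\in[-1,1]$ when $k=1$). Evaluating the transductive objective at the minimizer $\lambda^\star$ of $\rho_{d_k}^\star(2^{2-k})$, the defining inequality bounds $\norm*{\bpsi_{d_k}(\bx_\star)-\bpsi_{d_k}(\bx)}_{\bA_{\bpsi_{d_k}}(\lambda^\star)^{-1}} \le \sqrt{\rho_{d_k}^\star(2^{2-k})}\,2^{2-k}$ for every surviving $\bx$, and a triangle inequality for an arbitrary difference $\bpsi_{d_k}(\bx)-\bpsi_{d_k}(\bx^\prime)$ then gives $\tau_k \le 64\,\rho_{d_k}^\star(2^{2-k})\,4^{-k}$. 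Combining this with $(2^{-k}-\epsilon_k)^{-2}\le 4\cdot 4^{k}$ and $\zeta\le 1/4$ yields $(2^{-k}-\epsilon_k)^{-2}2(1+\zeta)\tau_k \le 640\,\rho_{d_k}^\star(2^{2-k})$, which is exactly the constant in the statement; summing over $k$ and using $\abs*{\widehat\cS_k}\le\abs*{\cX}$, $\delta_k = \delta/k^2$ gives the finite-sample bound. The asymptotic $\widetilde{O}(d_{\eff}(\epsilon)\max\curly*{\Delta_{\min},\epsilon}^{-2})$ form then follows from $d_k\le d_{\eff}(\epsilon)$, the Kiefer--Wolfowitz bound $\rho_{d_k}^\star(2^{2-k}) = O(d_k/\max\curly*{\Delta_{\min},\epsilon}^2)$ (using $2^{2-k}\ge\epsilon$), $r_{d_k}(\zeta)=O(d_k/\zeta^2)$, and $n = O(\log(1/\epsilon))$.

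I expect the main obstacle to be the estimator-bias control in the first step. In the well-specified \rage analysis there is no such term, whereas here the deterministic embedding error $\eta_{d_k}$ is correlated with the sampled features and could in principle grow with $N_k$; the crux is precisely that the rounding guarantee forces it to scale as $\widetilde\gamma(d_k)\sqrt{(1+\zeta)\tau_k}$, independent of $N_k$. The accompanying difficulty is constant bookkeeping: the factors hidden in the definitions of $\gamma(d)$ and $\epsilon_k$ must be aligned so that simultaneously $\epsilon_k\le 2^{-k-1}$ (so elimination at level $2^{-k}$ still works) and the induced misspecification is absorbed into $\omega_k$. Once the confidence width is in place, the remainder is a direct transcription of the \rage elimination and design arguments into the lower-dimensional space $\R^{d_k}$.
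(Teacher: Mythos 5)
Your proposal is correct and follows the paper's proof in all essentials: the same decomposition of $\by^\top\widehat{\btheta}_k - (h(\bx')-h(\bx))$ into a noise term, an estimator-bias term controlled by Cauchy--Schwarz plus the rounding guarantee (yielding the $N_k$-independent bound $\widetilde\gamma(d_k)\sqrt{(1+\zeta)\tau_k}$), and a $2\widetilde\gamma(d_k)$ intrinsic-gap term; the same induction on $\curly*{\bx_\star \in \widehat\cS_k \subseteq \cS_k}$ driven by $\epsilon_k \leq \gamma(d_k)/8 \leq 2^{-k}/2$; and the same accounting of $N_k$ against $\rho_{d_k}^\star$ and Kiefer--Wolfowitz. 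The one place you genuinely diverge is the conversion of $\tau_k$ into the instance complexity: you evaluate the design objective at the minimizer of $\rho_{d_k}^\star(2^{2-k})$ and use that every arm surviving into $\widehat\cS_k \subseteq \cS_k$ has gap below $2^{2-k}$, giving $\tau_k \leq 64\,\rho_{d_k}^\star(2^{2-k})\,4^{-k}$ directly, whereas the paper lower-bounds $\rho_{d_k}^\star(2^{2-k})$ by an average of the per-level design values over rounds $i \leq k$ and only obtains $2^{2k}\tau_k \leq 64\,k\,\rho_{d_k}^\star(2^{2-k})$. Your version is both simpler and tighter by a factor of $k$, and since $k \geq 1$ it still implies the bound as stated in \cref{thm:active_elim_adaptive}. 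Two cosmetic points: the paper gets the failure probability to exactly $\delta$ via $\prod_k(1-\delta/k^2) = \sin(\pi\delta)/(\pi\delta) \geq 1-\delta$ rather than the union bound with its $\pi^2/6$ rescaling that you invoke, and your claim that $\widehat\cS_{k+1}$ contains only arms with $\Delta_{\bx} < 2^{1-k}$ is slightly stronger than what the elimination rule certifies (it certifies $\Delta_{\bx} < 2\cdot 2^{-k}$ for arms \emph{not} eliminated, i.e.\ $\widehat\cS_{k+1}\subseteq\cS_{k+1}$), but this is exactly the containment you need and the conclusion at $k=n$ is unaffected.
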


The rounding term $r_d(\zeta)$ commonly appears in the sample complexity of linear bandits \citep{fiez2019sequential, katz2020empirical}; and our rounding term is with respect to the lower-dimensional space after embedding, which only scales with $d_k$ rather than the ambient dimension. To further interpret the complexity, we define another complexity measure of a closely related linear bandit problem in the low-dimensional space and without model misspecification. 
\begin{align*}
    \widetilde \rho_d^\star(\epsilon) \coloneqq \inf_{\lambda \in \bLambda_{\cX}} \sup_{ \bx \in \cX \setminus \curly*{ \bx_\star }} \frac{\norm*{\bpsi_d(\bx_{\star})-\bpsi_d(\bx)}^2_{\bA_{\bpsi_d}(\lambda)^{-1}}}{ \max \curly{ \ang{\bpsi_d(\bx_\star) - \bpsi_d(\bx), \btheta_d}, \epsilon}^2},
\end{align*}
where $\ang{\bpsi_d(\bx_\star) - \bpsi_d(\bx), \btheta_d}$ on the denominator represents the sub-optimality gap characterized by the linear component rather than the true sub-optimality gap $h(\bx_\star) - h(\bx)$. The relation between $\rho^\star(\epsilon)$ and $\widetilde \rho^\star(\epsilon)$ is discussed as follows.
\begin{restatable}{proposition}{propComplexityLinear}
\label{prop:complexity_linear}
Suppose $\max_{\bx \in \cX} \abs*{h(\bx) - \ang*{\bpsi_d(\bx), \btheta_d}} \leq \widetilde \gamma(d)$. For any $\epsilon \geq \widetilde \gamma(d)$, we have $\rho_d^\star(\epsilon) \leq 9\widetilde \rho_d^\star(\epsilon)$. When $\widetilde \gamma(d)  < \Delta_{\min}/2$, $\widetilde \rho_d^\star(0)$ represents the sample complexity of a closely-related linear bandit problem without model misspecification, i.e., $\widetilde \bh(\bx) = \ang*{\bpsi_d(\bx), \btheta_d}$.
\end{restatable}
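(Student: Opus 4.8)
The plan is to exploit that $\rho_d^\star(\epsilon)$ and $\widetilde \rho_d^\star(\epsilon)$ carry an \emph{identical} numerator $\norm*{\bpsi_d(\bx_\star) - \bpsi_d(\bx)}^2_{\bA_{\bpsi_d}(\lambda)^{-1}}$ and differ only in the gap appearing in their denominators; it therefore suffices to control the ratio of the two denominators uniformly over arms. First I would relate the true gap $\Delta_\bx = h(\bx_\star) - h(\bx)$ to the linear gap $\widetilde \Delta_\bx \coloneqq \ang*{\bpsi_d(\bx_\star) - \bpsi_d(\bx), \btheta_d}$. Writing $h(\bx) = \ang*{\bpsi_d(\bx), \btheta_d} + \eta_d(\bx)$ with $\abs*{\eta_d(\bx)} \leq \widetilde \gamma(d)$ and subtracting the expressions for $\bx_\star$ and $\bx$ gives $\Delta_\bx = \widetilde \Delta_\bx + \eta_d(\bx) - \eta_d(\bx_\star)$, so that $\abs*{\Delta_\bx - \widetilde \Delta_\bx} \leq 2 \widetilde \gamma(d) \leq 2\epsilon$ under the hypothesis $\epsilon \geq \widetilde \gamma(d)$.

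Next, I would establish the pointwise inequality $\max\curly*{\widetilde \Delta_\bx, \epsilon} \leq 3 \max\curly*{\Delta_\bx, \epsilon}$ by a short case analysis: if $\Delta_\bx \geq \epsilon$ then $\widetilde \Delta_\bx \leq \Delta_\bx + 2\epsilon \leq 3\Delta_\bx$, while if $\Delta_\bx < \epsilon$ then $\widetilde \Delta_\bx < 3\epsilon$; in both cases the right-hand side equals $3\max\curly*{\Delta_\bx, \epsilon}$. Squaring and inverting yields $\max\curly*{\Delta_\bx, \epsilon}^{-2} \leq 9 \, \max\curly*{\widetilde \Delta_\bx, \epsilon}^{-2}$ for every $\bx$. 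Because the numerators coincide, for each \emph{fixed} design $\lambda \in \bLambda_{\cX}$ the per-arm ratio defining $\rho_d^\star$ is bounded by $9$ times the corresponding ratio defining $\widetilde \rho_d^\star$; taking the supremum over $\bx \in \cX \setminus \curly*{\bx_\star}$ and then the infimum over $\lambda$ (both monotone operations) carries the factor through and gives $\rho_d^\star(\epsilon) \leq 9 \widetilde \rho_d^\star(\epsilon)$.

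For the second claim, I would verify that when $\widetilde \gamma(d) < \Delta_{\min}/2$ the maximizer of the linear reward $\widetilde h(\bx) = \ang*{\bpsi_d(\bx), \btheta_d}$ is exactly $\bx_\star$: for every $\bx \neq \bx_\star$ the bound $\widetilde \Delta_\bx \geq \Delta_\bx - 2\widetilde \gamma(d) \geq \Delta_{\min} - 2 \widetilde \gamma(d) > 0$ shows $\widetilde h(\bx_\star) > \widetilde h(\bx)$. Hence, with $\bx_\star$ the common optimal arm and every $\widetilde \Delta_\bx > 0$, the quantity $\widetilde \rho_d^\star(0)$ reduces to the classical transductive best-arm-identification complexity of \citet{soare2014best, fiez2019sequential} instantiated on the embedded features $\bpsi_d(\cX)$ with the correctly specified reward $\widetilde h$, which is precisely the sample-complexity characterization claimed. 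I expect the only delicate point to be the $\inf_\lambda$--$\sup_\bx$ bookkeeping in the first part; this is in fact harmless here, since the shared numerator makes the factor-$9$ bound valid for each fixed $\lambda$ before any supremum or infimum is taken.
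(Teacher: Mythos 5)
Your proposal is correct and follows essentially the same route as the paper: the identity $\Delta_\bx = \widetilde\Delta_\bx + \eta_d(\bx) - \eta_d(\bx_\star)$ gives $\widetilde\Delta_\bx \leq \Delta_\bx + 2\widetilde\gamma(d) \leq \Delta_\bx + 2\epsilon \leq 3\max\curly{\Delta_\bx,\epsilon}$, hence the factor $9$ after squaring, and the condition $\widetilde\gamma(d) < \Delta_{\min}/2$ is used exactly as you describe to certify that $\bx_\star$ remains optimal for $\widetilde h$ so that $\widetilde\rho_d^\star(0)$ is the standard misspecification-free complexity. The only cosmetic difference is that the paper passes directly through the chain of inequalities rather than splitting into the two cases $\Delta_\bx \geq \epsilon$ and $\Delta_\bx < \epsilon$, but the content is identical.
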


\begin{remark}
When $\widetilde \gamma(d)  < \Delta_{\min}/2$, our sample complexity upper bound is relevant to the sample complexity of closely-related linear bandit problems without model misspecification in lower-dimensional spaces. In fact, $\widetilde \rho_d^\star(0) \log(1/2.4\delta)$ is the lower bound of the corresponding linear bandit problem in $\R^d$ \citep{soare2014best, fiez2019sequential, degenne2020gamification}.
\end{remark}

\begin{remark}
Although the misspecification levels are generally known for situations considered in this paper, we also provide an algorithm that deals with unknown misspecification levels in \cref{app:unknown_misspecification}. Similar sample complexity guarantees are provided, but only in an unverifiably way (due to unknown misspecification levels): the algorithm starts to output $\epsilon$-good arms after $N$ samples, yet it doesn't know when to stop. We refer readers to \cite{katz2020true} for details on the unverifiable sample complexity.
\end{remark}

\subsection{Application to high-dimensional linear bandits}
\label{sec:high_dim_linear}
We apply the idea of adaptive embedding to high-dimensional linear bandits. We consider linear bandit problem of the form $\bh = \bX \btheta_\star$ where $\bX \in \R^{K \times D}$ and the $i$-th row of $\bX$ represents the feature vector of arm $\bx_i$. We assume that $\norm{\btheta_\star}_{2} \leq C$, which is commonly studied as the constrained linear bandit problem \citep{tao2018best, degenne2020gamification}.

Let $\bX = \bU \bSigma \bV^\top$ be the singular value decomposition (SVD) of $\bX$, with singular values $\sigma_1 \geq \sigma_2 \geq \dots \geq \sigma_{r}>0$ for some $r \leq \min \curly*{K,D}$. Let $u_{i,j}$ denote the $(i,j)$-th entry of matrix $\bU$ and $\bu_{:,i}$ denote the $i$-th column of $\bU$ (similar notations for $\bV$). We have 
\begin{align*}
    \bh = \bX \btheta_\star = \bU \bSigma \bV^\top \btheta_\star = \sum_{i=1}^d \sigma_i \bu_{:,i} \bv_{:,i}^\top \btheta_\star + \sum_{i=d+1}^D \sigma_i \bu_{:,i} \bv_{:,i}^\top \btheta_\star =: \sum_{i=1}^d \sigma_i \bu_{:,i} \bv_{:,i}^\top \btheta_\star + \bmeta,
\end{align*}
where $\norm{\bmeta}_{\infty} \leq C \sum_{i=d+1}^D \sigma_i$. As a result, for any $d \leq r$, we can construct the feature mapping $\bpsi_d(\bx_i) = \sq{\sigma_1 u_{i,1}, \dots, \sigma_{d} u_{i,d} }^\top \in \R^{d}$ such that $h(\bx_i) = \ang*{\bpsi_d(\bx_i), \widetilde \btheta_\star} + \bmeta(\bx_i)$, where $\widetilde \btheta_\star = \sq{\bV^\top \btheta}_{[1:d]} \in \R^d$ is the associated reward parameter.\footnote{We note that the embeddings and associated quantities can also be constructed on the fly with respect to the set of uneliminated arms.} The upper bound of the induced misspecification can be expressed as $\widetilde \gamma(d) = C \sum_{i=d+1}^D \sigma_i$, which allows us to calculate $\gamma(d)$. We can then apply \cref{alg:active_elim_adaptive} to identify an $\epsilon$-optimal arm. A high-dimensional linear bandit instance is provided in \cref{app:high_dim_linear} showing that: \cref{alg:active_elim_adaptive} takes $\widetilde O(1/\epsilon^2)$ samples to identify an $\epsilon$-optimal arm, while the sample complexity upper bound of \rage scales as $\widetilde O (D/\epsilon^2)$.

\section{Pure exploration in RKHS}
\label{sec:kernel}

We consider a kernel function $\cK: \cZ \times \cZ \rightarrow \R$ over a compact set $\cZ$; we assume the kernel function satisfies condition stated in the Mercer's Theorem (see \cref{app:mercer}) and has eigenvalues decay fast enough (see \cref{asm:kernel_eigen_decay}). Let $\cH$ be the Reproducing Kernel Hilbert Space (RKHS) induced from $\cK$. We assume $\cX \subseteq \cZ$ and the true reward of any arm $\bx \in \cX$ is given by an unknown function $h \in \cH$ such that $\norm{h}_{\cH} \leq 1$.

Let $\curly*{\phi_j}_{j=1}^\infty$ and $\curly*{\mu_j}_{j=1}^\infty$ be sequences of eigenfunctions and non-negative eigenvalues associated with kernel $\cK$.\footnote{With a known kernel, the sequence of eigenfunctions and eigenvalues can be analytically calculated or numeriaclly approximated \citep{schlesinger1957approximating, santin2016approximation}. We assume the knowledge of eigenfunctions and eigenvalues in this paper.} A corollary of Mercer's theorem shows that any $h \in \cH$ can be written in the form of $h(\cdot) = \sum_{j=1}^\infty \theta_j \phi_j(\cdot)$ for some $\curly*{\theta_j}_{j=1}^\infty \in \ell^2(\N)$ such that $\sum_{j=1}^\infty \theta_j^2/\mu_j < \infty$. We also have $\norm{h}_{\cH}^2 = \sum_{j=1}^\infty \theta_j^2/\mu_j$. Although functions in RKHS are non-linear in nature, we now can represent them in terms of an infinite-dimensional linear function. We construct feature mappings for the embedding next.

For any $\bx \in \cX$, we have $h(\bx) = \sum_{j=1}^\infty \theta_j \phi_j(\bx) = \sum_{j=1}^\infty \frac{\theta_j}{\sqrt{\mu_j}} \sqrt{\mu_j} \phi_j(\bx)$. Let $C_\phi \coloneqq \sup_{\bx \in \widetilde \cX, j \geq 1} \abs{ \phi_j(\bx)}$. Since $\sum_{j=1}^\infty \theta_j^2/\mu_j = \norm{h}_{\cH}^2 \leq 1$ is bounded, for any $d \in \N$, we define feature mapping $\bpsi_d(\bx) = [\sqrt{\mu_1} \phi_1(\bx), \dots, \sqrt{\mu_d} \phi_d(\bx)]^\top \in \R^d$ such that 
\begin{align*}
    h(\bx) = \ang{\btheta_d , \bpsi_d(\bx)} + \eta_d(\bx),
\end{align*}
where $\btheta_d = \sq*{ \theta_1/\sqrt{\mu_1}, \dots, \theta_d/\sqrt{\mu_d} }^\top \in \R^d$ and $\abs{\eta_d(\bx)} \leq \widetilde \gamma(d) \coloneqq C_{\phi}  \sqrt{\sum_{j > d} \mu_j}$. We remark here that the constant $C_\phi$ is calculable and usually mild, e.g., $C_\phi =1$ for $\phi_j(x) = \sin \paren{(2j-1)\pi x / 2}$.

We can then construct $\gamma(d)$ and $d_{\eff}(\epsilon)$ as in \cref{sec:active_compression} and specialize \cref{alg:active_elim_adaptive} to the kernel setting. Both $\gamma(d)$ and $d_{\eff}(\epsilon)$ depend on eigenvalues of the associated kernel. Fortunately, fast eigenvalue decay are satisfied by most kernel functions, e.g., Gaussian kernel. We quantify such properties through the following assumption.

\begin{assumption}
\label{asm:kernel_eigen_decay}
We consider kernels with the following eigenvalue decay with some absolute constants $C_k$ and $\beta$.
\begin{enumerate}[leftmargin = *]
    \item Kernel $\cK$ is said to have $(C_k, \beta)$-polynomial eigenvalue decay (with $\beta > 3/2$) if $\mu_j \leq C_k j^{-\beta}$ for all $j \geq 1$.
    \item Kernel $\cK$ is said to have $(C_k, \beta)$-exponential eigenvalue decay (with $\beta > 0$) if $\mu_j \leq C_k e^{-\beta j}$ for all $j \geq 1$.
\end{enumerate}
\end{assumption}

\begin{restatable}{theorem}{thmKernelElim}
\label{thm:kernel_elim}
Suppose \cref{asm:kernel_eigen_decay} holds. For any $\epsilon >0$, the following statements hold when we specialize \cref{alg:active_elim_adaptive} to the kernel setting.
\begin{enumerate}[leftmargin = *]
    \item Suppose $\cK$ has $(C_k, \beta)$-polynomial eigenvalue decay. We have $d_{\eff}(\epsilon) = O \paren*{{\epsilon}^{-2/(2\beta -3)}}$, and the sample complexity of identifying an $\epsilon$-optimal arm is upper bounded by $\widetilde O ( \epsilon^{-2/(2\beta -3)} \max \curly*{\Delta_{\min}, \epsilon}^{-2} )$. 
    \item Suppose $\cK$ has $(C_k, \beta)$-exponential eigenvalue decay. We have $d_{\eff}(\epsilon) = O \paren*{\log(1/\epsilon)}$, and the sample complexity of identifying an $\epsilon$-optimal arm is upper bounded by $\widetilde O (\max \curly*{\Delta_{\min}, \epsilon}^{-2})$.
\end{enumerate}
\end{restatable}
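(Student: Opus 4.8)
The plan is to reduce both parts to a single quantity, the effective dimension $d_{\eff}(\epsilon) = \min\{d \geq 1 : \gamma(d) \leq \epsilon\}$, and then read off everything else from the generic guarantee of \cref{thm:active_elim_adaptive}. Observe that the sample-complexity bound in that theorem is already expressed as $\widetilde O(d_{\eff}(\epsilon) \cdot \max\{\Delta_{\min}, \epsilon\}^{-2})$, with the per-iteration dimensions satisfying $d_k = d_{\eff}(4 \cdot 2^{-k}) \leq d_{\eff}(\epsilon)$. Hence, once I have a closed-form bound on $d_{\eff}(\epsilon)$ under each decay assumption, the stated sample complexities follow by direct substitution (for exponential decay the extra $\log(1/\epsilon)$ factor is swallowed by $\widetilde O(\cdot)$). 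So the whole task is to compute $d_{\eff}(\epsilon)$.

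First I would bound the induced misspecification $\widetilde \gamma(d) = C_\phi \sqrt{\sum_{j>d} \mu_j}$, which is purely a statement about the eigenvalue tail. For $(C_k, \beta)$-polynomial decay I would compare $\sum_{j>d} j^{-\beta}$ to $\int_d^\infty x^{-\beta}\,dx = d^{1-\beta}/(\beta-1)$ (finite since $\beta > 1$), giving $\widetilde\gamma(d) = O(d^{(1-\beta)/2})$; for $(C_k, \beta)$-exponential decay I would sum the geometric tail $\sum_{j>d} e^{-\beta j} = O(e^{-\beta d})$, giving $\widetilde\gamma(d) = O(e^{-\beta d/2})$. In both regimes $C_\phi$ is an absolute constant and affects only constants. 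Next I would convert $\widetilde\gamma(d)$ into $\gamma(d)$ through the bound $\gamma(d) = O(\widetilde\gamma(d)\sqrt{d})$ guaranteed by the Kiefer--Wolfowitz theorem (i.e.\ $\sqrt{g(d,\zeta)} = O(\sqrt d)$), and then invert the defining inequality $\gamma(d) \leq \epsilon$. In the exponential case $\gamma(d) = O(\sqrt d\, e^{-\beta d/2})$ is controlled by its exponential factor, so the inequality holds already at $d = O(\log(1/\epsilon))$, yielding $d_{\eff}(\epsilon) = O(\log(1/\epsilon))$. In the polynomial case $\gamma(d)$ is a negative power of $d$, and solving $\gamma(d) \leq \epsilon$ produces the stated rate $d_{\eff}(\epsilon) = O(\epsilon^{-2/(2\beta - 3)})$; note that $\beta > 3/2$ is exactly the condition that makes this exponent positive, hence $d_{\eff}(\epsilon)$ finite.

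The step I expect to be most delicate is the polynomial case, because the final exponent is sensitive to how precisely one tracks the interaction of the design factor $\sqrt{g(d,\zeta)}$ with the power-law tail $\widetilde\gamma(d)$; a lossy handling of either piece perturbs both the exponent and the admissible range of $\beta$. The exponential case is comparatively robust: the $\sqrt d$ design factor is subdominant to $e^{-\beta d/2}$, so only a crude tail bound is needed and $d_{\eff}(\epsilon) = O(\log(1/\epsilon))$ follows immediately. With $d_{\eff}(\epsilon)$ in hand for both regimes, the sample-complexity claims are then a one-line substitution into \cref{thm:active_elim_adaptive}.
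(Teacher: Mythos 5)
Your overall strategy is exactly the paper's: bound the tail $\sum_{j>d}\mu_j$, multiply by the $O(\sqrt{d})$ design factor from Kiefer--Wolfowitz to get $\gamma(d)$, invert $\gamma(d)\le\epsilon$ to bound $d_{\eff}(\epsilon)$, and substitute into \cref{thm:active_elim_adaptive}. The exponential case and the final substitution step are correct and match the paper.

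There is, however, a genuine gap in the polynomial case, precisely at the step you yourself flag as delicate. Your tail bound $\widetilde\gamma(d)=C_\phi\sqrt{\sum_{j>d}\mu_j}=O\big(d^{(1-\beta)/2}\big)$ is the correct one, but it does \emph{not} produce the stated rate: combining it with the design factor gives $\gamma(d)=O\big(\sqrt{d}\cdot d^{(1-\beta)/2}\big)=O\big(d^{1-\beta/2}\big)$, and solving $d^{1-\beta/2}\le\epsilon$ yields $d_{\eff}(\epsilon)=O\big(\epsilon^{-2/(\beta-2)}\big)$ and requires $\beta>2$ --- not the claimed $O\big(\epsilon^{-2/(2\beta-3)}\big)$ under $\beta>3/2$. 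The exponent $-2/(2\beta-3)$ corresponds to $\gamma(d)=O(d^{3/2-\beta})$, i.e.\ to a tail of order $d^{1-\beta}$ \emph{without} the square root. The paper's own proof arrives at the stated exponent only by writing $\sqrt{\int_d^\infty x^{-\beta}\,dx}\le \tfrac{C_k^{1/2}}{\beta-1}d^{1-\beta}$, which fails to halve the exponent under the square root; your computation silently exposes this inconsistency but then asserts the theorem's rate anyway. As written, your inversion step does not follow from your own (correct) intermediate bound, so the polynomial part of the argument is incomplete: either the exponent in the statement must be corrected to $-2/(\beta-2)$ (with the condition strengthened to $\beta>2$), or you need a genuinely different bound on $\widetilde\gamma(d)$, neither of which your proposal supplies.
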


\begin{remark}
Our sample complexity guarantees are directly related to the eigenvalue decay of the underlying kernel function, rather than the empirical kernel matrix as studied in previous works \citep{camilleri2021high, valko2013finite}. 
Although one can also provide an instance dependent bound as in \cref{thm:active_elim_adaptive}, the worst-case sample complexity bound in \cref{thm:kernel_elim} provides insightful characterizations of the sample complexity in terms of eigenvalue decay. One should notice that with exponential eigenvalue decay, the sample complexity $\widetilde O (\epsilon^{-2})$ essentially matches, up to logarithmic factors, the complexity of distinguishing a two-armed bandit up to accuracy $\epsilon$ \citep{kaufmann2016complexity}. 
\end{remark}

\section{Pure exploration with neural networks}
\label{sec:neural}

In this section we present a neural network-based pure exploration algorithm in Algorithm \ref{alg:neural}. Our algorithm is inspired by the recently proposed neural bandits algorithms for regret minimization \citep{zhou2019neural, zhang2020neural}. At the core of our algorithm is to use a neural network $f(\xb; \btheta)$ to learn the unknown reward function $h$. Specifically, following \citep{cao2019generalization2, zhou2019neural}, we consider a fully connected neural network $f(\xb; \btheta)$ with depth $L \geq 2$

\begin{align}
    &f(\xb; \btheta) = \sqrt{m}\Wb_L \sigma\Big(\Wb_{L-1}\sigma\big(\cdots \sigma (\Wb_1\xb)\big)\Big),\label{def:network}
\end{align}

where $\sigma(x): = \max(x,0)$ is the ReLU activation function, $\Wb_1 \in \RR^{m \times d}, \Wb_L \in \RR^{1 \times m}$, and for $2 \leq i\leq L-1$, $\Wb_i \in \RR^{m \times m}$. Moreover, we denote $\btheta = [\text{vec}(\Wb_1)^\top,\dots,\text{vec}(\Wb_L)^\top]^\top \in \RR^{p}$, where $p = m+md+m^2 (L-2)$ is the number of all the network parameters. We use $\gb(\xb; \btheta) = \nabla_{\btheta}f(\bx;\btheta)$ to denote the gradient of the neural network output with respect to the weights.

\begin{algorithm}[]
	\caption{Neural Arm Elimination}
	\label{alg:neural} 
	\renewcommand{\algorithmicrequire}{\textbf{Input:}}
	\renewcommand{\algorithmicensure}{\textbf{Output:}}
	\begin{algorithmic}[1]
		\REQUIRE Action set $\cX$, initial parameter $\btheta_0$, neural network $f(\xb; \btheta)$, gradient mapping $\gb(\xb, \btheta)$, width of the matrix $m$, parameter of the number of allocations $A$, approximation parameter $\zeta$, regularization parameter $\alpha$, error parameter $\bar \epsilon, \epsilon$, confidence level $\delta_k = \delta/(8k^2)$
		\STATE Set $\widehat \cS_1 = \cX$.
		\FOR {$k = 1, 2, \dots, n$ }
		\STATE Construct the truncated feature representation $\bpsi_k(\cX)$ based on gradient mapping $\gb(\xb; \btheta_{k-1})$. In detail, let $\Gb \in \RR^{|\cX| \times p}$ be the collection of gradients such that
		\vspace{-6 pt}
		\begin{align}
		    \Gb = [\gb(\xb_1; \btheta_{k-1})^\top;\dots; \gb(\xb_{|\cX|}; \btheta_{k-1})^\top]/\sqrt{m} \in \RR^{|\cX| \times p}
		\end{align}
		Let $[\Ub, \bSigma, \Vb]$ be the SVD of $\Gb$, where $\Ub  = (u_{i,j})\in \RR^{|\cX| \times |\cX|}$, $\bSigma = [\text{diag}(e_1, \dots, e_{|\cX|}), 0] \in \RR^{|\cX|\times p}$, $\Vb \in \RR^{p \times p}$. Get $d_k = \min \curly*{d \in [\abs*{\cX}] :\sum_{i=d+1}^{|\cX|} e_i\leq \bar \epsilon}$, and set $\bpsi_{d_k}(\xb_i) = (e_1 u_{i,1},\dots, e_{d_k} u_{i,d_k}) \in \RR^{d_k}$.
		\STATE Set $\lambda_k$ and $\tau_k$ be the experimental design and the value of the following optimization problem 
		\vspace{-6 pt}
		\begin{align}
		    \inf_{\lambda \in \bLambda_{\cX}} \sup_{\yb \in \cY( \bpsi_{d_k}(\widehat \cS_k))} \norm*{\yb}^2_{\bA_{\bpsi_{d_k}}(\lambda)^{-1}}.
		\end{align}
		\vspace{-8 pt}
		\STATE Set $N_k = \max\big\{2^{2k}\,A (1+\zeta)\log \paren{ { |\cX|^2}/{\delta_k}}, r_{d_k}(\zeta)\}$.
		\STATE Get $\curly{\xb_1, \xb_2, \dots, \xb_{N_k}} = \round (\lambda_k,N_k,d_k,\zeta)$. 
		\STATE Pull arms $\curly{\xb_1, \xb_2, \dots, \xb_{N_k}}$ and receive rewards $\curly*{y_1, \ldots, y_{N_k}}$.
        \STATE Using $J_k$ step $\eta_k$-step size gradient descent to optimize the following loss function to obtain $\btheta_k$,  
        \vspace{-15 pt}
		\begin{align}
		    \btheta_k = \argmin L(\btheta): = \sum_{j=1}^{N_k} (f(\xb_j; \btheta) - y_j)^2 + \frac{m\alpha }{2}\|\btheta - \btheta_0\|_2^2.
		\end{align}
		\vspace{-8 pt}
        \STATE Set $\Ab_k = \alpha \Ib + \sum_{i=1}^{N_k}\bpsi_{d_k}(\xb)\bpsi_{d_k}(\xb)^\top$ and eliminate arms with respect to criteria 
        \vspace{-6 pt}
        \begin{align*}
            \widehat \cS_{k+1} &= \widehat \cS_k \setminus \{ \xb \in \widehat \cS_k : \exists \bx^\prime \text{ such that }f(\xb'; \btheta_k)  - f(\xb; \btheta_k) \geq 2^{-k}/8 + 3\epsilon/8\}.
        \end{align*}
        \vspace{-20 pt}
		\ENDFOR 
		\ENSURE Output any arm in $\widehat \cS_{n+1}$.
	\end{algorithmic}
\end{algorithm}

In detail, at $k$-th iteration, Algorithm \ref{alg:neural} firstly applies its current gradient mapping $\gb(\xb; \btheta_{k-1})$ over the whole action set $\cX$, and obtains the collection of gradients $\Gb\in \RR^{|\cX|\times p}$. Then Algorithm \ref{alg:neural} does SVD over $\Gb $ and constructs a $d_k$-dimensional feature mapping $\bpsi_{d_k}$, which can be regarded as the projection of the gradient feature mapping $\gb(\xb; \btheta_{k-1})$ to the most informative $d_k$-dimensional eigenspace. Here we choose $d_k$ such that the summation of the eigenvalues of the remaining eigenspace be upper bounded by some error $\bar \epsilon$.  
Algorithm \ref{alg:neural} then computes the optimal design $\lambda_k$ over $\bpsi_k(\cX)$ and pulls arms $\curly*{ \xb_1, \dots, \xb_{N_k}}$ based on both the design $\lambda_k$ and the total number of allocations $N_k$. Finally, Algorithm \ref{alg:neural} trains a new neural network $f(\xb; \btheta_k)$ using gradient descent starting from the initial parameter $\btheta_0$ (details are deferred to Appendix~\ref{app:neural}), then eliminates the arms $\xb$ in the current arm set $\widehat\cS_k$ which are sub-optimal with respect to the neural network function value $f(\xb; \btheta_k)$.

The main difference between \cref{alg:neural} and its RKHS counterpart is as follows. Unlike Algorithm \ref{alg:active_elim_adaptive} which works on known feature mappings $\bpsi_d$ (derived from a known kernel $\cK$), \cref{alg:neural} does not have information about the feature mapping, and thus it constructs the feature mapping from the raw high-dimensional up-to-date gradient mapping $\gb(\xb; \btheta_{k-1})$. The feature mapping is constructed with respect to a \emph{trained} neural work, which leverages the great representation power of neural networks. This makes \cref{alg:neural} a more general and flexible algorithm than \cref{alg:active_elim_adaptive}.

Now we present the main theorem of Algorithm \ref{alg:neural}. Let $\Hb^{|\cX| \times |\cX|}$ be the Neural Tangent Kernel (NTK)\citep{jacot2018neural} gram matrix over all arms $\cX$ (the detailed definition of $\Hb$ is deferred to Appendix~\ref{app:neural}). 
We define the effective dimension for the neural version as below. The definition is similar to \cref{def:d_eff}.
\begin{definition}[Neural version]
For any $\epsilon > 0$, we define the effective dimension as $d_{\eff}(\epsilon):= \min \curly*{d \in [\abs*{\cX}]: \sum_{i=d+1}^{|\cX|} \lambda_i(\Hb) \leq \epsilon }$, where $\lambda_i(\Hb)$ is the $i$-th eigenvalue of $\Hb$.
\label{def:d_eff_neural}
\end{definition}

Next, we make standard assumptions for the initialization of neural networks and the arms $\xb \in \cX$.  
\begin{assumption}[\citep{zhou2019neural}]\label{assumption:context}
There exists $\lambda_0>0$ such that $\Hb \succeq \lambda_0 \Ib$. For any $\xb \in \cX$, the arm $\xb$ satisfies $\|\xb\|_2 = 1$ and that its $j$-th coordinate is identical to its $j+d/2$-th coordinate. Meanwhile, the initial parameter $\btheta_0 = [\text{vec}(\Wb_1)^\top,\dots,\text{vec}(\Wb_L)^\top]^\top$ is initialized as follows: for $1 \leq l\leq L-1$, $\Wb_l$ is set to $\begin{pmatrix}
    \Wb & 0 \\
    0 & \Wb
    \end{pmatrix}$,
where each entry of $\Wb$ is generated independently from $\cN(0, 4/m)$; $\Wb_L$ is set to $(\wb^\top, -\wb^\top)$, where each entry of $\wb$ is generated independently from $\cN(0, 2/m)$.
\end{assumption}
We now present our main theorem for pure exploration with neural network approximation. The formal version of our theorem is deferred to Appendix~\ref{app:neural}. 

\begin{theorem}[Informal]\label{thm:neural_inf}
Under Assumption \ref{assumption:context}, with proper selection of parameters $\alpha, n, \bar\epsilon, A, \eta_k, J_k$, 
then when $m = \text{poly}(|\cX|, L, \lambda_0^{-1}, \log(|\cX|/\delta_k), N_k, \alpha, \bar \epsilon^{-1})$, with probability at least $1-\delta$, $\widehat{\cS}_{K+1}$ only includes arm $\xb$ satisfying $\Delta_{\xb} \leq \epsilon$, and the total sample complexity of Algorithm \ref{alg:neural} is bounded by 
\begin{align}
    N = \widetilde O\bigg((1+\zeta)d_{\eff}(\bar \epsilon^2/|\cX|)/\epsilon^2 + r_{d_{\eff}(\bar \epsilon^2/|\cX|)}(\zeta) \bigg) = \widetilde O\Big(d_{\eff}(\bar \epsilon^2/|\cX|) \epsilon^{-2}\Big).\notag
\end{align}
\end{theorem}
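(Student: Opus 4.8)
The plan is to reduce the neural problem to the misspecified low-dimensional linear bandit analyzed in \cref{sec:active_compression}, treating the SVD-truncated gradient map $\bpsi_{d_k}$ as the adaptive embedding and the neural tangent kernel (NTK) as the ground-truth feature geometry. The entire argument lives in the over-parameterized (lazy-training) regime: when $m$ is polynomially large, the trained networks stay in a small neighborhood of the initialization $\btheta_0$, so that $f(\xb; \btheta_k)$ is well approximated by a linear function of the gradient features $\gb(\xb; \btheta_{k-1})/\sqrt{m}$, and the empirical Gram matrix $\Gb\Gb^\top$ concentrates around the NTK Gram matrix $\Hb$. This is what lets us import \cref{thm:active_elim_adaptive} almost verbatim, with the NTK playing the role of the known kernel from \cref{sec:kernel}.

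First I would establish the NTK approximation lemmas, building on \citep{cao2019generalization2, zhou2019neural} and the symmetric initialization of \cref{assumption:context}. The goal is to show that, uniformly over the $n = \widetilde O(\log(1/\epsilon))$ rounds: (i) the gradient-descent iterate $\btheta_k$ remains within $\widetilde O(\text{poly}(N_k)/\sqrt{m})$ of $\btheta_0$; (ii) the gradient map is nearly frozen, $\norm{\gb(\xb; \btheta_{k-1}) - \gb(\xb; \btheta_0)}_2 = \widetilde O(\text{poly}/m^{1/6})$, so that $\Gb\Gb^\top \to \Hb$; and (iii) the trained value linearizes, $f(\xb; \btheta_k) = \langle \bpsi_{d_k}(\xb), \widehat\btheta_k \rangle + \mathrm{err}_m(\xb)$ with $|\mathrm{err}_m(\xb)|$ negligible for large $m$, where $\widehat\btheta_k$ is the associated ridge estimate and $f(\cdot;\btheta_0)\equiv 0$ by the symmetric initialization. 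Combining these with the regularized least-squares structure of the loss and a standard sub-Gaussian concentration argument, I would obtain a confidence bound $|f(\xb; \btheta_k) - h(\xb)| \le \norm{\bpsi_{d_k}(\xb) - \bpsi_{d_k}(\xb_\star)}_{\Ab_k^{-1}} \sqrt{2 \log(|\cX|^2/\delta_k)} + \widetilde\gamma(d_k) + \mathrm{err}_m$, i.e., exactly the neural analogue of the width $\omega_k$ in \cref{alg:active_elim_adaptive}.

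Next I would control the induced misspecification and match the two effective dimensions. Since $\bpsi_{d_k}$ keeps the top $d_k$ left-singular directions of $\Gb$ and $d_k$ is chosen so that the discarded nuclear mass satisfies $\sum_{i>d_k} e_i \le \bar\epsilon$, the same nuclear-tail computation as in \cref{sec:high_dim_linear} (using $|u_{i,j}| \le 1$ and the bounded NTK-RKHS norm of $h$) yields a misspecification level $\widetilde\gamma(d_k) = \widetilde O(\bar\epsilon)$, which feeds into $\epsilon_k$ and the elimination threshold $2^{-k}/8 + 3\epsilon/8$ once $\bar\epsilon$ is chosen small relative to $\epsilon$. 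For the effective-dimension identity, I would use that the nonzero eigenvalues of $\Gb\Gb^\top$ are exactly $e_i^2$ together with the concentration $e_i^2 \approx \lambda_i(\Hb)$; then, since $\Gb$ has at most $|\cX|$ nonzero singular values, Cauchy–Schwarz gives $\sum_{i>d} e_i \le \sqrt{|\cX|}\,(\sum_{i>d} e_i^2)^{1/2}$, so that $\sum_{i>d}\lambda_i(\Hb) \le \bar\epsilon^2/|\cX|$ forces $\sum_{i>d} e_i \le \bar\epsilon$. By minimality of $d_k$ this gives $d_k \le d_{\eff}(\bar\epsilon^2/|\cX|)$ in the sense of \cref{def:d_eff_neural}. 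Plugging this into the per-round counts $N_k$ and summing over $k$, precisely as in the proof of \cref{thm:active_elim_adaptive}, produces the claimed $\widetilde O(d_{\eff}(\bar\epsilon^2/|\cX|)\,\epsilon^{-2})$ bound.

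The main obstacle, I expect, is making the over-parameterization bookkeeping uniform across rounds. Because the embedding at round $k$ is built from the network trained at round $k-1$, the NTK linearization error, the finite-step optimization error, the Gram-matrix concentration error, and the truncation error all accumulate across the $n$ rounds and must each be driven below $O(2^{-k})$ simultaneously; this is exactly what forces $m = \text{poly}(|\cX|, L, \lambda_0^{-1}, \log(|\cX|/\delta_k), N_k, \alpha, \bar\epsilon^{-1})$. Propagating the lower bound $\Hb \succeq \lambda_0 \Ib$ through the concentration of $\Gb\Gb^\top$ — so that $\Ab_k$ is well conditioned and the design value $\tau_k$ computed on $\bpsi_{d_k}$ faithfully reflects the NTK geometry across all rounds — is the delicate technical core, and is where the coordinate-pairing symmetry of \cref{assumption:context} (which keeps the network value centered and bounded) does the essential work.
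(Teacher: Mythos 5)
Your proposal is correct and follows essentially the same route as the paper's proof: NTK linearization lemmas (gradient drift, Gram-matrix concentration, GD convergence, and the representation $h(\xb)=\langle \gb(\xb;\btheta_0),\btheta_\star-\btheta_0\rangle$ with norm bound $S$) reduce the problem to the misspecified low-dimensional linear elimination analysis of \cref{thm:active_elim_adaptive}, with induced misspecification $\widetilde O(S\bar\epsilon)$ from the truncated singular directions. In particular, your Cauchy--Schwarz step $\sum_{i>d}e_i\le\sqrt{|\cX|\sum_{i>d}e_i^2}$ combined with $e_i^2\approx\lambda_i(\Hb)$ to conclude $d_k\le d_{\eff}(\bar\epsilon^2/|\cX|)$ is exactly the paper's Lemma~\ref{lemma:boundd}, and your accounting of the per-round error terms that must be driven below $O(2^{-k})$ matches the paper's choice of $m$ and of the parameters $\alpha,\bar\epsilon,J_k$.
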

\begin{remark}
For the case where the effective dimension can be well bounded, e.g., $d_{\eff}(\bar \epsilon^2/|\cX|) = O(\log(|\cX|/\bar \epsilon^2))$, \cref{thm:neural} suggests that Algorithm \ref{alg:neural} is able to identify an $\epsilon$-optimal arm within $\tilde O(\epsilon^{-2})$ samples. That suggests that our neural network-based algorithm is efficient without constructing a low-dimensional linear approximation of $h$ in prior, like the previous linear or RKHS approaches.
\end{remark}

\section{Experiments}
\label{sec:experiment}

We conduct four experiments on synthetic and real-world datasets. We specialize our embedding idea to the neural, kernel, and linear regimes, and denote the algorithms as \NE (\cref{alg:neural}),
\KE (\cref{alg:active_elim_adaptive} with Gaussian kernel), and {\tt LinearEmbedding} (\cref{alg:active_elim_adaptive} with linear representation), respectively. We compare our algorithms with two baselines: \rage and {\tt ActionElim}. \rage\citep{fiez2019sequential} conducts pure exploration in linear bandits and \AE\citep{jamieson2014best, even2002pac} ignores all feature representations. The (empirical) sample complexity of each algorithm is calculated as the number of samples needed so that the uneliminated set contains only $\epsilon$-optimal arms. Unsuccessful runs, i.e., those terminate with non-$\epsilon$-optimal arms, are reported as failures. 
In our experiments, we set $\epsilon=0.1$ and $\delta=0.05$. 
All results are averaged over 50 runs.\footnote{All algorithms are elimination-styled for fair comparison. Other implementation details are deferred to \cref{app:experiment}.}

\begin{figure}[h]
    \centering
    \subfloat[][]{\includegraphics[width=0.5\linewidth]{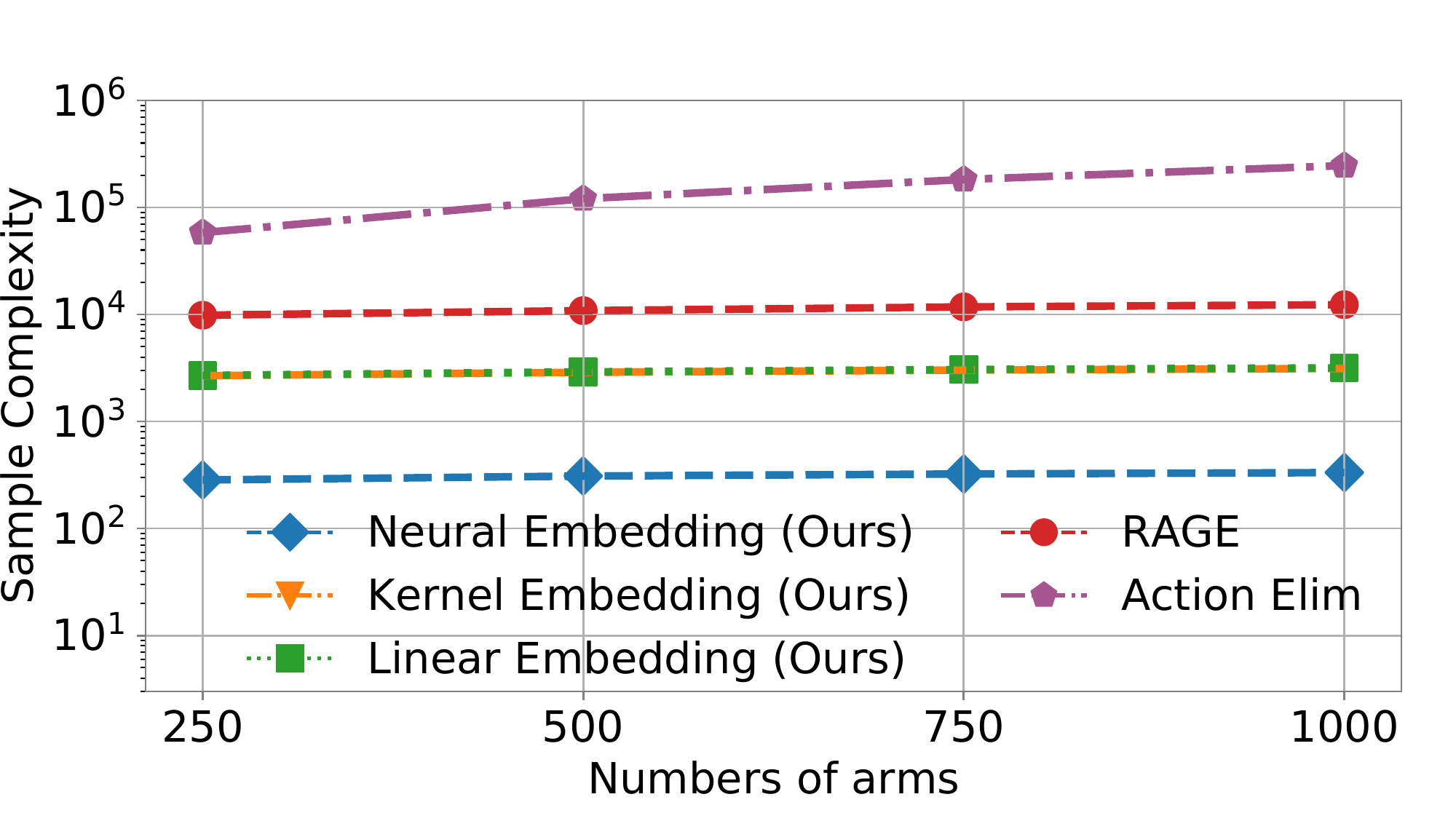}
    \label{fig:Linear}}
    \subfloat[][]{\includegraphics[width=0.5\linewidth]{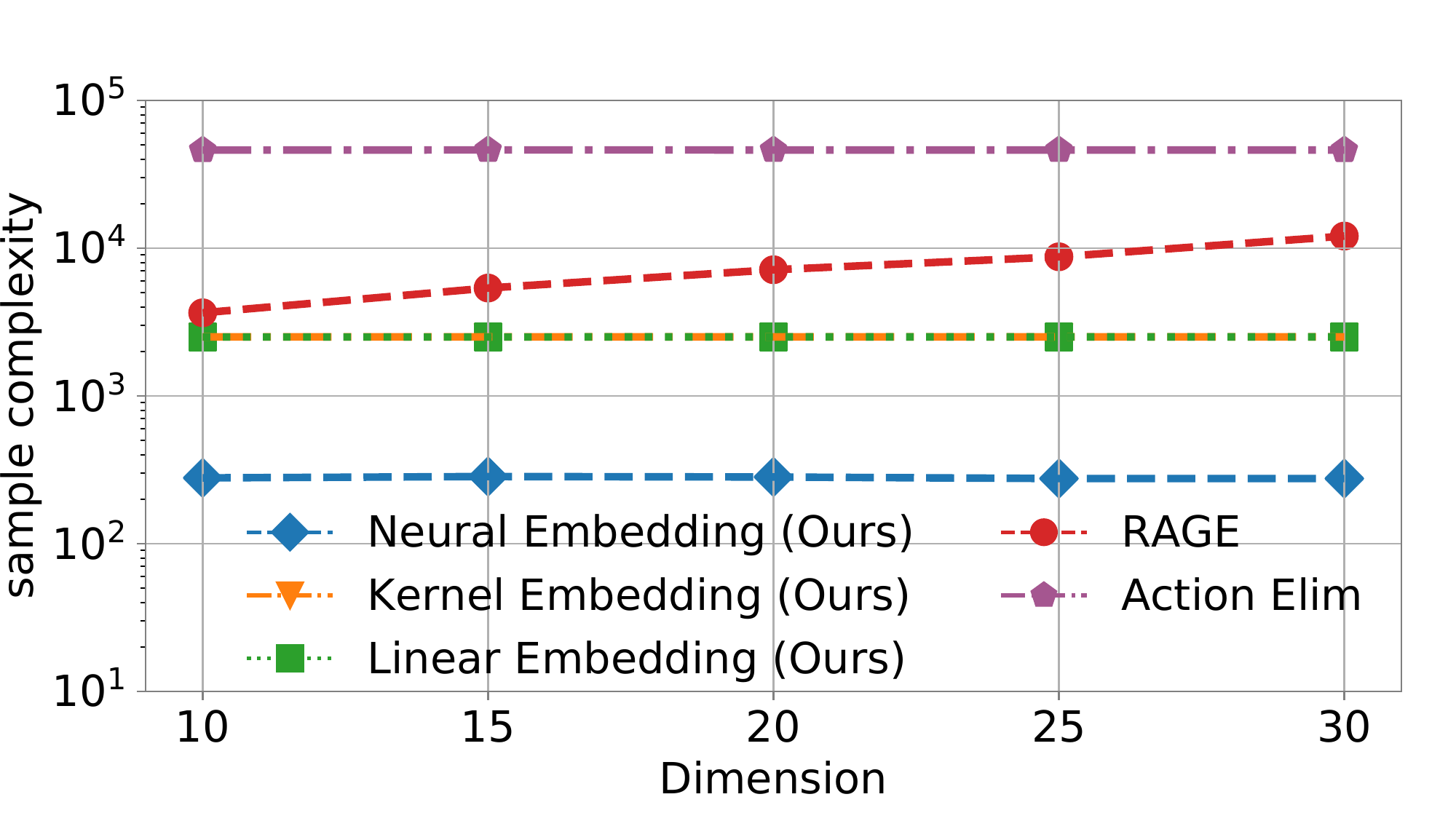}
    \label{fig:Nonlinear}}\\
    \caption{Experiments on synthetic datasets. (a) linear reward function with $D=20$; (b) nonlinear reward function with $K=200$.  }
    \label{fig:synthetic}
\end{figure}

\textbf{Synthetic datasets.} We first generate the feature matrix $\widetilde \bX = \bX + \bE \in \R^{K \times D}$ where $\bX$ is constructed as a rank-2 matrix 
and $\bE$ is a perturbation matrix with tiny spectral norm (See \cref{app:experiment} for details). Each row of $\widetilde \bX$ represents the feature representation of an arm, and those features can be grouped into two clusters with equal size. In the case with linear rewards, we let $\btheta_\star$ equal to the first row of $\widetilde \bX$. For nonlinear rewards, the reward of each arm is set as the 2-norm of its feature representation. We vary the number of arms $K$ and the ambient dimension $D$ in our experiments.

\cref{fig:synthetic} shows experimental results on synthetic datasets. All algorithms successfully identify $\epsilon$-optimal arms with zero empirical failure probability (due to the simplicity of the datasets). In terms of sample complexity, \NE outperforms all other algorithms in most cases, and \KE and \LE significantly outperform \rage and {\tt Action Elim}. The sample complexities of {\tt NeuralEmbedding}, \KE and \LE are not affected when increasing number of arms or dimensions since they first identify the important subspace and then conduct elimination. On the other side, the sample complexity of \AE gets larger with increasing number of arms and the sample complexity of \rage gets larger with increasing dimensions.

\textbf{MNIST dataset.} The MNIST dataset \cite{lecun1998mnist} contains hand-written digits from 0 to 9. We view each digit as an arm, and set their rewards according to the confusion matrix of a trained classifier.
Digit $7$ is chosen as the optimal arm with reward $1$; the reward of digits $1, 2$ and $9$ are set to be $0.8$, and all other digits have reward $0.5$. 
In each experiment, we randomly draw $200$ samples ($20$ samples each digit) from the dataset. 
We project the raw feature matrix $\bX \in \R^{200 \times 784}$ into a lower-dimensional space $\widetilde{\bX} \in \R^{200 \times 200}$ so that it becomes full rank (but without losing any information) and feasible for {\tt RAGE}.
Our goal is to correctly identify a digit $7$. 

\textbf{Yahoo dataset.} The Yahoo! User Click Log Dataset R6A\footnote{\url{https://webscope.sandbox.yahoo.com}} contains users' click-through records. Each record consists of a $36$-dimensional feature representation (obtained from an outer product), and a binary outcome stating whether or not a user clicked on the article. We view each record as an arm, and set the reward as $0.8$ (if clicked) or $0.3$ (if not clicked) to makes the problem harder.
In each experiment, we randomly draw $200$ arms from the dataset, where $5$ of them having rewards $0.8$ (proportional to true click-through ratio), Our goal is to identify an arm with rewards $0.8$. Our experimental setup is similar to the one used in \citet{fiez2019sequential}. However, their true rewards are obtained from a least square regression. We do not enforce linearity in rewards in our experiment.

\begin{figure}[t]
\vspace{-15 pt}
    \centering
    \subfloat[][MNIST]{\includegraphics[width=0.5\linewidth]{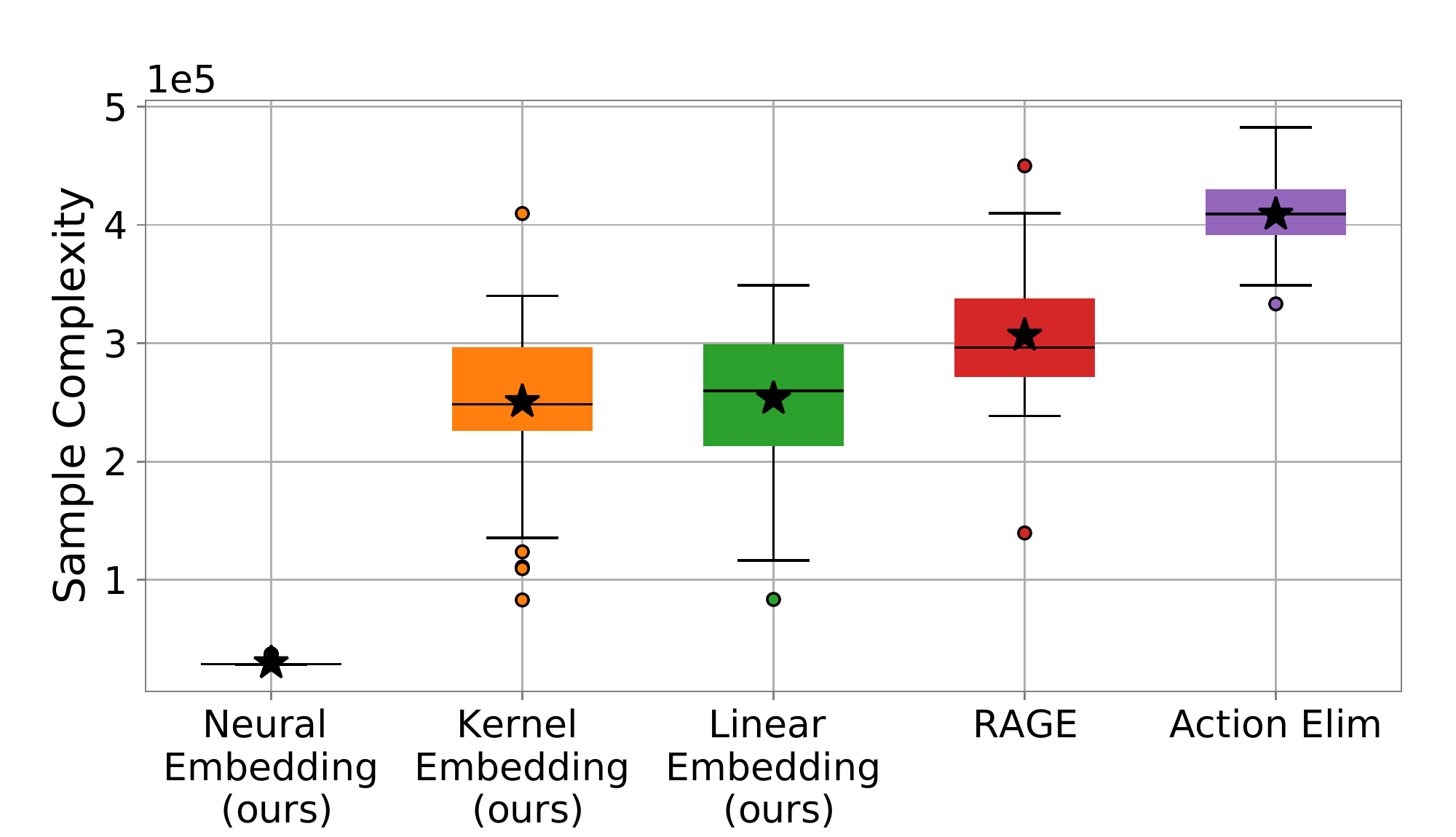}
    \label{fig:MNIST}}
    \subfloat[][Yahoo]{\includegraphics[width=0.5\linewidth]{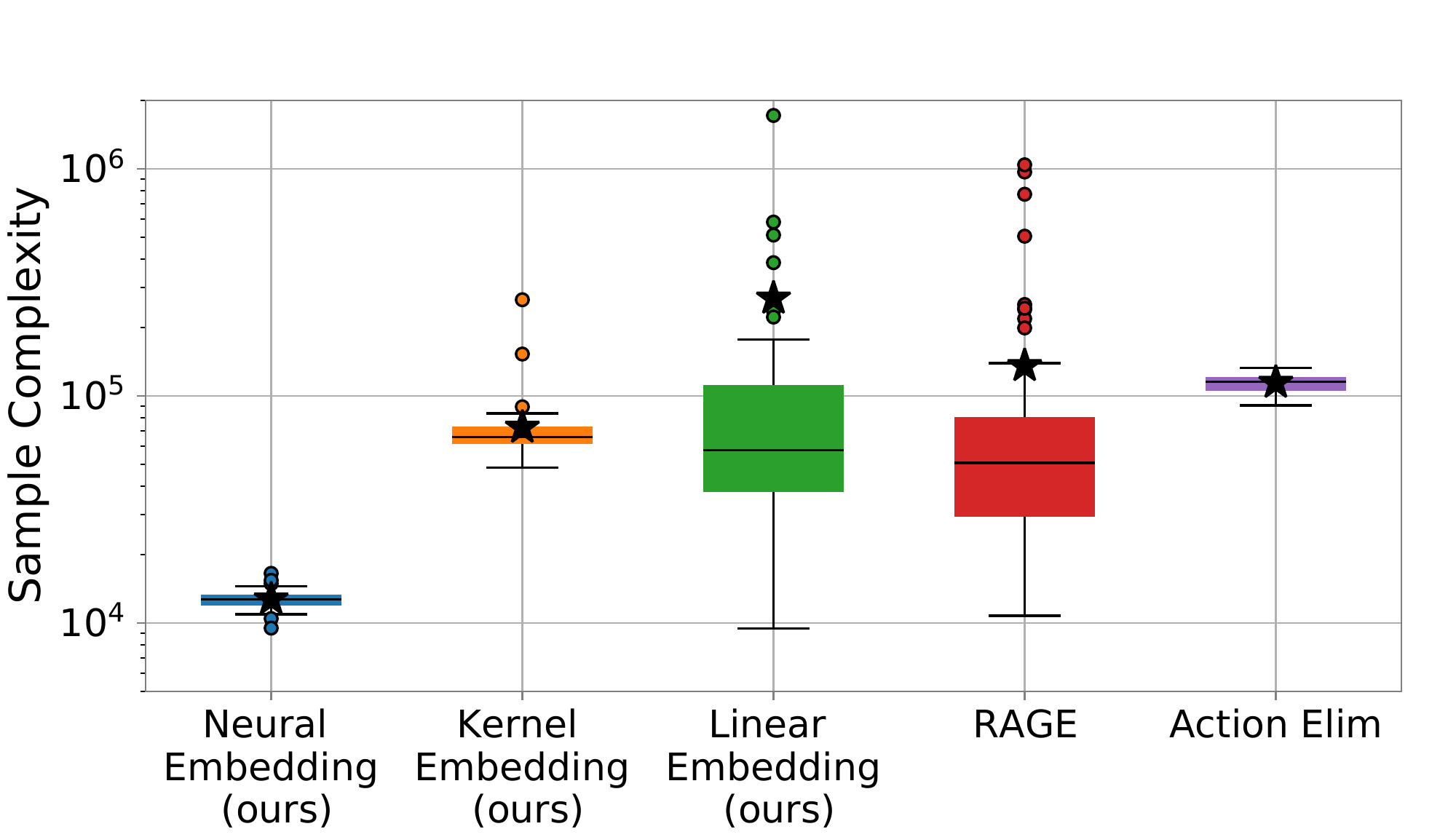}
    \label{fig:Yahoo}}\\
    \caption{Experiments on real-world datasets. The mean sample complexity is represented by a black star. The mean sample complexities of \LE and \rage are heavily affected by outliers in the Yahoo dataset.}
    \label{fig:real-world data}
    \vspace{-8pt}
\end{figure}

Box plots in \cref{fig:real-world data} show the sample complexity of each algorithm on real-world datasets. \NE significantly outperforms all other algorithms thanks to (1) the representation power of neural networks and (2) efficient exploration in low-dimensional spaces. \KE and \LE have competitive performance on the MNIST dataset. \cref{table:success_rate} shows the success rate of each algorithm. Linear methods such as \rage and \LE have relatively low success rates on the Yahoo dataset (with nonlinear rewards). Our \NE and \KE methods have high success rates since they are designed for nonlinear setting.

\vspace{-8 pt}
\begin{table}[H]
  \caption{Success rates on real-world datasets}
  \label{table:success_rate}
  \centering
  \begin{tabular}{lccccc}
    \toprule
           & {\footnotesize \NE  }   & {\footnotesize  \KE} & {\footnotesize \LE} & {\footnotesize \rage} &{\footnotesize  \AE}\\
    \midrule
    MNIST    & $98\%$ & $100\%$ & $100\%$ & $100\%$ & $100\%$\\
    Yahoo    & $100\%$ & $98\%$ & $88\%$ & $90\%$ & $100\%$ \\
    \bottomrule
  \end{tabular}
\end{table}

\section{Conclusion}
\label{sec:conclusion}

We introduce the idea of adaptive embedding in bandit pure exploration. Unlike existing works that passively deal with model misspecification, we adaptively embed high-dimensional feature representations into lower-dimensional spaces to avoid the curse of dimensionality. The induced misspecification are carefully dealt with. We further apply our approach to two under-studied settings with the nonlinearity: (1) pure exploration in an RKHS and (2) pure exploration with neural networks. Our sample complexity guarantees depend on the effective dimension of the feature spaces in the kernel or neural representations. We conduct extensive experiments on both synthetic and real-world datasets, and our algorithms greatly outperform existing ones. 

Our current analysis with neural representations is in the NTK regime, which can only describe a part of the representation of the neural networks. We leave extending our algorithm to more general settings (beyond the NTK regime) as a future direction.

\section*{Acknowledgments and Disclosure of Funding}
We thank the anonymous reviewers and area chair for their helpful comments. YZ and RN are partially supported by AFOSR grant FA9550-18-1-0166. DZ and QG are partially supported by the National Science Foundation CAREER Award 1906169, IIS-1904183 and AWS Machine Learning Research Award. RJ and RW are partially supported by AFOSR FA9550-18-1-0166, NSF OAC-1934637, NSF DMS-2023109, and NSF DGE-2022023. The views and conclusions contained in this paper are those of the authors and should not be interpreted as representing any funding agencies.

\bibliographystyle{plainnat}
\bibliography{refs.bib}

\clearpage

\section*{Checklist}

\begin{enumerate}

\item For all authors...
\begin{enumerate}
  \item Do the main claims made in the abstract and introduction accurately reflect the paper's contributions and scope?
    \answerYes{}
  \item Did you describe the limitations of your work?
    \answerYes{As discussed in \cref{sec:conclusion}.}
  \item Did you discuss any potential negative societal impacts of your work?
    \answerYes{This paper provides algorithms to conduct pure exploration for kernel and neural bandits. It does not lead to negative societal impacts in the foreseeable future.}
  \item Have you read the ethics review guidelines and ensured that your paper conforms to them?
    \answerYes{}
\end{enumerate}

\item If you are including theoretical results...
\begin{enumerate}
  \item Did you state the full set of assumptions of all theoretical results?
    \answerYes{As stated in the paper.}
	\item Did you include complete proofs of all theoretical results?
    \answerYes{All proofs are provided in the appendix in supplementary material (full paper).}
\end{enumerate}

\item If you ran experiments...
\begin{enumerate}
  \item Did you include the code, data, and instructions needed to reproduce the main experimental results (either in the supplemental material or as a URL)?
    \answerYes{Code, data, and instructions can be found in the supplementary material (code).}
  \item Did you specify all the training details (e.g., data splits, hyperparameters, how they were chosen)?
    \answerYes{As stated in \cref{sec:experiment} and \cref{app:experiment}.}
	\item Did you report error bars (e.g., with respect to the random seed after running experiments multiple times)?
    \answerYes{As reported in \cref{sec:experiment}.}
	\item Did you include the total amount of compute and the type of resources used (e.g., type of GPUs, internal cluster, or cloud provider)?
    \answerYes{As stated in the supplementary material (code).}
\end{enumerate}

\item If you are using existing assets (e.g., code, data, models) or curating/releasing new assets...
\begin{enumerate}
  \item If your work uses existing assets, did you cite the creators?
    \answerYes{}
  \item Did you mention the license of the assets?
    \answerYes{We clarified the source of datasets used in \cref{sec:experiment}.}
  \item Did you include any new assets either in the supplemental material or as a URL?
    \answerYes{Code used to produce our experiments is included in the supplementary material (code).}
  \item Did you discuss whether and how consent was obtained from people whose data you're using/curating?
    \answerYes{As stated in \cref{app:experiment}.}
  \item Did you discuss whether the data you are using/curating contains personally identifiable information or offensive content?
    \answerYes{As discussed in \cref{app:experiment}.}
\end{enumerate}

\item If you used crowdsourcing or conducted research with human subjects...
\begin{enumerate}
  \item Did you include the full text of instructions given to participants and screenshots, if applicable?
    \answerNA{}
  \item Did you describe any potential participant risks, with links to Institutional Review Board (IRB) approvals, if applicable?
    \answerNA{}
  \item Did you include the estimated hourly wage paid to participants and the total amount spent on participant compensation?
    \answerNA{}
\end{enumerate}

\end{enumerate}

\clearpage
\appendix 

\section{Supporting materials}
\label{app:supporting}

\subsection{Matrix inversion and rounding}
\label{app:rounding}

For possibly singular $\bA_{\bpsi_d(\lambda)}$, pseudo-inverse is used in $\norm{\by}_{\bA_{\bpsi_d}(\lambda)^{-1}}^2$ if $\by$ belongs to the range of $\bA_{\bpsi_d(\lambda)}$; otherwise, we set $\norm{\by}_{\bA_{\bpsi_d}(\lambda)^{-1}}^2 = \infty$.

Consider any $\cS \subseteq \cX$. With our (slightly abused) definition of matrix inversion, the optimal design 
\begin{align}
    \inf_{\lambda \in \bLambda_{\cX}} \sup_{\by \in \cY(\bpsi_d(\cS))} \norm{\by}^2_{\bA_{\bpsi_d}(\lambda)^{-1}} \label{eq:optimal_design_app}
\end{align}
will select a design $\lambda^\star \in \bLambda_{\cX}$ such that every $\by \in \cY(\bpsi_d(\cS))$ lies in the range of $\bA_{\bpsi_d}(\lambda^\star)$.\footnote{If the infimum is not attained, we can simply take a design $\lambda^{\star \star}$ with associated value $\tau^{\star \star} \leq (1+\zeta_0) \inf_{\lambda \in \bLambda_{\cX}} \sup_{\by \in \cY(\bpsi_d(\cS))} \norm{\by}^2_{\bA_{\bpsi_d}(\lambda)^{-1}}$ for a $\zeta_0 > 0$ arbitrarily small. Our analysis goes through with changes only in constant terms.} If $\spn(\cY(\bpsi_d(\cS))) = \R^d$, then $\bA_{\bpsi_d}(\lambda^\star)$ is positive definite (recall that $\bA_{\bpsi_d}(\lambda^\star) = \sum_{\bx \in \cX} \lambda_{\bx} \bpsi_d(\bx) \bpsi_d(\bx)^\top$ and $\spn(\bpsi_d(\cX))= \R^d$ by assumption). Thus the rounding guarantees (Theorem 2.1 therein, which requires a positive definite design) in \citet{allen2020near} goes through (with additional simple modifications dealt as in Appendix B of \citet{fiez2019sequential}).

We now consider the case when $\bA_{\bpsi_d}(\lambda^\star)$ is singular. Since $\spn(\bpsi_d(\cX)) = \R^d$, we can always find another $\lambda^\prime$ such that $\bA_{\bpsi_d}(\lambda^\prime)$ is invertible. For any $\zeta_1 >0$, let $\widetilde \lambda^\star = (1-\zeta_1) \lambda^\star + \zeta_1 \lambda^\prime $. We know that $\widetilde \lambda^\star$ leads to a positive definite design. With respect to $\zeta_1$, we can find another $\zeta_2 > 0$ small enough (e.g., smaller than the smallest eigenvalue of $\zeta_1 \bA_{\bpsi_d}(\lambda^\prime)$) such that $\bA_{\bpsi_d}(\widetilde \lambda^\star) \succeq \bA_{\bpsi_d}((1-\zeta_1) \lambda^\star) + \zeta_2 \bI$. Since $\bA_{\bpsi_d}((1-\zeta_1) \lambda^\star) + \zeta_2 \bI$ is positive definite, for any $\by \in \cY(\bpsi_d(\cS))$, we have 
\begin{align*}
    \norm{\by}^2_{\bA_{\bpsi_d}(\widetilde \lambda^\star)^{-1}} \leq \norm{\by}^2_{(\bA_{\bpsi_d}((1-\zeta_1) \lambda^\star) + \zeta_2 \bI)^{-1}}.
\end{align*}
Fix any $\by \in \cY(\bpsi_d(\cS))$. Since $\by$ lies in the range of $\bA_{\bpsi_d}(\lambda^\star)$ (according to the objective and the definition of matrix inversion), we clearly have 
\begin{align*}
    \norm{\by}^2_{(\bA_{\bpsi_d}((1-\zeta_1) \lambda^\star) + \zeta_2 \bI)^{-1}} 
    \leq \norm{\by}^2_{(\bA_{\bpsi_d}((1-\zeta_1) \lambda^\star))^{-1}}
    \leq \frac{1}{1-\zeta_1} \norm{\by}^2_{\bA_{\bpsi_d}(\lambda^\star)^{-1}}.
\end{align*}
To summarize, we have 
\begin{align*}
    \norm{\by}^2_{\bA_{\bpsi_d}(\widetilde \lambda^\star)^{-1}} \leq \frac{1}{1-\zeta_1} \norm{\by}^2_{\bA_{\bpsi_d}(\lambda^\star)^{-1}},
\end{align*}
where $\zeta_1$ can be chosen arbitrarily small. We can thus send the positive definite design $\widetilde \lambda^\star$ to the rounding procedure in \citet{allen2020near}. We can incorporate the additional $1/(1-\zeta_1)$ overhead, for $\zeta_1 >0$ chosen sufficiently small, into the sample complexity requirement $r_d(\zeta)$ of the rounding procedure.

\subsection{Supporting lemmas}
\begin{lemma}[Theorem 10 in \citet{katz2020empirical}]
\label{lm:lower_bound_tao}
Consider a linear bandit problem with action set $\cX \subseteq \R^D$. Suppose $\spn(\curly*{\bx_\star - \bx}_{\bx \in \cX}) = \R^D$, we then have 
\begin{align*}
 \inf_{\lambda \in \bLambda_{\cX}} \sup_{\bx\in  \cX \setminus \bx_{\star}} \norm{\bx_{\star} - \bx }^2_{\bA(\lambda)^{-1}} = \Omega(D).
\end{align*}
\end{lemma}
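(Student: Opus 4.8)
The plan is to prove the stronger pointwise statement that \emph{every} design $\lambda \in \bLambda_{\cX}$ already makes the inner supremum at least $D - O(1)$; taking the infimum then yields the claimed $\Omega(D)$ bound. Throughout write $\bA(\lambda) = \sum_{\bx \in \cX} \lambda_{\bx}\,\bx\bx^\top$ for the design matrix appearing in the Mahalanobis norm. First I would lower bound the supremum by the $\lambda$-weighted average of the same quantities, since a maximum of nonnegative numbers dominates any convex combination of them (the $\bx=\bx_\star$ term contributes $0$ and may be dropped):
\[
\sup_{\bx \in \cX \setminus \curly*{\bx_\star}} \norm{\bx_\star - \bx}^2_{\bA(\lambda)^{-1}} \;\ge\; \sum_{\bx \in \cX} \lambda_{\bx}\,\norm{\bx_\star - \bx}^2_{\bA(\lambda)^{-1}} \;=\; \mathrm{tr}\!\left(\bA(\lambda)^{-1} M(\lambda)\right),
\]
where $M(\lambda) = \sum_{\bx \in \cX}\lambda_{\bx}(\bx_\star - \bx)(\bx_\star - \bx)^\top$. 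The value of this step is that the averaged quantity is a trace, which is tractable, whereas the maximum is not.

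Next I would relate $M(\lambda)$, which is built from the \emph{differences}, back to $\bA(\lambda)$, which is built from the arms. Expanding the outer products and writing $\bar{\bx}_\lambda = \sum_{\bx \in \cX}\lambda_{\bx}\bx$ for the weighted mean yields the identity $M(\lambda) = \big(\bA(\lambda) - \bar{\bx}_\lambda\bar{\bx}_\lambda^\top\big) + (\bx_\star - \bar{\bx}_\lambda)(\bx_\star - \bar{\bx}_\lambda)^\top \succeq \bA(\lambda) - \bar{\bx}_\lambda\bar{\bx}_\lambda^\top$, the first summand being the covariance of $\lambda$. Taking traces against $\bA(\lambda)^{-1}$ and using $\mathrm{tr}(\bA(\lambda)^{-1}\bA(\lambda)) = D$ gives $\mathrm{tr}(\bA(\lambda)^{-1}M(\lambda)) \ge D - \norm{\bar{\bx}_\lambda}^2_{\bA(\lambda)^{-1}}$. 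The final ingredient is the estimate $\norm{\bar{\bx}_\lambda}^2_{\bA(\lambda)^{-1}} \le 1$: setting $s = \bar{\bx}_\lambda^\top \bA(\lambda)^{-1}\bar{\bx}_\lambda$ and $v = \bA(\lambda)^{-1}\bar{\bx}_\lambda$, a Cauchy--Schwarz inequality against the probability weights $\lambda$ gives $s = \sum_{\bx}\lambda_{\bx}(\bx^\top v) \le \big(\sum_{\bx}\lambda_{\bx}(\bx^\top v)^2\big)^{1/2} = (v^\top \bA(\lambda) v)^{1/2} = \sqrt{s}$, so $s \le 1$. Chaining the three bounds shows $\sup_{\bx}\norm{\bx_\star - \bx}^2_{\bA(\lambda)^{-1}} \ge D - 1$ for every $\lambda$, and the lemma follows.

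The step that I expect to require the most care is the degeneracy bookkeeping rather than any single inequality, since $\mathrm{tr}(\bA(\lambda)^{-1}\bA(\lambda)) = D$ presumes $\bA(\lambda)$ is invertible. I would handle this with the paper's convention for singular designs: if some difference $\bx_\star - \bx$ fails to lie in the range of $\bA(\lambda)$ then $\norm{\bx_\star - \bx}^2_{\bA(\lambda)^{-1}} = +\infty$ and the supremum bound is trivial; otherwise $\spn(\curly*{\bx_\star - \bx}_{\bx \in \cX}) = \R^D$ forces $\mathrm{range}(\bA(\lambda)) = \R^D$, so $\bA(\lambda)$ is nonsingular and the trace identity is valid. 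Once this dichotomy is in place, the averaging bound, the positive-semidefinite identity for $M(\lambda)$, and the one-line Cauchy--Schwarz estimate complete the argument.
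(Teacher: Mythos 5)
The paper never proves this lemma: it is imported verbatim as Theorem 10 of Katz-Samuels et al.\ and used only as a black box in the high-dimensional example of Appendix B.3, so there is no in-paper argument to compare against. Your proposal is a correct, self-contained, elementary proof of the stated bound, and every step checks out. The chain ``supremum $\geq$ $\lambda$-weighted average $=$ trace,'' the decomposition of $M(\lambda)$ into the covariance of $\lambda$ plus the rank-one positive semidefinite term $(\bx_\star - \bar{\bx}_\lambda)(\bx_\star - \bar{\bx}_\lambda)^\top$ (which you correctly discard), and the Cauchy--Schwarz estimate $\norm{\bar{\bx}_\lambda}^2_{\bA(\lambda)^{-1}} \leq 1$ are all valid, and together they give the uniform-in-$\lambda$ bound $D-1$, which is $\Omega(D)$ for the regime in which the paper invokes the lemma. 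Your handling of singular designs also matches the convention the paper fixes in Appendix A.1: the value is $+\infty$ unless every difference $\bx_\star - \bx$ lies in the range of $\bA(\lambda)$, and the spanning hypothesis then forces $\bA(\lambda)$ to be invertible, so the identity $\mathrm{tr}(\bA(\lambda)^{-1}\bA(\lambda)) = D$ is legitimate. The one piece of bookkeeping worth making explicit is in the averaging step: the $\bx = \bx_\star$ term contributes zero and the remaining weights sum to at most one, so the weighted average over all of $\cX$ is indeed dominated by the supremum over $\cX \setminus \curly*{\bx_\star}$ --- you note this parenthetically and it is fine. What your argument buys over the paper's citation is an explicit constant and transparency; it is essentially the standard averaging/trace lower bound for transductive experimental designs.
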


\section{Omitted proofs for \cref{sec:active_compression}}

\subsection{Proof of \cref{thm:active_elim_adaptive}}

\thmActiveElimAdaptive*

\begin{proof}
The proof is decomposed into three steps: (1) prove correctness through induction; (2) bound the error probability; (3) upper bound the sample complexity. We first note that $d_{\eff}(\epsilon)$ is well-defined since we assume that $\gamma(d)$ can be made arbitrarily small for $d$ large enough.

\textbf{Step 1: The induction.} We define event 
\begin{align*}
    \cE_k = \curly*{\bx_\star \in \widehat \cS_k \subseteq \cS_k},
\end{align*}
and prove through induction that 
\begin{align*}
    \P \paren{ \cE_{k+1} \vert \cap_{i \leq k} \cE_i } \geq 1 - \delta_{k},
\end{align*}
where $\delta_0 \coloneqq 0$. Recall that $\bx_\star = \argmax_{\bx \in \cX} h(\bx) $ and $\cS_k = \curly*{x \in \cX: \Delta_x < 4 \cdot 2^{-k}}$ (with $\cS_1 = \cX$). 

The base case $\curly*{ \bx_\star \in \widehat \cS_1 \subseteq \cS_1 }$ holds with probability $1$ by definition of $\widehat \cS_1 = \cS_1 = \cX$. We next analyze event $\cE_{k+1}$ conditioned on $\cap_{i \leq k} \cE_i$. For simplicity, we'll use notations $\bpsi(\cdot) \coloneqq \bpsi_{d_k}(\cdot)$, $\btheta_\star \coloneqq \btheta_{d_k}$ and $\eta(\cdot) = \eta_{d_k} (\cdot)$ in the analysis of the $k$-th iteration.

\textbf{Step 1.1: Concentration under mis-specifications.}
Let $\curly{\bx_1, \bx_2, \dots, \bx_{N_k}}$ be the arms pulled at the $k$-th iteration of \cref{alg:active_elim_adaptive} and $\curly*{y_1, \ldots, y_{N_k}}$ be the corresponding rewards. The least square estimator $\widehat{\btheta}_k = \bA_k^{-1} \bb_k \in \R^{d_k}$ is constructed with $\bA_k = \sum_{i=1}^{N_k} \bpsi(\bx_i) \bpsi(\bx_i)^{\top}$ and $\bb_k = \sum_{i=1}^{N_k} \bpsi(\bx_i) y_i$. Note that $y_i = h(\bx_i) + \xi_i = \bpsi(\bx_i)^\top \btheta_\star + \eta(\bx_i) + \xi_i$ are i.i.d. generated with 1-sub-Gaussian noise $\xi_i$. Fix any $\by \in \cY(\bpsi(\widehat \cS_k))$, we have
\begin{align}
    \abs{ \ang{\by, \widehat \btheta_k - \btheta_\star} }& = \abs{\by^\top \bA_k^{-1} \sum_{j=1}^{N_k} \bpsi(\bx_i) y_i - \by^\top \btheta_\star}  \nonumber \\
    & = \abs{\by^\top \bA_k^{-1} \sum_{j=1}^{N_k} \bpsi(\bx_i) \paren{ \bpsi(\bx_i)^\top \btheta_\star + \eta(\bx_i) + \xi_i } - \by^\top \btheta_\star} \nonumber\\
    & = \abs{\by^\top \bA_k^{-1} \sum_{j=1}^{N_k} \bpsi(\bx_i) \paren{\eta(\bx_i) + \xi_i }} \nonumber \\
    & \leq \abs{\by^\top \bA_k^{-1} \sum_{j=1}^{N_k} \bpsi(\bx_i) \eta(\bx_i)} + \abs{ \sum_{j=1}^{N_k} \by^\top \bA_k^{-1} \bpsi(\bx_i) \xi_i }. \label{eq:kernel_elim_proof_step1_deviation}
\end{align}

We now bound the two terms in \cref{eq:kernel_elim_proof_step1_deviation} separately as follows. Since $\eta(\bx_i) \leq \widetilde \gamma(d)$ by construction, for the first term, we have
\begin{align}
    \abs{\by^\top \bA_k^{-1} \sum_{j=1}^{N_k} \bpsi(\bx_i) \eta(\bx_i)} & \leq \widetilde \gamma(d) \sum_{i=1}^{N_k} \abs{ \by^\top \bA_k^{-1} \bpsi(\bx_i) } \nonumber \\
    & = \widetilde \gamma(d_k) \sum_{i=1}^{N_k} \sqrt{\paren{ \by^\top \bA_k^{-1} \bpsi(\bx_i) }^2 } \nonumber \\
    & \leq \widetilde \gamma(d_k) \sqrt{N_k \sum_{i=1}^{N_k} \paren{ \by^\top \bA_k^{-1} \bpsi(\bx_i) }^2 } \label{eq:kernel_elim_step1_first_term_jensen} \\
    & = \widetilde \gamma(d_k)  \sqrt{N_k \sum_{i=1}^{N_k}  \by^\top \bA^{-1} \bpsi(\bx_i) \bpsi(\bx_i)^\top \bA_k^{-1} \by } \nonumber \\
    & = \widetilde \gamma(d_k)  \sqrt{N_k  \, \norm{\by}_{\bA_k^{-1}}^2  } \nonumber \\
    & \leq \widetilde \gamma(d_k) \sqrt{(1+\zeta) \, \tau(\cY( \bpsi(\widehat \cS_k)))} \label{eq:kernel_elim_step1_first_term_rounding} 
\end{align}
where \cref{eq:kernel_elim_step1_first_term_jensen} comes from Jensen's inequality; \cref{eq:kernel_elim_step1_first_term_rounding} comes from the guarantee of rounding in \cref{eq:rounding}.

The second term in \cref{eq:kernel_elim_proof_step1_deviation} is a (weighted) sum of independent random variables $\xi_i$. Since $\xi_i$s are $1$-sub-Gaussian, then $\paren*{ \sum_{j=1}^{N_k} \by^\top \bA_k^{-1} \bpsi(\bx_i) \xi_i }$ is $\sqrt{\sum_{j=1}^{N_k} (\by^\top \bA_k^{-1} \bpsi(\bx_i))^2}$-sub-Gaussian. Since $\sqrt{\sum_{j=1}^{N_k} (\by^\top \bA_k^{-1} \bpsi(\bx_i))^2} = \norm{\by}_{\bA_k^{-1}}$, a standard Hoeffding's inequality leads to the following bound
\begin{align*}
    \P \paren{ \abs{\sum_{j=1}^{N_k} \by^\top \bA_k^{-1} \bpsi(\bx_i) \xi_i } \geq \norm{\by}_{\bA_k^{-1}} \sqrt{2 \log \paren{ {\abs*{\widehat \cS_k}^2}/{\delta_k}}} } \leq \frac{\delta_k}{\abs*{\widehat \cS_k}^2/2}
\end{align*}
Since there are at most $\abs*{\widehat \cS_k}^2/2$ directions in $\cY(\bpsi(\widehat \cS_k))$, taking a union bound over all possible directions, we have 
\begin{align}
    \P \paren{ \forall \by \in \cY(\bpsi(\widehat \cS_k)): \abs{\sum_{j=1}^{N_k} \by^\top \bA_k^{-1} \bpsi(\bx_i) \xi_i } \geq \norm{\by}_{\bA_k^{-1}} \sqrt{2 \log \paren{ {\abs*{\widehat \cS_k}^2}/{\delta_k}}} } \leq \delta_k. \label{eq:kernel_elim_proof_step1_second_term}
\end{align}
Plugging \cref{eq:kernel_elim_step1_first_term_rounding} and \cref{eq:kernel_elim_proof_step1_second_term} into \cref{eq:kernel_elim_proof_step1_deviation}, we have that for any $\by \in \cY(\bpsi(\widehat \cS_k))$,
\begin{align}
    \abs{ \ang{\by, \widehat \btheta_k - \btheta^\star} } \leq \widetilde \gamma(d_k) \sqrt{(1+\zeta) \, \tau(\cY( \bpsi(\widehat \cS_k)))} + \norm{\by}_{\bA_k^{-1}} \sqrt{2 \log \paren{ { \abs*{\widehat \cS_k}^2}/{\delta_k}}} =: \widetilde \omega_k (\by) \label{eq:kernel_elim_proof_concentration}
\end{align}
with probability at least $1 - \delta_k$.

\textbf{Step 1.2: Correctness.} We prove the result under the event described in \cref{eq:kernel_elim_proof_concentration}, which holds with (conditional) probability at least $1-\delta_k$. We know that $\bx_\star \in \widehat \cS_k \subseteq \cS_k$ holds conditioned on $\cap_{i \leq k} \cE_i$. We now prove it for iteration $k+1$. Recall the elimination criteria is 
\begin{align*}
    \widehat \cS_{k+1} = \widehat \cS_k \setminus \{ \bx \in \widehat \cS_k : \exists \bx^\prime \text{ such that }(\bpsi(\bx^\prime) -\bpsi(\bx))^\top \widehat{\btheta}_k \geq \omega_k(\bpsi(\bx^\prime) -\bpsi(\bx))\}.
\end{align*}

\textbf{Step 1.2.1: $\bx_\star \in \widehat \cS_{k+1}$.} Let $\widehat \bx \in \widehat \cS_k$ be any arm such that $\widehat \bx \neq \bx_\star$. We have
\begin{align*}
    (\bpsi(\widehat\bx ) -\bpsi(\bx_\star))^\top \widehat{\btheta}_k & \leq (\bpsi(\widehat\bx ) -\bpsi(\bx_\star))^\top {\btheta}^\star + \widetilde \omega_k (\bpsi(\widehat \bx) -\bpsi(\bx_\star)) \nonumber \\
    & = h(\widehat \bx) - \eta(\widehat \bx) - (h(\bx_\star) - \eta(\bx_\star)) + \widetilde \omega_k(\bpsi(\widehat \bx) -\bpsi(\bx_\star)) \nonumber \\
    & \leq  - \Delta(\widehat \bx) + \paren{ 2 \widetilde \gamma(d_k) + \widetilde \omega_k(\bpsi(\widehat \bx) -\bpsi(\bx_\star))} \\
    & < \paren{ 2 \widetilde \gamma(d_k) + \widetilde \omega_k(\bpsi(\widehat \bx) -\bpsi(\bx_\star))} \nonumber \\
    & = \omega_k(\bpsi(\widehat \bx) -\bpsi(\bx_\star)).
\end{align*}
As a result, the optimal arm $\bx_\star$ remains in $\widehat \cS_{k+1}$.

\textbf{Step 1.2.2: $\widehat \cS_{k+1} \subseteq \cS_{k+1}$.} For any $\bx \in \cS_{k+1}^c \cap \widehat \cS_k$, we have $\Delta(\bx) = h(\bx_\star) - h(\bx) \geq 4 \cdot 2^{-(k+1)} = 2 \cdot 2^{-k}$ by definition. Since $\bx_\star \in \widehat \cS_k$, we have 
\begin{align}
    (\bpsi(\bx_\star) -\bpsi(\bx))^\top \widehat{\btheta}_k & \geq (\bpsi(\bx_\star) -\bpsi(\bx))^\top {\btheta}^\star - \widetilde \omega_k(\bpsi(\bx_\star) -\bpsi(\bx)) \nonumber \\
    & = h(\bx_\star) - \eta(\bx_\star) - (h(\bx) - \eta(\bx))- \widetilde \omega_k (\bpsi(\bx_\star) -\bpsi(\bx)) \nonumber \\
    & \geq \Delta(\bx) - \paren{ 2 \widetilde \gamma(d_k) + \widetilde \omega_k (\bpsi(\bx_\star) -\bpsi(\bx)) } \nonumber \\
    & = \Delta(\bx) - \omega_k(\bpsi(\bx_\star) -\bpsi(\bx)) \nonumber  \\
    & \geq \omega_k(\bpsi(\bx_\star) -\bpsi(\bx)) \label{eq:kernel_elim_proof_step121},
\end{align}
where \cref{eq:kernel_elim_proof_step121} comes from the fact that $\omega_k(\bpsi(\bx_\star) -\bpsi(\bx)) \leq 2^{-k}$ by the selection of $N_k$ (as discussed later) and the guarantees from the rounding procedure \cref{eq:rounding}.
As a result, any $\bx \in \cS_{k+1}^c \cap \widehat \cS_k$ will be eliminated and $\widehat \cS_{k+1} \subseteq \cS_{k+1}$.

To summarize, we prove the induction at iteration $k+1$, i.e.,
\begin{align*}
    \P \paren{ \cE_{k+1} \vert \cap_{i< k+1} \cE_{i} } \geq 1 - \delta_k.
\end{align*}

To show the selection of $N_k$ guarantees \cref{eq:kernel_elim_proof_step121}, it suffices to show that $2^{-k} - \epsilon_k \geq 2^{-k}/2$. We first notice the following inequalities:
\begin{align}
    (1+\zeta) \, \tau(\cY( \bpsi(\widehat \cS_k))) & = (1+\zeta) \,\inf_{\lambda \in \bLambda_{\cX}} \sup_{\by \in \cY( \bpsi(\widehat \cS_k))} \norm*{\by}^2_{\bA_{\bpsi}(\lambda)^{-1}} \nonumber \\
    & \leq (1+\zeta) \,\inf_{\lambda \in \bLambda_{\cX}} \sup_{\by \in \cY( \bpsi(\cX))} \norm*{\by}^2_{\bA_{\bpsi}(\lambda)^{-1}}  \nonumber \\
    & = g(d, \zeta) \label{eq:kernel_elim_gd}.
\end{align}

Recall that $\gamma(d) = \paren*{16 + 8\sqrt{g(d, \zeta)}} \widetilde  \gamma(d)$. At each the $k$-th iteration with $k \leq n \coloneqq \ceil*{\log_2(2/\epsilon)}$, we have 
\begin{align}
    \epsilon_k & \leq \gamma(d_k)/8 \label{eq:kernel_elim_adaptive_proof_eps_1} \\
    & \leq 2^{-k} / 2 \label{eq:kernel_elim_adaptive_proof_eps_2},
\end{align}
where \cref{eq:kernel_elim_adaptive_proof_eps_1} comes from \cref{eq:kernel_elim_gd}, and \cref{eq:kernel_elim_adaptive_proof_eps_2} comes from the definition of $d_k = d_{\eff}(4 \cdot 2^{-k})$. This implies that $2^{-k} - \epsilon_k \geq 2^{-k}/2$ and thus $N_k$ is well-defined.

\textbf{Step 2: The error probability.}
From the analysis in Step 1, we have 
\begin{align*}
    \P(\cE_{k+1} \vert \cE_{k} \cap \dots \cap \cE_1) \geq 1 - \delta_k.
\end{align*}
Let $\cE = \cap_{k=1}^{n+1} \cE_k$, we then have 
\begin{align}
    \P \paren{ \cE } & = \prod_{k=1}^{n+1} \P \paren{ \cE_k \vert \cE_{k-1} \cap \dots \cap \cE_1 } \nonumber \\
    & = \prod_{k=1}^{n+1} \paren{1 - \delta_{k-1}} \nonumber \\
    & \geq \prod_{k=1}^\infty \paren{1 - \delta/k^2} \nonumber \\
    & = \frac{\sin(\pi \delta)} {\pi \delta} \nonumber \\
    & \geq 1- \delta \label{eq:kernel_elim_proof_step2},
\end{align}
where we use the fact that ${\sin(\pi \delta)}/{\pi \delta} \geq 1-\delta$ for any $\delta \in (0,1)$ in \cref{eq:kernel_elim_proof_step2}. We analyze the following steps under the good event $\cE$.

\textbf{Step 3: Sample complexity upper bound.} As one can see, the total sample complexity $N$ is upper bounded by
\begin{align*}
    N & \leq \sum_{k=1}^n N_k \nonumber  \\
    & = \sum_{k=1}^{\ceil*{\log_2(2/\epsilon)}}  \max \curly{ \ceil{ (2^{-k} - \epsilon_k)^{-2} 2 (1+\zeta)\,\tau_k  \log(\abs*{\widehat \cS_k}^2/\delta_k) }  ,r_{d_k}(\zeta)} \nonumber  \\
    & \leq \sum_{k=1}^{\ceil*{\log_2(2/\epsilon)}} \paren{ \paren{ (2^{-k} - \epsilon_k)^{-2} 2 (1+\zeta)\,\tau_k  \log(\abs*{ \widehat \cS_k}^2/\delta_k)}  + \paren{r_{d_k}(\zeta)+1}  } \nonumber \\
    & \leq \sum_{k=1}^{\ceil*{\log_2(2/\epsilon)}} \paren{ \paren{ (2^{2k}) 10\,\tau_k  \log(k^2 \abs{\cX}^2/\delta)}  + \paren{r_{d_k}(\zeta)+1}  }. \nonumber \\
\end{align*}
We analyze $\rho^\star_{d_k}(2^{2-k})$ in the following to obtain an instance dependent sample complexity upper bound.
\begin{align}
    \rho_{d_k}^\star(2^{2-k}) & = \inf_{\lambda \in \bLambda_{\cX}} \sup_{ \bx \in \cX \setminus \curly*{ \bx_\star }} \frac{\norm*{\bpsi_{d_k}(\bx_{\star})-\bpsi_{d_k}(\bx)}^2_{\bA_{\bpsi_{d_k}}(\lambda)^{-1}}}{ \max \curly{ \Delta(\bx), 4 \cdot 2^{-k}}^2} \nonumber\\
    & = \inf_{\lambda \in \bLambda_{\cX}} \sup_{i \leq k} \sup_{ \bx \in \cS_i \setminus \curly*{ \bx_\star }} \frac{\norm*{\bpsi_{d_k}(\bx_{\star})-\bpsi_{d_k}(\bx)}^2_{\bA_{\bpsi_{d_k}}(\lambda)^{-1}}}{ \max \curly{ \Delta(\bx), 4 \cdot 2^{-k}}^2} \nonumber \\
    & \geq \sup_{i \leq k} \inf_{\lambda \in \bLambda_{\cX}} \sup_{ \bx \in \cS_i \setminus \curly*{ \bx_\star }} \frac{\norm*{\bpsi_{d_k}(\bx_{\star})-\bpsi_{d_k}(\bx)}^2_{\bA_{\bpsi_{d_k}}(\lambda)^{-1}}}{ \max \curly{ \Delta(\bx), 4 \cdot 2^{-k}}^2} \nonumber\\
    & \geq \sup_{i \leq k} \inf_{\lambda \in \bLambda_{\cX}} \sup_{ \bx \in \cS_i \setminus \curly*{ \bx_\star }} \frac{\norm*{\bpsi_{d_k}(\bx_{\star})-\bpsi_{d_k}(\bx)}^2_{\bA_{\bpsi_{d_k}}(\lambda)^{-1}}}{ \max \curly{ 4\cdot 2^{-i}, 4 \cdot 2^{-k}}^2} \nonumber\\
    & \geq \frac{1}{k} \sum_{i=1}^{k} \inf_{\lambda \in \bLambda_{\cX}} \sup_{ \bx \in \cS_i \setminus \curly*{ \bx_\star }} \frac{\norm*{\bpsi_{d_k}(\bx_{\star})-\bpsi_{d_k}(\bx)}^2_{\bA_{\bpsi_{d_k}}(\lambda)^{-1}}}{ 16\cdot 2^{-2i}} \nonumber \\
    & \geq \frac{1}{k} \sum_{i=1}^{k} \inf_{\lambda \in \bLambda_{\cX}} \sup_{ \bx, \bx^\prime \in \cS_i} \frac{\norm*{\bpsi(\bx)-\bpsi(\bx^\prime)}^2_{\bA_{\bpsi}(\lambda)^{-1}}/ 4}{ 16\cdot 2^{-2i}} \label{eq:active_elim_proof_step3_tri} \\
    & = \frac{1}{k} \sum_{i=1}^{k} \paren{2^{2i}} \, \tau(\cY(\bpsi( \cS_i)))/64 \nonumber \\ 
    & \geq \frac{1}{64 k} (2^{2k}) \tau(\cY(\bpsi( \cS_k))) \nonumber,
\end{align}
where \cref{eq:active_elim_proof_step3_tri} comes from the fact that we can write $\bpsi(\bx) - \bpsi(\bx^\prime) = \bpsi(\bx) - \bpsi(\bx_\star) + \bpsi(\bx_\star) - \bpsi(\bx^\prime)$ and a triangle inequality.

We have $\tau_k = \tau(\cY(\bpsi_{d_k}(\widehat \cS_k))) \leq \tau(\cY(\bpsi_{d_k}( \cS_k)))$ since $\widehat \cS_k \subseteq \cS_k$. Thus, we have 
\begin{align*}
    (2^{2k}) \tau_k & \leq 64 k \, \rho_{d_k}^\star(2^{2-k}).
\end{align*}

As a result, we have 
\begin{align}
    N & \leq 640 \sum_{k=1}^{\ceil*{\log_2(2/\epsilon)}} \paren{ \paren{ k \, \rho_{d_k}^\star(2^{2-k})  \log(k^2 \abs{\cX}^2/\delta)}  + \paren{r_{d_k}(\zeta)+1} } \nonumber \\
    & = \widetilde{O} \paren{ d_{\eff}(\epsilon) \cdot \max \curly*{\Delta_{\min}, \epsilon}^{-2} }, \label{eq:kernel_elim_adaptive_proof_worst_case}
\end{align}
where \cref{eq:kernel_elim_adaptive_proof_worst_case} is discussed as follows.

Since we clearly have $2^{2-k} \geq \epsilon$ when $k \leq n \coloneqq \ceil*{\log_2(2/\epsilon)}$, we have 
\begin{align}
    \rho_{d_k}^\star(2^{2-k}) & =  \rho_{d_k}^\star(\epsilon) \nonumber \\
    & =\inf_{\lambda \in \bLambda_{\cX}} \sup_{ \bx \in \cX \setminus \curly*{ \bx_\star }} \frac{\norm*{\bpsi_{d_k}(\bx_{\star})-\bpsi_{d_k}(\bx)}^2_{\bA_{\bpsi_d}(\lambda)^{-1}}}{ \max \curly{ h(\bx_\star) - h(\bx), \epsilon}^2} \nonumber\\
    & \leq  \inf_{\lambda \in \bLambda_{\cX}} \sup_{ \bx \in \cX \setminus \curly*{ \bx_\star }} \norm*{\bpsi_{d_k}(\bx_{\star})-\bpsi_{d_k}(\bx)}^2_{\bA_{\bpsi_d}(\lambda)^{-1}} \cdot \max \curly*{\Delta_{\min}, \epsilon}^{-2} \nonumber \\
    & \leq 4 d_k \cdot \max \curly*{\Delta_{\min}, \epsilon}^{-2} \label{eq:active_elim_step3_kw}\\
    & = 4 d_{\eff}(4 \cdot 2^{-k}) \cdot \max \curly*{\Delta_{\min}, \epsilon}^{-2} \nonumber \\
    & \leq 4 d_{\eff}(\epsilon) \cdot \max \curly*{\Delta_{\min}, \epsilon}^{-2}, \label{eq:active_elim_step3_d_eff}
\end{align}
where \cref{eq:active_elim_step3_kw} comes from Kiefer–Wolfowitz Theorem \citep{kiefer1960equivalence} and \cref{eq:active_elim_step3_d_eff} comes from the fact that $2^{2-k} \geq \epsilon$ and the definition of $d_{\eff}(\epsilon)$. Similarly, we have $r_{d_k}(\zeta) \leq O(d_k/\zeta^2) \leq O(d_{\eff}(\epsilon))$.
\end{proof}

\subsection{Proof of \cref{prop:complexity_linear}}

\propComplexityLinear*

\begin{proof}
Recall that 
\begin{align*}
    \rho_d^\star(\epsilon) = \inf_{\lambda \in \bLambda_{\cX}} \sup_{ \bx \in \cX \setminus \curly*{ \bx_\star }} \frac{\norm*{\bpsi_d(\bx_{\star})-\bpsi_d(\bx)}^2_{\bA_{\bpsi_d}(\lambda)^{-1}}}{ \max \curly{ \Delta_{\bx}, \epsilon}^2},
\end{align*}
\begin{align*}
    \widetilde \rho_d^\star(\epsilon) = \inf_{\lambda \in \bLambda_{\cX}} \sup_{ \bx \in \cX \setminus \curly*{ \bx_\star }} \frac{\norm*{\bpsi_d(\bx_{\star})-\bpsi_d(\bx)}^2_{\bA_{\bpsi_d}(\lambda)^{-1}}}{ \max \curly{ \ang{\bpsi_d(\bx_\star) - \bpsi_d(\bx), \btheta_d}, \epsilon}^2},
\end{align*}
and 
\begin{align}
    \Delta_{\bx} = h(\bx_\star) - h(\bx) = \ang{\bpsi(\bx_\star) - \bpsi(\bx), \btheta_d} + \eta(\bx_\star) - \eta(\bx), \label{eq:lm_linear_proof_difference}
\end{align}
where $\abs{\eta(\bx)} \leq \gamma$ for any $\bx \in \cX$.

To relate $\rho^\star(\epsilon)$ with $\widetilde \rho^\star(\epsilon)$, we only need to relate $\max \curly{ \Delta_{\bx}, \epsilon}$ with $\max \curly{ \ang{\bpsi_d(\bx_\star) - \bpsi_d(\bx), \btheta_d}, \epsilon}$. From \cref{eq:lm_linear_proof_difference} and the fact that $\epsilon \geq \gamma$, we know that 
\begin{align*}
    \ang{\bpsi_d(\bx_\star) - \bpsi_d(\bx), \btheta_d} \leq \Delta_{\bx} + 2 \gamma \leq \Delta_{\bx} + 2 \epsilon \leq 3 \max \curly{ \Delta_{\bx}, \epsilon},
\end{align*}
and thus 
\begin{align*}
    \max \curly{ \ang{\bpsi_d(\bx_\star) - \bpsi_d(\bx), \btheta_d}, \epsilon} \leq 3 \max \curly{ \Delta_{\bx}, \epsilon}.
\end{align*}
As a result, we have $\rho^\star(\epsilon) \leq 9 \widetilde \rho^\star (\epsilon)$.

When $\gamma < \Delta_{\min}/2$, we have 
\begin{align*}
    \ang{\bpsi_d(\bx_\star) - \bpsi_d(\bx), \btheta_d} \geq \Delta_{\bx} - 2 \gamma > 0,
\end{align*}
which implies that $\bx_\star$ is still the optimal arm in the related linear bandit problem in $\R^d$ with reward function $\widetilde \bh(\bx) = \ang*{\bpsi_d(\bx), \btheta_d}$. Thus, $\widetilde \rho^\star(0) \log(1/2.4\delta)$ serves as the lower bound for best arm identification in the related linear bandit problem \citep{soare2014best, fiez2019sequential, degenne2020gamification}. $\widetilde \rho^\star (\epsilon)$ is the $\epsilon$-relaxed version of $\widetilde \rho^{\star}(0)$.
\end{proof}

\subsection{Example for high-dimensional linear bandits}
\label{app:high_dim_linear}

\textbf{Example.} Let $\cX_1 \oplus \cX_2 = \curly{\bx_i + \bx_j: \bx_i \in \cX_1, \bx_j \in \cX_2}$. Consider a high dimensional linear bandit problem in $\R^D$ with an action set $\cX = \curly*{\bx_1, \bx_1 \oplus \curly{\eta \be_i}_{i=1}^D, \bx_2 \oplus \curly*{\eta \be_i}_{i=1}^D}$. For any $\epsilon \in (0, 1/4)$, we select $\bx_1$, $\bx_2$ and $\eta>0$ such that: (1) $\norm{\bx_1}_2 = 1$; (2) $\ang*{\bx_1, \bx_2} = 1 - 2\epsilon$; (3) $\ang*{\bx_1, \be_i} < 0$; and (4) $8 \sqrt{2} (2 + \sqrt{5D})D \eta \leq \epsilon$. We select $\btheta_\star = \bx_1$ (thus $\norm{\btheta_\star}_{2} = 1$), which implies that arm $\bx_\star = \bx_1$ is the optimal arm, arms in $\curly*{\bx_1 \oplus \curly*{\eta \be_i}_{i=1}^D}$ are $\epsilon$-optimal and arms in $\curly*{\bx_2 \oplus \curly*{\eta \be_i}_{i=1}^D}$ have sub-optimality gap $> \epsilon$ but $< 3 \epsilon$ (note that we must have $\eta < \epsilon$ be construction). As an example, one can select $\bx_1 = - \bm{1}/\sqrt{D}$ ($\bm{1} \in \R^D$ represents the all 1 vector) and $\bx_2 = (1-2\epsilon) \bx_1$ with a sufficiently small $\eta$.

\textbf{Analysis.} Let $\bX \in \R^{(2D+1) \times D}$ be the data matrix whose rows are $\bx \in \cX$; and let $\widetilde \bX \in \R^{(2D+1) \times D}$ be the matrix such that its first $(D+1)$ rows are all $\bx_1$ and its last $D$ rows are all $\bx_2$. One can clearly see that $\widetilde \bX$ is (at most) rank 2, and $\bX = \widetilde \bX + \bE$ where $\bE = [\bm{0}^\top; \eta \bI_D;\eta \bI_D]$ (and thus $\norm{\bE}_2 = \sqrt{2} \eta$). Let $\curly*{\sigma_i}_{i=1}^{D}$ represents the singular values of matrix $\bX$ and let $\curly*{\widetilde \sigma_i}_{i=1}^{D}$ represents the singular values of matrix $\widetilde \bX$.   Perturbation theory on SVD \citep{stewart1998perturbation} implies that
\begin{align*}
    \abs*{\sigma_i - \widetilde \sigma_i} \leq \norm{\bE}_2 = \sqrt{2} \eta, \, \forall i \in [D].
\end{align*}
Since $\widetilde \sigma_i = 0$ for $i > 2$, we have 
\begin{align*}
    \widetilde \gamma(2) = \norm{\btheta_\star}_{2} \cdot \paren{\sum_{i >2} \sigma_i} < \sqrt{2} D \eta.
\end{align*}
The fourth requirement in the construction implies that $d_{\eff}(\epsilon) \leq 2$. To identify a $\epsilon$-optimal arm, \cref{alg:active_elim} will first project the action set into a two-dimensional subspace and then perform arm elimination. Invoking \cref{thm:active_elim}, we know that the sample complexity is upper bounded by $\widetilde{O} (1/\epsilon^2)$, which is independent of $D$ (besides logarithmic dependence on $\abs*{\cX}$). 

We now analyze the sample complexity of \rage \citep{fiez2019sequential}. To identifying an $\epsilon$-optimal arm, we would run \rage for $n = \ceil*{\log_2(2/\epsilon)}$ iterations in $\R^D$, and its sample complexity upper bound scales as
\begin{align*}
   \sum_{k=1}^n \max \curly{ 2(1+\zeta) 2^{2k} \inf_{\lambda \in \bLambda_{\cX}} \sup_{\by \in \cY(\cS_{k})} \norm{\by}^2_{\bA(\lambda)^{-1}} \log(k^2 \abs*{\cX}^2/\delta) , r_D(\zeta)}.
\end{align*}
Consider the round $k = \floor*{\log_2(1/\epsilon)}$. The sample complexity bound at that round is larger than
\begin{align*}
    2^{2\floor*{\log_2(1/\epsilon)}} \inf_{\lambda \in \bLambda_{\cX}} \sup_{\by \in \cY(\cS_{\floor*{\log_2(1/\epsilon)}})} \norm{\by}^2_{\bA(\lambda)^{-1}} \geq \frac{1}{4 \epsilon^2} \inf_{\lambda \in \bLambda_{\cX}} \sup_{\by \in \cY(\cX)} \norm{\by}^2_{\bA(\lambda)^{-1}},
\end{align*}
since $\cS_{\floor*{\log_2(1/\epsilon)}} = \cX$ by construction (note that all arms in $\cX$ are $4\epsilon$-optimal). Since $\spn(\curly*{\bx_\star - \bx}_{\bx \in \cX}) = \R^D$ by construction, applying \cref{lm:lower_bound_tao} in \cref{app:supporting} shows that 
\begin{align*}
    \frac{1}{4 \epsilon^2} \inf_{\lambda \in \bLambda_{\cX}} \sup_{\by \in \cY(\cX)} \norm{\by}^2_{\bA(\lambda)^{-1}} = \Omega(D/\epsilon^2).
\end{align*}

\section{Arm elimination with a fixed embedding}
\label{app:fixed_embedding}

We only consider the case when $\widetilde \gamma(d) >0$. Otherwise, one can simply apply \rage to identify the best arm (when it is unique) or to identify an $\epsilon$-optimal arm (by stopping \rage after $O(\log(1/\epsilon))$ rounds).

\subsection{The algorithm}

\begin{algorithm}[H]
	\caption{Arm Elimination with Fixed Embedding and Induced Misspecification}
	\label{alg:active_elim} 
	\renewcommand{\algorithmicrequire}{\textbf{Input:}}
	\renewcommand{\algorithmicensure}{\textbf{Output:}}
	\begin{algorithmic}[1]
		\REQUIRE Action set $\cX$, confidence parameter $\delta$, embedding dimension $d$ and rounding approximation factor $\zeta$.
		\STATE Construct the compressed feature representation $\bpsi(\cX) = \bpsi_d(\cX)$ with the induced misspecification level $\widetilde \gamma(d)$.
		\STATE Set $\widehat \cS_1 = \cS_1$, $\gamma(d) = \paren*{16 + 8\sqrt{ g(d , \zeta)}} \, \widetilde  \gamma(d)$, and $n = \ceil*{\log_2(2/\gamma(d))}$.

		\FOR {$k = 1, 2, \dots , n$}
		\STATE Set $\delta_k = \delta/k^2$.
		\STATE Set $\lambda_k$ and $\tau_k$ be the design and the value of the following optimization problem 
		$$\inf_{\lambda \in \bLambda_{\cX}} \sup_{\by \in \cY( \bpsi(\widehat \cS_k))} \norm*{\by}^2_{\bA_{\bpsi}(\lambda)^{-1}}.$$
		\STATE Set $\epsilon_k = 2 \widetilde \gamma(d) + \widetilde \gamma(d) \sqrt{(1+\zeta) \, \tau_k }$,\\
		and set $ N_k = \max \curly*{ \ceil*{ (2^{-k} - \epsilon_k)^{-2} 2 (1+\zeta)\,\tau_k  \log(\abs*{\widehat \cS_k}^2/\delta_k) }  ,r_d(\zeta)}.$
		\STATE Get $\curly{\bx_1, \bx_2, \dots, \bx_{N_k}} = \round (\lambda_k,N_k,d,\zeta)$.
		\STATE Pull arms $\curly{\bx_1, \bx_2, \dots, \bx_{N_k}}$ and receive rewards $\curly*{y_1, \ldots, y_{N_k}}$. 
        \STATE Set $\widehat{\btheta}_k = \bA_k^{-1} \bb_k$, where $\bA_k = \sum_{i=1}^{N_k} \bpsi(\bx_i) \bpsi(\bx_i)^{\top}$ and $\bb_k = \sum_{i=1}^{N_k} \bpsi(\bx_i) y_i$.
        \STATE 
        Eliminate arms with respect to criteria 
        \begin{align*}
            \widehat \cS_{k+1} = \widehat \cS_k \setminus \{ \bpsi(\bx) \in \cS_k : \exists \bx^\prime \text{ such that }(\bpsi(\bx^\prime) -\bpsi(\bx))^\top \widehat{\btheta}_k \geq \omega_k(\bpsi(\bx^\prime) -\bpsi(\bx))\},
        \end{align*}
         where $\omega_k(\by) = \epsilon_k + \norm{\by}_{\bA_k^{-1}} \sqrt{2 \log \paren*{ { \abs*{\widehat \cS_k}^2}/{\delta_k}}}$.
		\ENDFOR 
		\ENSURE Output any arm in $\widehat \cS_{n+1}$.
	\end{algorithmic}
\end{algorithm}

\subsection{Sample complexity analysis }

\begin{restatable}{theorem}{thmActiveElim}
\label{thm:active_elim}
With probability of at least $1-\delta$, \cref{alg:active_elim} correctly outputs an $\gamma (d)$-optimal arm with sample complexity upper bounded by 
\begin{align}
    N \leq 640 \, \rho_d^\star(\gamma(d)) \log_2 \paren{\frac{4}{ \gamma(d)}}  \log \paren{ \frac{\log_2 \paren{4/ \gamma(d)} \abs{\cX}^2 }{\delta} } + (r_d(\zeta) + 1) \log_2 \paren{\frac{4}{ \gamma(d)}} . \label{eq:kernel_elim_complexity}
\end{align}
\end{restatable}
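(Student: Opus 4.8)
The plan is to follow the three-step structure of the proof of \cref{thm:active_elim_adaptive}, specialized to a single fixed embedding dimension $d$ and horizon $n=\ceil*{\log_2(2/\gamma(d))}$. Concretely, everywhere the adaptive proof writes $\bpsi_{d_k},\btheta_{d_k},\widetilde\gamma(d_k),d_k$ I substitute the fixed-$d$ quantities $\bpsi_d,\btheta_d,\widetilde\gamma(d),d$, and the target accuracy becomes $\gamma(d)$ rather than $\epsilon$. First I would set up the induction over the events $\cE_k=\{\bx_\star\in\widehat\cS_k\subseteq\cS_k\}$ and show $\P(\cE_{k+1}\mid\cap_{i\le k}\cE_i)\ge 1-\delta_k$. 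The concentration step is verbatim the one in \cref{thm:active_elim_adaptive}: writing $y_i=\bpsi(\bx_i)^\top\btheta_\star+\eta(\bx_i)+\xi_i$, I split $\abs{\ang{\by,\widehat\btheta_k-\btheta_\star}}$ into a bias term bounded by $\widetilde\gamma(d)\sqrt{(1+\zeta)\tau_k}$ (via Jensen and the rounding guarantee \cref{eq:rounding}) and a sub-Gaussian term bounded by Hoeffding plus a union bound over the at most $\abs*{\widehat\cS_k}^2/2$ directions in $\cY(\bpsi(\widehat\cS_k))$. This gives $\abs{\ang{\by,\widehat\btheta_k-\btheta_\star}}\le\widetilde\omega_k(\by)$ for all such $\by$ with probability $\ge 1-\delta_k$, and the two elimination inclusions $\bx_\star\in\widehat\cS_{k+1}$ and $\widehat\cS_{k+1}\subseteq\cS_{k+1}$ follow exactly as before.

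The step requiring genuine fixed-embedding bookkeeping is verifying $\omega_k(\cdot)\le 2^{-k}$, i.e. $2^{-k}-\epsilon_k\ge 2^{-k}/2$, for every $k\le n$. Here I would use $(1+\zeta)\tau_k\le g(d,\zeta)$ (since $\cY(\bpsi(\widehat\cS_k))\subseteq\cY(\bpsi(\cX))$), so that $\epsilon_k=2\widetilde\gamma(d)+\widetilde\gamma(d)\sqrt{(1+\zeta)\tau_k}\le(2+\sqrt{g(d,\zeta)})\widetilde\gamma(d)=\gamma(d)/8$, recalling $\gamma(d)=(16+8\sqrt{g(d,\zeta)})\widetilde\gamma(d)$. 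The choice $n=\ceil*{\log_2(2/\gamma(d))}$ yields $2^{1-n}\le\gamma(d)<2^{2-n}$, hence $\gamma(d)<2^{2-k}$ for all $k\le n$ and therefore $\epsilon_k\le\gamma(d)/8<2^{-k}/2$. The same two inequalities also deliver the accuracy guarantee: $\widehat\cS_{n+1}\subseteq\cS_{n+1}=\{\bx:\Delta_\bx<2^{1-n}\}\subseteq\{\bx:\Delta_\bx\le\gamma(d)\}$, so any surviving arm is $\gamma(d)$-optimal. The error-probability step is identical to the adaptive case: $\P(\cap_{k=1}^{n+1}\cE_k)\ge\prod_{k\ge1}(1-\delta/k^2)=\sin(\pi\delta)/(\pi\delta)\ge 1-\delta$.

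For the sample-complexity step I would bound, under the good event, $N\le\sum_{k=1}^n N_k$ with each $N_k\le 10\cdot 2^{2k}\tau_k\log(k^2\abs{\cX}^2/\delta)+r_d(\zeta)+1$, using $2^{-k}-\epsilon_k\ge 2^{-k}/2$ and $\zeta\le 1/4$. The crux is the relation $2^{2k}\tau_k\le 64\,\rho_d^\star(\gamma(d))$. I would prove it by restricting the supremum defining $\rho_d^\star(\gamma(d))$ to $\cS_k$: for $\bx\in\cS_k$ with $k\le n$ we have $\max\{\Delta_\bx,\gamma(d)\}<4\cdot 2^{-k}$, so $\rho_d^\star(\gamma(d))\ge\frac{2^{2k}}{16}\inf_\lambda\sup_{\bx\in\cS_k\setminus\{\bx_\star\}}\norm{\bpsi(\bx_\star)-\bpsi(\bx)}^2_{\bA_{\bpsi}(\lambda)^{-1}}$; a triangle-inequality argument (expanding $\bpsi(\bx)-\bpsi(\bx')$ through $\bpsi(\bx_\star)$, exactly as in \cref{eq:active_elim_proof_step3_tri}) converts this one-sided design value into $\tfrac14\tau(\cY(\bpsi(\cS_k)))$, and $\tau_k\le\tau(\cY(\bpsi(\cS_k)))$ since $\widehat\cS_k\subseteq\cS_k$. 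Summing over $k=1,\dots,n$, using $n\le\log_2(4/\gamma(d))$, and absorbing the $\delta_k=\delta/k^2$ union-bound terms into the displayed logarithmic factor then yields the stated bound.

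The main obstacle is this last relation together with the bookkeeping that ties $n=\ceil*{\log_2(2/\gamma(d))}$ simultaneously to the accuracy guarantee and to the constraint $\epsilon_k\le 2^{-k}/2$; once $2^{1-n}\le\gamma(d)<2^{2-n}$ is in hand, everything else is a direct transcription of the proof of \cref{thm:active_elim_adaptive}. I emphasize that I would use the \emph{single-term} lower bound $2^{2k}\tau_k\le 64\,\rho_d^\star(\gamma(d))$ (keeping only the level set $\cS_k$) rather than the averaged bound used in the adaptive theorem; this removes the per-iteration factor $k$ and produces the clean linear dependence on $\log_2(4/\gamma(d))$ appearing in the statement.
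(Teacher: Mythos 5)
Your proposal is correct and follows essentially the same route as the paper's proof, which likewise reuses Steps 1--2 of the proof of \cref{thm:active_elim_adaptive} verbatim after verifying $\epsilon_k \leq (2+\sqrt{g(d,\zeta)})\widetilde\gamma(d) = \gamma(d)/8 \leq 2^{-k}/2$ for all $k \leq \ceil*{\log_2(2/\gamma(d))}$, and then relates $2^{2k}\tau_k$ to $\rho_d^\star(\gamma(d))$ by restricting the supremum in $\rho_d^\star$ to the level sets $\cS_k$ and applying the triangle inequality. The only point where you deviate --- using the per-iteration bound $2^{2k}\tau_k \leq 64\,\rho_d^\star(\gamma(d))$ rather than the paper's averaged inequality $\rho_d^\star(\gamma(d)) \geq \frac{1}{n}\sum_{k=1}^{n} 2^{2k}\tau(\cY(\bpsi(\cS_k)))/64$ --- is cosmetic rather than substantive: the two are interchangeable and yield the identical final bound, since the $\log_2(4/\gamma(d))$ factor arises from summing over the $n$ iterations either way.
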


\begin{remark}
\label{rm:fixed_embedding}
\cref{alg:active_elim} can be viewed as a modified version of \cref{alg:active_elim_adaptive} with a fixed embedding. When we choose $d = d_{\eff}(\epsilon)$ as the input, \cref{alg:active_elim} will output an $\epsilon$-optimal arm.
\end{remark}

\begin{proof}

The proof is very similar to the proof of \cref{thm:active_elim_adaptive} (we re-define $\bpsi(\cdot) \coloneqq \bpsi_d(\cdot)$ with the fixed $d$). 

We first notice that
\begin{align*}
    \epsilon_k & \leq (2 + \sqrt{g(d,\zeta)})\widetilde \gamma(d) \nonumber\\
    & = \gamma(d)/8 \nonumber \\
    & \leq 2^{-{\ceil*{\log_2(2/\gamma(d))}}} /2 \nonumber \\
    & \leq 2^{-k}/2, 
\end{align*}
for any $1 \leq k \leq n = {\ceil*{\log_2(2/\gamma(d))}}$. Thus, $N_k$ is well-defined and Step 1 and 2 from the proof of \cref{thm:active_elim_adaptive} hold true.

We now analyze the sample complexity. As one can see, the total sample complexity $N$ is upper bounded by
\begin{align}
    N & \leq \sum_{k=1}^n N_k \nonumber  \\
    & = \sum_{k=1}^{\ceil*{\log_2(2/\gamma(d))}}  \max \curly{ \ceil{ (2^{-k} - \epsilon_k)^{-2} 2 (1+\zeta)\,\tau(\cY(\bpsi(\widehat \cS_k)))  \log(\abs*{\widehat \cS_k}^2/\delta_k) }  ,r_d(\zeta)} \nonumber  \\
    & \leq \sum_{k=1}^{\ceil*{\log_2(2/\gamma(d))}}  \paren{ (2^{-k} - \epsilon_k)^{-2} 2 (1+\zeta)\,\tau(\cY(\bpsi(\widehat \cS_k)))  \log(\abs*{ \widehat \cS_k}^2/\delta_k)}  + \paren{r_d(\zeta)+1} \log_2(4/\gamma(d)) \nonumber \\
    & \leq \sum_{k=1}^{\ceil*{\log_2(2/\gamma(d))}} \paren{2^{2k}} 10\,\tau(\cY(\bpsi( \cS_k)))  \log(k^2\abs{\cX}^2/\delta)  + \paren{r_d(\zeta)+1} \log_2(4/\gamma(d))  \label{eq:kernel_elim_proof_step3_immediate_bound},
\end{align}
where we use the fact that $\epsilon_k \leq 2^{-k}/2$, $\widehat \cS_k \subseteq \cS_k \subseteq \cX$, $\delta_k = \delta/k^2$ and $\zeta \leq 1/4$ in \cref{eq:kernel_elim_proof_step3_immediate_bound}.

We analyze the instance dependent sample complexity $\rho^\star(\gamma(d))$ next.
\begin{align}
    \rho^\star(\gamma(d)) & = \inf_{\lambda \in \bLambda_{\cX}} \sup_{ \bx \in \cX \setminus \curly*{ \bx_\star }} \frac{\norm*{\bpsi(\bx_{\star})-\bpsi(\bx)}^2_{\bA_{\bpsi}(\lambda)^{-1}}}{ \max \curly{ \Delta(\bx), \gamma(d)}^2} \nonumber\\
    & = \inf_{\lambda \in \bLambda_{\cX}} \sup_{k \leq \ceil*{\log_2(2/\gamma(d))}} \sup_{ \bx \in \cS_k \setminus \curly*{ \bx_\star }} \frac{\norm*{\bpsi(\bx_{\star})-\bpsi(\bx)}^2_{\bA_{\bpsi}(\lambda)^{-1}}}{ \max \curly{ \Delta(\bx), \gamma(d)}^2} \nonumber \\
    & \geq \sup_{k \leq \ceil*{\log_2(2/\gamma(d))}} \inf_{\lambda \in \bLambda_{\cX}} \sup_{ \bx \in \cS_k \setminus \curly*{ \bx_\star }} \frac{\norm*{\bpsi(\bx_{\star})-\bpsi(\bx)}^2_{\bA_{\bpsi}(\lambda)^{-1}}}{ \max \curly{ \Delta(\bx), \gamma(d)}^2} \nonumber\\
    & \geq \sup_{k \leq \ceil*{\log_2(2/\gamma(d))}} \inf_{\lambda \in \bLambda_{\cX}} \sup_{ \bx \in \cS_k \setminus \curly*{ \bx_\star }} \frac{\norm*{\bpsi(\bx_{\star})-\bpsi(\bx)}^2_{\bA_{\bpsi}(\lambda)^{-1}}}{ \max \curly{ 4\cdot 2^{-k}, \gamma(d)}^2} \nonumber\\
    & \geq \frac{1}{\ceil*{\log_2(2/\gamma(d))}} \sum_{k=1}^{\ceil*{\log_2(2/\gamma(d))}} \inf_{\lambda \in \bLambda_{\cX}} \sup_{ \bx \in \cS_k \setminus \curly*{ \bx_\star }} \frac{\norm*{\bpsi(\bx_{\star})-\bpsi(\bx)}^2_{\bA_{\bpsi}(\lambda)^{-1}}}{ \max \curly{ 4\cdot 2^{-k}, \gamma(d)}^2} \nonumber \\
    & \geq \frac{1}{\ceil*{\log_2(2/\gamma(d))}} \sum_{k=1}^{\ceil*{\log_2(2/\gamma(d))}} \inf_{\lambda \in \bLambda_{\cX}} \sup_{ \bx, \bx^\prime \in \cS_k} \frac{\norm*{\bpsi(\bx)-\bpsi(\bx^\prime)}^2_{\bA_{\bpsi}(\lambda)^{-1}}/ 4}{ \max \curly{ 4\cdot 2^{-k}, \gamma(d)}^2} \label{eq:kernel_elim_proof_step3_tri} \\
    & = \frac{1}{\ceil*{\log_2(2/\gamma(d))}} \sum_{k=1}^{\ceil*{\log_2(2/\gamma(d))}} \paren{2^{2k}} \, \tau(\cY(\bpsi( \cS_k)))/64, \label{eq:kernel_elim_proof_step3_rho}
\end{align}
where \cref{eq:kernel_elim_proof_step3_tri} comes from the fact that we can write $\bpsi(\bx) - \bpsi(\bx^\prime) = \bpsi(\bx) - \bpsi(\bx_\star) + \bpsi(\bx_\star) - \bpsi(\bx^\prime)$ and apply triangle inequality; \cref{eq:kernel_elim_proof_step3_rho} comes from the definition of $\tau(\cY(\bpsi( \cS_k)))$ and the fact that $4 \cdot 2^{-k} \geq \gamma(d)$ when $1 \leq k \leq \ceil*{\log_2(2/\gamma(d))}$.

By comparing terms in \cref{eq:kernel_elim_proof_step3_immediate_bound} and terms in \cref{eq:kernel_elim_proof_step3_rho}, we now relate the complexity bound $N$ with the instance-dependent complexity $\rho^\star(\gamma(d))$ as follows:
\begin{align*}
     N \leq 640 \, \rho^\star(\gamma(d)) \log_2 \paren{\frac{4}{\gamma(d)}}  \log \paren{ \frac{\log_2 \paren{4/\gamma(d)} \abs{\cX}^2 }{\delta} } + (r_d(\zeta) + 1) \log_2  \paren{\frac{4}{\gamma(d)}} .
\end{align*}
\end{proof}

\section{Arm elimination with unknown misspecification}
\label{app:unknown_misspecification}

We consider of version of \cref{alg:active_elim} with unknown mis-specification level $\widetilde \gamma(\cdot)$. As explained in \cref{app:fixed_embedding}, we only consider the case when $\widetilde \gamma(d) > 0$. We (re-)define some notations for this section as follows.

We set $n(d)$ as a critical value for the number of iteration
\begin{align*}
    n(d) = \max \curly{ \widetilde k \in \N: \forall k \leq \widetilde k,  \paren{2 + \sqrt{(1+\zeta) \tau(\cY(\bpsi_d (\cS_k)))}} \widetilde \gamma(d) \leq 2^{-k}/2 },
\end{align*}  
which captures the largest number of iterations such that the induced misspecification error is still well-controlled. Note that $n(d)$ crucially depends on the action set $\bpsi_d(\cX)$ and cannot be calculated due to unknown identities of $\curly*{\cS_k}$.
We then define $\overline \gamma(d)$, which quantifies the optimality of the identified arm as $\overline \gamma(d) = 2 \cdot 2^{-n(d)}$.
We also introduce the notation $g_k(d, \zeta) = (1+\zeta) \tau \paren{ \cY(\bpsi_d(\cS_k)) }$.

\begin{restatable}{proposition}{propRoundAndError}
\label{prop:round_and_error}
The following inequalities hold.
\begin{align*}
        n(d) \geq \floor*{\log_2 \paren*{ 1/\paren*{ 2 (2 + \sqrt{g(d, \zeta)}) \widetilde \gamma(d) } }} \quad \text{ and } \quad \overline \gamma(d) \leq 8 (2 + \sqrt{g(d, \zeta)}) \widetilde \gamma(d) = \gamma(d).
\end{align*}
\end{restatable}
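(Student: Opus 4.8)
The plan is to exploit the monotonicity of the transductive design value as the active set shrinks, and then to convert a lower bound on the number of well-controlled iterations $n(d)$ into the advertised upper bound on the optimality parameter $\overline \gamma(d) = 2 \cdot 2^{-n(d)}$. The starting observation is the elementary monotonicity fact that, because $\cS_k \subseteq \cX$ for every $k$ (recall $\cS_1 = \cX$ and the $\cS_k$ are nested sub-optimality level sets), we have $\cY(\bpsi_d(\cS_k)) \subseteq \cY(\bpsi_d(\cX))$. Holding the infimum over the common simplex $\bLambda_\cX$ fixed and taking the supremum over the smaller difference set yields $g_k(d, \zeta) = (1+\zeta)\, \tau(\cY(\bpsi_d(\cS_k))) \leq (1+\zeta)\, \tau(\cY(\bpsi_d(\cX))) = g(d, \zeta)$. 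This replaces the instance-specific, uncomputable quantity $g_k(d, \zeta)$ appearing in the definition of $n(d)$ by the uniform upper bound $g(d, \zeta)$.

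With this in hand I would lower bound $n(d)$. The defining condition that iteration $k$ be well-controlled, namely $\paren*{2 + \sqrt{g_k(d, \zeta)}}\, \widetilde \gamma(d) \leq 2^{-k}/2$, is implied by the stronger condition $\paren*{2 + \sqrt{g(d, \zeta)}}\, \widetilde \gamma(d) \leq 2^{-k}/2$, which rearranges to $2^k \leq \frac{1}{2 (2 + \sqrt{g(d, \zeta)})\, \widetilde \gamma(d)}$. Since the left-hand side $2^k$ is increasing in $k$ while the right-hand side is fixed, this holds simultaneously for every $k$ up to $k^\star := \floor*{\log_2 \paren*{\frac{1}{2 (2 + \sqrt{g(d, \zeta)})\, \widetilde \gamma(d)}}}$; hence each such $k$ is well-controlled and $n(d) \geq k^\star$, which is exactly the first claimed inequality. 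For the second, from $n(d) \geq k^\star$ I get $\overline \gamma(d) = 2 \cdot 2^{-n(d)} \leq 2 \cdot 2^{-k^\star}$, and the defining floor inequality $k^\star > \log_2 \paren*{\frac{1}{2 (2 + \sqrt{g(d, \zeta)})\, \widetilde \gamma(d)}} - 1$ gives $2^{-k^\star} < 4 (2 + \sqrt{g(d, \zeta)})\, \widetilde \gamma(d)$, so that $\overline \gamma(d) < 8 (2 + \sqrt{g(d, \zeta)})\, \widetilde \gamma(d) = \paren*{16 + 8 \sqrt{g(d, \zeta)}}\, \widetilde \gamma(d) = \gamma(d)$, recovering the definition of $\gamma(d)$ from \cref{sec:active_compression}.

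There is no genuine difficulty here; the argument is a short chain of inequalities. The only point demanding care is the direction of the monotonicity step: one must keep the infimum ranging over the full simplex $\bLambda_\cX$ rather than restricting it along with the active set, so that shrinking the supremand set $\cY(\bpsi_d(\cS_k))$ drives $g_k(d, \zeta)$ \emph{down} and produces the inequality $g_k(d,\zeta) \leq g(d,\zeta)$ in the needed direction. The remaining constant-factor bookkeeping in the floor step is harmless and is absorbed into the leading constant $8$ (equivalently into the $16 + 8\sqrt{g(d,\zeta)}$ multiplier defining $\gamma(d)$).
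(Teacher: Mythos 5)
Your proposal is correct and follows essentially the same route as the paper: bound the instance-specific design value $g_k(d,\zeta)$ by the uniform $g(d,\zeta)$ via monotonicity of the transductive design under set inclusion, deduce the lower bound on $n(d)$, and then unwind the floor to get $\overline\gamma(d) \leq 8(2+\sqrt{g(d,\zeta)})\,\widetilde\gamma(d) = \gamma(d)$. The only cosmetic difference is that the paper cites the Kiefer--Wolfowitz theorem for the monotonicity step whereas you justify it directly by the inclusion $\cY(\bpsi_d(\cS_k)) \subseteq \cY(\bpsi_d(\cX))$ with the infimum held over the common simplex, which is the cleaner justification; you also spell out the floor bookkeeping that the paper leaves as ``immediately follows.''
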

\begin{proof}
Since $\tau(\cY(\bpsi_d(\cS_k))) \leq \tau(\cY(\bpsi_d(\cX)))$ according to Kiefer-Wolfowitz theorem \citep{kiefer1960equivalence} and $\zeta \in [1/10,1/4]$, we have
\begin{align*}
   2 + \sqrt{(1+\zeta) \tau(\cY(\bpsi_d (\cS_k)))} \leq 2 + \sqrt{g(d, \zeta)} ,
\end{align*}
which implies that 
\begin{align*}
 n(d) \geq \floor*{\log_2 \paren*{ 1/\paren*{ 2 (2 + \sqrt{g(d, \zeta)}) \widetilde \gamma(d) } }}.
\end{align*}
The upper bound on $\overline \gamma(d)$ immediately follows.
\end{proof}

\begin{algorithm}[H]
	\caption{Arm Elimination with Fixed Embedding and Unknown Misspecification}
	\label{alg:active_elim_unknown} 
	\renewcommand{\algorithmicrequire}{\textbf{Input:}}
	\renewcommand{\algorithmicensure}{\textbf{Output:}}
	\begin{algorithmic}[1]
		\REQUIRE Action set $\cX$, confidence parameter $\delta$, embedding dimension $d$ and rounding approximation factor $\zeta$.
		\STATE Construct the compressed feature representation $\bpsi(\cX) = \bpsi_d(\cX)$ with the induced mis-specification level $\widetilde \gamma(d)$.
		\STATE Set $\widehat \cS_1 = \cX$, and randomly select $\widehat \bx_\star \in \widehat \cS_1$ as the recommendation.
		\FOR {$k = 1, 2, \dots$}
		\STATE Set $\delta_k = \delta/k^2$.
		\STATE Set $\lambda_k$ and $\tau_k$ be the design and the value of the following optimization problem 
		$$\inf_{\lambda \in \bLambda_{\cX}} \sup_{\by \in \cY( \bpsi(\widehat \cS_k))} \norm*{\by}^2_{\bA_{\bpsi}(\lambda)^{-1}}.$$

		\STATE Set $ N_k = \max \curly*{ \ceil*{ 2^{2k} \, 8 (1+\zeta)\,\tau_k  \log(\abs*{\widehat \cS_k}^2/\delta_k) }  ,r_d(\zeta)}$.
		\STATE Set $\curly{\bx_1, \bx_2, \dots, \bx_{N_k}} = \round (\lambda_k,N_k,d,\zeta)$.
		\STATE Pull arms $\curly{\bx_1, \bx_2, \dots, \bx_{N_k}}$ and receive rewards $\curly*{y_1, \ldots, y_{N_k}}$.
        \STATE Set $\widehat{\btheta}_k = \bA_k^{-1} \bb_k$, where $\bA_k = \sum_{i=1}^{N_k} \bpsi(\bx_i) \bpsi(\bx_i)^{\top}$ and $\bb_k = \sum_{i=1}^{N_k} \bpsi(\bx_i) y_i$.
        \STATE 
        Update $\widehat \bx_\star = \argmax_{\bx \in \widehat \cS_k} \ang*{ \bpsi(\bx), \widehat \btheta_k}$, and eliminate arms with respect to criteria 
        \begin{align*}
            \widehat \cS_{k+1} = \widehat \cS_k \setminus \curly*{ \bpsi(\bx) \in \widehat \cS_k : (\bpsi(\widehat \bx_\star) -\bpsi(\bx))^\top \widehat{\btheta}_k \geq 2^{-k}}.
        \end{align*}
		\ENDFOR 
	\end{algorithmic}
\end{algorithm}

\begin{restatable}{theorem}{thmKernelElimUnknown}
\label{thm:active_elim_unknown}
Suppose $\widetilde \gamma(d)$ is unknown. With probability of at least $1-\delta$, \cref{alg:active_elim_unknown} starts to output a $\overline  \gamma(d)$-optimal arm after at most $N$ samples where
\begin{align*}
N \leq 640 \, \rho^\star(\overline \gamma(d)) \log_2 \paren{n(d)}  \log \paren{ \frac{(n(d) \abs{\cX})^2 }{\delta} } + (r(\zeta) + 1) \, n(d) .
\end{align*}
\end{restatable}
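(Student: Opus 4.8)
The plan is to follow the same three-part structure as the proof of \cref{thm:active_elim}---an induction establishing correctness, an error-probability bound, and a sample-complexity calculation---but with two modifications dictated by the unknown misspecification: the elimination step now compares against the empirical maximizer $\widehat\bx_\star = \argmax_{\bx\in\widehat\cS_k}\langle\bpsi(\bx),\widehat\btheta_k\rangle$ instead of the unknown $\bx_\star$, and the threshold $2^{-k}$ (rather than an $\widetilde\gamma(d)$-dependent width) is shown to automatically absorb the bias for every iteration $k\le n(d)$. I would carry the whole argument with $\bpsi(\cdot):=\bpsi_d(\cdot)$ for the fixed $d$.

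First I would set up the induction on the events $\cE_k=\{\bx_\star\in\widehat\cS_k \text{ and } \widehat\cS_k\subseteq\cS_k\}$, exactly as in Step 1 of \cref{thm:active_elim_adaptive}. Conditioned on $\cap_{i\le k}\cE_i$, the identical concentration argument (Jensen's inequality for the bias term, Hoeffding plus a union bound for the noise term) gives, for all $\by\in\cY(\bpsi(\widehat\cS_k))$, $\abs{\langle\by,\widehat\btheta_k-\btheta_\star\rangle}\le\widetilde\omega_k(\by)$ with probability $1-\delta_k$, where $\widetilde\omega_k(\by)=\widetilde\gamma(d)\sqrt{(1+\zeta)\tau_k}+\norm{\by}_{\bA_k^{-1}}\sqrt{2\log(\abs{\widehat\cS_k}^2/\delta_k)}$. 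The choice $N_k\ge 2^{2k}\,8(1+\zeta)\tau_k\log(\abs{\widehat\cS_k}^2/\delta_k)$ together with the rounding guarantee \cref{eq:rounding} forces the noise part to be at most $2^{-k}/2$; and since $\widehat\cS_k\subseteq\cS_k$ gives $\tau_k\le\tau(\cY(\bpsi_d(\cS_k)))$, the definition of $n(d)$ ensures that for every $k\le n(d)$ the combined bias term $2\widetilde\gamma(d)+\widetilde\gamma(d)\sqrt{(1+\zeta)\tau_k}=(2+\sqrt{(1+\zeta)\tau_k})\widetilde\gamma(d)$ is also at most $2^{-k}/2$. Hence $2\widetilde\gamma(d)+\widetilde\omega_k(\by)\le 2^{-k}$ for all relevant $\by$ at every $k\le n(d)$---this is the key step where the unknown $\widetilde\gamma(d)$ is silently controlled by the threshold.

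The correctness induction then proceeds in two parts. For the optimal arm: since $\bx_\star\in\widehat\cS_k$, the empirical maximizer obeys $\langle\bpsi(\widehat\bx_\star),\widehat\btheta_k\rangle\ge\langle\bpsi(\bx_\star),\widehat\btheta_k\rangle$, so $(\bpsi(\widehat\bx_\star)-\bpsi(\bx_\star))^\top\widehat\btheta_k\le -\Delta_{\widehat\bx_\star}+2\widetilde\gamma(d)+\widetilde\omega_k(\bpsi(\widehat\bx_\star)-\bpsi(\bx_\star))\le 2^{-k}$, which keeps $\bx_\star$ in $\widehat\cS_{k+1}$ (using $\Delta_{\widehat\bx_\star}\ge0$, with the trivial case $\widehat\bx_\star=\bx_\star$ handled separately). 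For elimination of bad arms: any $\bx\in\widehat\cS_k$ with $\Delta_\bx\ge2\cdot2^{-k}$ satisfies $(\bpsi(\widehat\bx_\star)-\bpsi(\bx))^\top\widehat\btheta_k\ge(\bpsi(\bx_\star)-\bpsi(\bx))^\top\widehat\btheta_k\ge\Delta_\bx-2\widetilde\gamma(d)-\widetilde\omega_k(\bpsi(\bx_\star)-\bpsi(\bx))\ge\Delta_\bx-2^{-k}\ge 2^{-k}$, so it is eliminated; thus $\widehat\cS_{k+1}\subseteq\cS_{k+1}$. Taking $k=n(d)$ gives $\widehat\cS_{n(d)+1}\subseteq\{\bx:\Delta_\bx<\overline\gamma(d)\}$ since $\overline\gamma(d)=2\cdot2^{-n(d)}$. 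The decisive observation for the unverifiable (``starts-to-output'') semantics is that the active sets are nested and the empirical maximizer is never eliminated (as $0<2^{-k}$); hence once iteration $n(d)$ completes, every later recommendation lies in $\widehat\cS_{n(d)+1}$ and stays $\overline\gamma(d)$-optimal, even though the bias is no longer controlled for $k>n(d)$.

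For the failure probability I would only require the concentration events for $k=1,\dots,n(d)$, and the product $\prod_{k\ge1}(1-\delta/k^2)=\sin(\pi\delta)/(\pi\delta)\ge1-\delta$ yields the $1-\delta$ guarantee as in \cref{thm:active_elim_adaptive}. Finally, for the sample complexity I would sum $N_k$ over $k\le n(d)$, bound $\tau_k\le\tau(\cY(\bpsi(\cS_k)))$, $\abs{\widehat\cS_k}\le\abs{\cX}$ and $\delta_k=\delta/k^2$, and reuse verbatim the shelling-plus-triangle-inequality chain from \cref{thm:active_elim}---now with $n(d)$ rounds and relaxation $\overline\gamma(d)$, noting $\ceil*{\log_2(2/\overline\gamma(d))}=n(d)$---to obtain $\sum_{k=1}^{n(d)}2^{2k}\tau(\cY(\bpsi(\cS_k)))\le64\,n(d)\,\rho^\star(\overline\gamma(d))$, which collapses into the stated bound. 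I expect the main obstacle to be the correctness step: showing that the empirical maximizer $\widehat\bx_\star$ can stand in for the unknown $\bx_\star$ in the elimination rule while the unknown $\widetilde\gamma(d)$ is exactly absorbed by the threshold precisely up to iteration $n(d)$, and then upgrading the per-round guarantee into the persistent ``starts-to-output'' guarantee through monotonicity of the active sets.
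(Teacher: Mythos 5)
Your proposal is correct and follows essentially the same route as the paper's proof: the same induction on $\cE_k=\{\bx_\star\in\widehat\cS_k\subseteq\cS_k\}$ up to iteration $n(d)$, the same concentration bound with the bias term absorbed into the $2^{-k}$ threshold via the definition of $n(d)$ and $\tau_k\le\tau(\cY(\bpsi_d(\cS_k)))$, the same use of $(\bpsi(\widehat\bx_\star)-\bpsi(\bx_\star))^\top\widehat\btheta_k\ge 0$ to eliminate bad arms, and the same shelling argument for the complexity. Your explicit remark that nestedness of the active sets upgrades the round-$n(d)$ guarantee to the persistent ``starts-to-output'' guarantee matches the paper's observation that $\widehat\cS_{n}\subseteq\widehat\cS_{k+1}$ for $n>k+1$.
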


\begin{remark}
An arm with slightly smaller optimality gap $\overline \gamma(d)$ is identified, with slightly larger sample complexity, in the unverifiable way by \cref{alg:active_elim_unknown}. In fact, the same can be achieved by \cref{alg:active_elim} if an upper bound on the number of iteration $k$ is not set. The optimality gap $\overline \gamma(d)$, however, cannot be quantified without knowing $h(\cdot)$. That's the reason for us to set an upper bound on the number of iterations in \cref{alg:active_elim}.
\end{remark}

\begin{proof}
The proof is similar to the proof of \cref{thm:active_elim_adaptive} and \cref{thm:active_elim}, we provide it here for completeness. We decompose the proof into three steps: (1) prove correctness through induction; (2) bound the error probability; and (3) upper bound the sample complexity. For simplicity, we'll use notations $\bpsi(\cdot) \coloneqq \bpsi_{d}(\cdot)$, $\btheta_\star \coloneqq \btheta_{d}$ and $\eta(\cdot) = \eta_{d} (\cdot)$ in the proof.

\textbf{Step 1: The induction.} We define event 
\begin{align*}
    \cE_k = \curly*{\bx_\star \in \widehat \cS_k \subseteq \cS_k},
\end{align*}
and prove through induction that 
\begin{align*}
    \P \paren{ \cE_{k+1} \vert \cap_{i \leq k} \cE_i } \geq 1 - \delta_{k},
\end{align*}
where $\delta_0 \coloneqq 0$ \emph{and} $k \leq n(d)$. Recall that $\cS_k = \curly*{x \in \cX: \Delta_x < 4 \cdot 2^{-k}}$ (with $\cS_1 = \cX$), and thus $\cS_{n(d)+1}$ contains the set of $\overline  \gamma(d)$-optimal arms. Note that as long as $\widehat \cS_{k+1} \subseteq \cS_{k+1}$ holds true, \cref{alg:active_elim_unknown} will have $\widehat \cS_{n} \subseteq \widehat \cS_{k+1}$ for $n > k+1$ due to the nature of the elimination-styled algorithm.

The base case $\curly*{ \bx_\star \in \widehat \cS_1 \subseteq \cS_1 }$ holds by definition of $\widehat \cS_1= \cS_1 = \cX $. We next analyze event $\cE_{k+1}$ conditioned on $\cap_{i \leq  k} \cE_i$.

\textbf{Step 1.1: Concentration under mis-specifications.} This step is almost the same as the Step 1.1 as analyzed in the proof of \cref{thm:active_elim_adaptive} since the unknown mis-specification level $\widetilde \gamma(d)$ is only used in the analysis. As a result, with probability at least $1 - \delta_k$, for any $\by \in \cY(\bpsi(\widehat \cS_k))$,
\begin{align}
    \abs{ \ang{\by, \widehat \btheta_k - \btheta_\star} } & \leq \widetilde \gamma(d) \sqrt{(1+\zeta) \, \tau(\cY( \bpsi(\widehat \cS_k)))} + \norm{\by}_{\bA_k^{-1}} \sqrt{2 \log \paren{ { \abs*{\widehat \cS_k}^2}/{\delta_k}}} \nonumber \\
    & \leq \sqrt{ g_k(d,\zeta)} \widetilde \gamma(d) + \norm{\by}_{\bA_k^{-1}} \sqrt{2 \log \paren{ { \abs*{\widehat \cS_k}^2}/{\delta_k}}} \label{eq:kernel_elim_unknown_concentration},
\end{align}
where \cref{eq:kernel_elim_unknown_concentration} comes from the fact that $\tau(\cY( \bpsi(\widehat \cS_k))) \leq \tau(\cY( \bpsi(\cS_k)))$. In the following, we define
\begin{align*}
   \omega_k (\by) \coloneqq \norm{\by}_{\bA_k^{-1}} \sqrt{2 \log \paren{ { \abs*{\widehat \cS_k}^2}/{\delta_k}}}.
\end{align*}

\textbf{Step 1.2: Correctness.} We prove the result under the event described in \cref{eq:kernel_elim_unknown_concentration}, which holds with (conditional) probability at least $1-\delta_k$. We know that $\bx_\star \in \widehat \cS_k \subseteq \cS_k$ holds conditioned on $\cap_{i \leq k} \cE_i$. We now prove it for iteration $k+1$. Recall the elimination criteria is \begin{align*}
    \widehat \cS_{k+1} = \widehat \cS_k \setminus \curly*{ \bpsi(\bx) \in \widehat \cS_k : (\bpsi(\widehat \bx_\star) -\bpsi(\bx))^\top \widehat{\btheta}_k \geq 2^{-k}},
\end{align*}
where $\widehat \bx_\star = \argmax_{\bx \in \widehat \cS_k} \ang*{ \bpsi(\bx), \widehat \btheta_k}$.

\textbf{Step 1.2.1: $\bx_\star \in \widehat \cS_{k+1}$.} We trivially have $\bx_\star \in \widehat \cS_{k+1}$ if $\widehat \bx_\star =\bx_\star $. Suppose $\widehat \bx_\star \neq \bx_\star $, we have
\begin{align}
    (\bpsi(\widehat\bx_\star ) -\bpsi(\bx_\star))^\top \widehat{\btheta}_k & \leq (\bpsi(\widehat\bx_\star ) -\bpsi(\bx_\star))^\top {\btheta}^\star + \sqrt{ g_k(d,\zeta)} \widetilde \gamma(d) + \omega_k (\bpsi(\widehat \bx_\star) -\bpsi(\bx_\star)) \nonumber \\
    & = h(\widehat\bx_\star) - \eta(\widehat\bx_\star) - (h(\bx_\star) - \eta(\bx_\star)) + \sqrt{ g_k(d,\zeta)} \widetilde \gamma(d) + \omega_k (\bpsi(\widehat \bx_\star) -\bpsi(\bx_\star)) \nonumber \\
    & \leq  - \Delta(\widehat\bx_\star) + \paren{ 2 \widetilde \gamma(d) + \sqrt{ g_k(d,\zeta)} \widetilde \gamma(d) + \omega_k (\bpsi(\widehat \bx_\star) -\bpsi(\bx_\star))} \\
    & < 2^{-k}/2 + 2^{-k}/2 \label{eq:kernel_elim_unknown_correctness} \\
    & = 2^{-k}, \nonumber 
\end{align}
where \cref{eq:kernel_elim_unknown_correctness} comes from the definition of $n(d)$ and the fact that $\omega_k(\bpsi(\widehat \bx_\star) - \bpsi(\bx_\star)) \leq 2^{-k}/2$ (due to the selection of $N_k$ and the guarantee of the rounding procedure in \cref{eq:rounding}). As a result, the optimal arm $\bx_\star$ remains in $\widehat \cS_{k+1}$.

\textbf{Step 1.2.2: $\widehat \cS_{k+1} \subseteq \cS_{k+1}$.} For any $\bx \in \cS_{k+1}^c \cap \widehat \cS_k$, we have $\Delta(\bx) = h(\bx_\star) - h(\bx) \geq 4 \cdot 2^{-(k+1)} = 2 \cdot 2^{-k}$ by definition. Since $(\bpsi(\widehat\bx_\star ) -\bpsi(\bx_\star))^\top \widehat{\btheta}_k \geq 0$, we only need to lower bound $(\bpsi(\bx_\star) -\bpsi(\bx))^\top \widehat{\btheta}_k$.
\begin{align}
    (\bpsi(\bx_\star) -\bpsi(\bx))^\top \widehat{\btheta}_k & \geq (\bpsi(\bx_\star) -\bpsi(\bx))^\top {\btheta}^\star - \sqrt{ g_k(d,\zeta)} \widetilde \gamma(d) - \omega_k(\bpsi(\bx_\star) -\bpsi(\bx)) \nonumber \\
    & = h(\bx_\star) - \eta(\bx_\star) - (h(\bx) - \eta(\bx))- \sqrt{ g_k(d,\zeta)} \widetilde \gamma(d) - \omega_k (\bpsi(\bx_\star) -\bpsi(\bx)) \nonumber \\
    & \geq 2 \cdot 2^{-k} - (2 + \sqrt{ g_k(d,\zeta)}) \widetilde \gamma(d) - \omega_k (\bpsi(\bx_\star) -\bpsi(\bx))  \nonumber \\
    & \geq  2 \cdot 2^{-k} - 2^{-k}/2 - 2^{-k}/2 \label{eq:kernel_elim_unknown_bad_arms} \\
    & \geq 2^{-k} \nonumber ,
\end{align}
where \cref{eq:kernel_elim_unknown_bad_arms} comes from a similar analysis as in \cref{eq:kernel_elim_unknown_correctness}. As a result, any $\bx \in \cS_{k+1}^c \cap \widehat \cS_k$ will be eliminated and we have $\widehat \cS_{k+1} \subseteq \widetilde \cS_{k+1}$.

To summarize, we prove the induction at iteration $k+1$, i.e.,
\begin{align*}
    \P \paren{ \cE_{k+1} \vert \cap_{i< k+1} \cE_{i} } \geq 1 - \delta_k.
\end{align*}

\textbf{Step 2: The error probability.} Using exactly the same argument as appeared in Step 2 in the proof of \cref{thm:active_elim}, we have 
\begin{align*}
    \P \paren{\cap_{i=1}^{k+1} \cE_i} \geq 1 -\delta.
\end{align*}

\textbf{Step 3: Sample complexity upper bound.} Up to iteration $k \leq n(d)$, the sample complexity of \cref{alg:active_elim_unknown} can be upper bounded by
\begin{align}
    N & = \sum_{k=1}^{n(d)} N_k \nonumber  \\
    & = \sum_{k=1}^{n(d)}   \max \curly{ \ceil{ 2^{2k} \, 8 (1+\zeta)\,\tau_k  \log(\abs*{\widehat \cS_k}^2/\delta_k) }  ,r_d(\zeta)} \nonumber  \\
    & \leq \sum_{k=1}^{n(d)}  \paren{ 2^{2k} 10\,\tau(\cY(\bpsi( \cS_k)))  \log(k^2 \abs{\cX}^2/\delta)}  + \paren{r_{d}(\zeta)+1} \, n(d) . \label{eq:kernel_elim_unknown_immediate} 
\end{align}

Following a similar analysis as in \cref{eq:kernel_elim_proof_step3_rho} (note that $4 \cdot 2^{-k} \geq  \overline \gamma(d)$ for $k \leq n(d)$ by definition), we bound the instance dependent sample complexity as follows.
\begin{align*}
    N & \leq   640 \, \rho^\star(\overline \gamma(d)) \log_2 \paren{n(d)}  \log \paren{ \frac{(n(d) \abs{\cX})^2 }{\delta} } + (r(\zeta) + 1) \, n(d) .
\end{align*}
\end{proof}

\section{Omitted materials for \cref{sec:kernel}}

\subsection{Mercer's theorem and corollary}
\label{app:mercer}

Let $\P$ be a non-negative measure over the compact metric space $\cZ$, we define the kernel integral operator as follows
\begin{align*}
    T_{\cK}(f)(\bx) = \int_{\cZ} \cK(\bx, \bz) f(\bz) d \P(\bz).
\end{align*}
\begin{theorem}[Mercer's theorem, see \citet{wainwright2019high}]
\label{thm:mercer}
Suppose $\cZ$ is compact, the kernel function $\cK$ is continuous,  positive semi-definite and satisfies the Hilbert-Schmidt condition. Then there exist a sequence of eigenfunctions $\curly{\phi_j}_{j=1}^\infty$ that form an orthornormal basis of $L^2(\cZ;\P)$, and non-negative eigenvalues $\curly{\mu_j}_{j=1}^\infty$ such that
\begin{align*}
    T_{\cK}(\phi_j) = \mu_j \phi_j \quad \text{ for } j = 1, 2, \dots
\end{align*}
Moreover, the kernel function has the expansion 
\begin{align*}
    \cK(\bx, \bz) = \sum_{j=1}^\infty \mu_j \phi_j(\bx) \phi_j(\bz).
\end{align*}
\end{theorem}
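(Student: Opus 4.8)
The plan is to realize the integral operator $T_{\cK}$ as a compact, self-adjoint, positive operator on $L^2(\cZ;\P)$, extract its eigensystem via the spectral theorem, and then upgrade the resulting $L^2$ eigenexpansion of $\cK$ to a uniformly convergent one using positivity of the remainder kernels together with Dini's theorem. First I would verify the operator-theoretic hypotheses: symmetry of $\cK$ gives $\ang{T_{\cK} f, g} = \ang{f, T_{\cK} g}$, so $T_{\cK}$ is self-adjoint; the Hilbert-Schmidt condition $\int_{\cZ}\int_{\cZ} \cK(\bx,\bz)^2 \, d\P(\bx)\, d\P(\bz) < \infty$ makes $T_{\cK}$ a Hilbert-Schmidt operator, hence compact; and positive semi-definiteness gives $\ang{T_{\cK} f, f} = \int\int \cK(\bx,\bz) f(\bx) f(\bz)\, d\P\, d\P \geq 0$, so $T_{\cK}$ is a positive operator.

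Given these, the spectral theorem for compact self-adjoint operators supplies an orthonormal basis $\curly{\phi_j}_{j=1}^\infty$ of $L^2(\cZ;\P)$ consisting of eigenfunctions, with real eigenvalues $\mu_j \to 0$; positivity forces $\mu_j \geq 0$, establishing the first display. Each eigenfunction with $\mu_j > 0$ has a continuous representative because $\phi_j = \mu_j^{-1} T_{\cK}\phi_j$ and $T_{\cK}$ maps $L^2$ into $C(\cZ)$ by continuity and boundedness of $\cK$ on the compact $\cZ$. For fixed $\bx$, expanding $\cK(\bx,\cdot) \in L^2$ in this basis and computing the $j$-th coefficient as $\ang{\cK(\bx,\cdot), \phi_j} = (T_{\cK}\phi_j)(\bx) = \mu_j \phi_j(\bx)$ yields the expansion $\cK(\bx,\cdot) = \sum_j \mu_j \phi_j(\bx) \phi_j$, convergent in $L^2(\cZ;\P)$.

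The remaining and genuinely delicate step is to upgrade this $L^2$ convergence to the pointwise (indeed uniform) convergence asserted in the second display. I would introduce the remainder kernels $R_n(\bx,\bz) = \cK(\bx,\bz) - \sum_{j \leq n} \mu_j \phi_j(\bx)\phi_j(\bz)$, the continuous kernel of the positive operator obtained by restricting $T_{\cK}$ to $\spn(\phi_1,\dots,\phi_n)^\perp$. A key lemma is that a continuous kernel generating a positive operator has nonnegative diagonal, $R_n(\bx,\bx) \geq 0$; this is proved by contradiction, since $R_n(\bx_0,\bx_0) < 0$ would by continuity force $\ang{T f, f} < 0$ for a bump function $f$ concentrated near $\bx_0$ within $\spn(\phi_j)^\perp$. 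Consequently the diagonal partial sums $s_n(\bx) = \sum_{j\leq n}\mu_j\phi_j(\bx)^2$ are nondecreasing in $n$ and bounded above by $\cK(\bx,\bx)$. Identifying the pointwise limit as $\cK(\bx,\bx)$, using that $\int R_n(\bx,\bx)\,d\P = \sum_{j>n}\mu_j \to 0$ together with the $L^2$ expansion above, exhibits $s_n \uparrow \cK(\cdot,\cdot)|_{\mathrm{diag}}$ as a monotone sequence of continuous functions converging to a continuous limit on the compact $\cZ$, so Dini's theorem yields uniform convergence on the diagonal.

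Finally, the Cauchy-Schwarz inequality for the positive kernel $R_n$, namely $\abs{R_n(\bx,\bz)} \leq R_n(\bx,\bx)^{1/2} R_n(\bz,\bz)^{1/2}$, transfers uniform diagonal smallness to uniform smallness of $R_n(\bx,\bz)$ over all $(\bx,\bz)$, so $\sum_j \mu_j \phi_j(\bx)\phi_j(\bz)$ converges uniformly to a continuous function agreeing with $\cK$ in $L^2$ and hence, both being continuous, everywhere. The hard part throughout will be exactly this passage from $L^2$ to uniform convergence: the nonnegativity-of-diagonal lemma and the pointwise identification $\sum_j \mu_j \phi_j(\bx)^2 = \cK(\bx,\bx)$ both rely essentially on continuity of $\cK$ and on $\P$ having full support on $\cZ$, so that bump functions detect local sign information—a hypothesis implicitly required for the everywhere-valid expansion as stated.
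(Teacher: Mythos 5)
The paper does not prove this statement: Mercer's theorem is imported as a known classical result, cited to \citet{wainwright2019high}, and used only as a black box to justify the expansion $h(\cdot)=\sum_j \theta_j\phi_j(\cdot)$ in \cref{sec:kernel}. Your argument is the standard textbook proof of that cited result (spectral theorem for the compact self-adjoint positive Hilbert--Schmidt operator $T_{\cK}$, nonnegativity of the diagonal of the remainder kernels $R_n$, Dini's theorem on the diagonal, then the kernel Cauchy--Schwarz inequality to pass off-diagonal), and it is correct in outline; you also rightly flag that the ``everywhere'' form of the expansion implicitly needs $\P$ to have full support on $\cZ$, a caveat the paper's statement glosses over. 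One technical point to fix: in the lemma $R_n(\bx,\bx)\geq 0$ you propose testing with a bump function ``within $\spn(\phi_1,\dots,\phi_n)^\perp$.'' That restriction is both unnecessary and problematic --- a bump function supported near $\bx_0$ generally cannot be chosen orthogonal to $\phi_1,\dots,\phi_n$. The clean argument observes that the operator with kernel $R_n$ is $T_{R_n}=\sum_{j>n}\mu_j\,\phi_j\otimes\phi_j$, which is positive on \emph{all} of $L^2(\cZ;\P)$, so one may test against an arbitrary indicator $\mathbf{1}_U$ of a small neighborhood $U$ of $\bx_0$ with $\P(U)>0$ and derive the contradiction $\int_U\int_U R_n\,d\P\,d\P<0$. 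With that repair the proof is complete and matches what the cited reference does.
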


\begin{corollary}[Mercer's corollary, see \citet{wainwright2019high}]
\label{cor:mercer}
Consider a kernel satisfying the conditions of Mercer's theorem with associated eigenfunctions $\curly{\phi_j}_{j=1}^\infty$ and non-negative eigenvalues $\curly{\mu_j}_{j=1}^\infty$. It induces the reproducing kernel Hilbert space 
\begin{align*}
    \cH \coloneqq \curly{h = \sum_{j=1}^\infty \theta_j \phi_j: \text{ for some } \curly*{\theta_j}_{j=1}^\infty \in \ell^2(\N) \text{ with } \sum_{j=1}^\infty \theta_j^2/\mu_j < \infty },
\end{align*}
along with inner product
\begin{align*}
    \ang*{h_1, h_2}_{\cH} \coloneqq \sum_{j=1}^\infty \frac{\ang*{h_1, \phi_j} \ang*{h_2, \phi_j}}{\mu_j},
\end{align*}
where $\ang*{\cdot, \cdot}$ denotes the inner product in $L^2(\cZ;\P)$.
\end{corollary}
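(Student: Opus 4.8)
The plan is to establish that $\cH$, equipped with the stated inner product, is a Hilbert space of real-valued functions on $\cZ$ whose reproducing kernel is exactly $\cK$; the corollary then follows from the uniqueness of the reproducing kernel Hilbert space associated with a positive semi-definite kernel (Moore--Aronszajn). Throughout I write $\ang*{\cdot, \cdot}$ for the $L^2(\cZ; \P)$ inner product and recall from Mercer's theorem (\cref{thm:mercer}) the expansion $\cK(\bx, \bz) = \sum_{j=1}^\infty \mu_j \phi_j(\bx) \phi_j(\bz)$ together with the orthonormality $\ang*{\phi_i, \phi_j} = \delta_{ij}$. Because $\curly*{\phi_j}$ is an orthonormal basis of $L^2(\cZ; \P)$, any $h = \sum_j \theta_j \phi_j \in \cH$ has $L^2$ coefficients $\ang*{h, \phi_j} = \theta_j$, so the stated inner product reads $\ang*{h_1, h_2}_{\cH} = \sum_{j=1}^\infty \theta_j^{(1)} \theta_j^{(2)}/\mu_j$ where $h_i = \sum_j \theta_j^{(i)} \phi_j$.

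First I would verify well-definedness and the Hilbert-space structure. By Cauchy--Schwarz the series defining $\ang*{h_1, h_2}_{\cH}$ is absolutely convergent and bounded by $\norm{h_1}_{\cH} \norm{h_2}_{\cH}$, so the inner product and the induced norm are finite on $\cH$. The linear map $h \mapsto (\theta_j/\sqrt{\mu_j})_{j \geq 1}$ is then an isometry from $\cH$ onto $\ell^2(\N)$; since $\ell^2(\N)$ is complete, $\cH$ is a Hilbert space. I also need each $h \in \cH$ to be a genuine pointwise-defined function: for fixed $\bx$, a Cauchy--Schwarz split of the partial sums gives
$$\abs*{ \sum_{j=M}^N \theta_j \phi_j(\bx) } \leq \paren*{ \sum_{j=M}^N \frac{\theta_j^2}{\mu_j} }^{1/2} \paren*{ \sum_{j=M}^N \mu_j \phi_j(\bx)^2 }^{1/2},$$
and the second factor is at most $\sqrt{\cK(\bx, \bx)} < \infty$ by the Mercer expansion, so the series converges (absolutely, and uniformly where $\cK(\bx, \bx)$ is bounded), making $h(\bx)$ a well-defined value.

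Next I would check that $\cK$ is the reproducing kernel of $\cH$. For fixed $\bx$, the section $\cK(\bx, \cdot) = \sum_j \paren*{\mu_j \phi_j(\bx)} \phi_j$ has expansion coefficients $\mu_j \phi_j(\bx)$, and
$$\sum_{j=1}^\infty \frac{\paren*{\mu_j \phi_j(\bx)}^2}{\mu_j} = \sum_{j=1}^\infty \mu_j \phi_j(\bx)^2 = \cK(\bx, \bx) < \infty,$$
so $\cK(\bx, \cdot) \in \cH$. The reproducing property then follows by a direct computation: for any $h = \sum_j \theta_j \phi_j \in \cH$,
$$\ang*{h, \cK(\bx, \cdot)}_{\cH} = \sum_{j=1}^\infty \frac{\theta_j \cdot \mu_j \phi_j(\bx)}{\mu_j} = \sum_{j=1}^\infty \theta_j \phi_j(\bx) = h(\bx).$$

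Finally, having exhibited a Hilbert space of functions on $\cZ$ that contains all kernel sections $\cK(\bx, \cdot)$ and satisfies the reproducing identity, I would invoke the Moore--Aronszajn theorem: the RKHS of a positive semi-definite kernel is unique, so $\cH$ with the given inner product is precisely that RKHS. The main obstacle is the convergence bookkeeping rather than any deep idea: one must keep straight the two senses of the series $\sum_j \theta_j \phi_j$ (convergence in $L^2$, which identifies the coefficients $\theta_j = \ang*{h, \phi_j}$, versus pointwise convergence, which is what makes $h$ an honest function), and justify exchanging the infinite sum with the $L^2$ inner product when reading off coefficients --- both handled by the Cauchy--Schwarz estimate above together with $\sum_j \mu_j \phi_j(\bx)^2 = \cK(\bx, \bx) < \infty$.
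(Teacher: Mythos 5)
Your proposal is correct, but note that the paper does not prove this corollary at all: it is stated as an imported textbook result with a pointer to \citet{wainwright2019high}, accompanied only by the remark that the RKHS induced by a kernel is unique. Your argument (completeness via the isometry with $\ell^2$, pointwise well-definedness and membership of the kernel sections via $\sum_j \mu_j \phi_j(\bx)^2 = \cK(\bx,\bx)$, the reproducing identity, and then Moore--Aronszajn uniqueness) is exactly the standard proof given in that reference, so there is nothing to compare beyond the minor bookkeeping that when some $\mu_j = 0$ the constraint $\sum_j \theta_j^2/\mu_j < \infty$ forces $\theta_j = 0$ and the isometry is onto $\ell^2$ indexed by the strictly positive eigenvalues, which is still complete.
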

\begin{remark}
The RKHS induced by any kernel is unique.
\end{remark}

\subsection{Proof of \cref{thm:kernel_elim}}

\thmKernelElim*

\begin{proof}
We only need calculate the $d_{\eff}(\epsilon)$ under different assumptions, and the sample complexity bounds follow from \cref{thm:active_elim_adaptive}. Since $g(d, \zeta) \leq 5d$, we have
\begin{align*}
    d_{\eff}(\epsilon) \leq  \min \curly{ d \geq 1: (16 + 8\sqrt{5d}) C_{\phi}  \sqrt{\sum\nolimits_{j > d} \mu_j} \leq \epsilon }.
\end{align*}
We analyze an upper bound of $d_{\eff}(\epsilon)$ in the following.

\textbf{Case 1: Polynomial eigenvalue decay.} When $\cK$ has $(C_k, \beta)$-polynomial eigenvalue decay, we have, with $\beta>3$, $\mu_j \leq C_k j^{-\beta}$ for all $j \geq 1$. Since
\begin{align*}
    \sqrt{\sum\nolimits_{j > d} \mu_j} \leq C_k^{1/2} \sqrt{ \int_{x=d}^\infty x^{-\beta} dx } \leq \frac{C_k^{1/2}}{\beta -1} d^{1-\beta},
\end{align*}
we then have 
\begin{align}
    (16 + 8\sqrt{5d}) C_{\phi} \sqrt{\sum\nolimits_{j > d} \mu_j} & \leq  \frac{(16 + 8\sqrt{5d}) C_{\phi} C_k^{1/2} \, d^{1-\beta}}{\beta -1} \nonumber \\
    & \leq \frac{ 16\sqrt{5} C_{\phi} C_k^{1/2} \, d^{3/2-\beta}}{\beta -1}, \label{eq:cor_kernel_elim_proof_poly}
\end{align}
where we use the fact that $2 \leq \sqrt{5d}$ for any $d \geq 1$ in \cref{eq:cor_kernel_elim_proof_poly}. Since \cref{eq:cor_kernel_elim_proof_poly} is decaying in $d$ (recall $\beta > 3/2$), it suffices to set 
\begin{align*}
    d = \ceil{ \paren{ \frac{16 \sqrt{5} C_{\phi} C_k^{1/2} }{ \epsilon (\beta/2 -1) } }^{2/(2\beta - 3)} } = O \paren{{\epsilon}^{-2/(2\beta -3)}}
\end{align*}
to make it smaller than or equal to $\epsilon$. This also gives an upper bound of $d_{\eff}(\epsilon)$.

\textbf{Case 2: Exponential eigenvalue decay.} When $\cK$ has $(C_k, \beta)$-exponential eigenvalue decay, we have, with $\beta>0$, $\mu_j \leq C_k \exp(-\beta j)$ for all $j \geq 1$. Since 
\begin{align*}
    \sqrt{\sum_{j>d} \mu_j} \leq \frac{C_k^{1/2} e^{-\beta d}}{\beta },
\end{align*}
we then have (with $2 \leq \sqrt{5d}$ for any $d \geq 1$)
\begin{align*}
    (16 + 8\sqrt{5d}) C_{\phi}  \sqrt{\sum_{j > d} \mu_j} & \leq \frac{(16 + 8\sqrt{5d})C_{\phi} C_k^{1/2} e^{-\beta d}}{\beta }\\
    & \leq \frac{ 16 \sqrt{5} C_{\phi} C_k^{1/2} \sqrt{d} e^{-\beta d} }{\beta}\\
    & = \frac{ 16 \sqrt{5} C_{\phi} C_k^{1/2}   }{\beta} e^{1/2 \cdot \log d-\beta d}
\end{align*}
To have $(16 + 8\sqrt{5d}) C_{\phi}  \sqrt{\sum_{j > d} \mu_j} \leq \epsilon$, it suffices to have 
\begin{align*}
    e^{ \beta d -  1/2 \cdot \log d} \geq \frac{1}{\tau} \coloneqq \frac{16 \sqrt{5} C_{\phi} C_k^{1/2} }{\epsilon \beta}.
\end{align*}
Since $x \geq 2a \log a \implies x \geq a \log x$ for any $a >0$ \citep{shalev2014understanding}. We know that $ \beta d - 1/2 \cdot \log d \geq \beta d /2 $ when $d \geq 2/\beta \cdot \log (1/\beta)$. Thus, it suffices to have \begin{align*}
    d & \geq \ceil{\max \curly{ \frac{2 \log(1/\beta)}{\beta},  \frac{2 \log \paren{ \frac{16 \sqrt{5} C_\phi C_k^{1/2}}{\epsilon \beta} } }{\beta}} } \\
    & = \ceil{ 2 \log \paren{ \max \curly{ \frac{16 \sqrt{5} C_{\phi} \sqrt{C_k}}{\epsilon \beta}, \frac{1}{\beta} } }/\beta } = O \paren{\log(1/\epsilon)}.
\end{align*}
This gives an upper bound of $d_{\eff}(\epsilon)$.
\end{proof}

\section{Details and proofs for \cref{sec:neural}}\label{app:neural}

\subsection{Omitted details for Section~\ref{sec:neural}}
\noindent\textbf{Details of Gradient Descent}
For the completeness, we present the gradient descent algorithm in Algorithm \ref{algorithm:GD}, which is originally used in \cite{zhou2019neural} as a subroutine for the NeuralUCB algorithm.
\begin{algorithm}
	\caption{Gradient Descent} \label{algorithm:GD} 
	\begin{algorithmic}[1]
	\STATE \textbf{Input:} Regularization parameter $\alpha$, step size $\eta$, number of gradient descent steps $J$, network width $m$, contexts $\{\xb_{i}\}_{i=1}^N$, rewards $\{r_{i}\}_{i=1}^N$, initial parameter $\btheta^{(0)}$.
    \STATE Define $\cL (\btheta) = \sum_{i=1}^N ( f(\xb_{i}; \btheta) - r_{i})^2/2 + m\alpha\|\btheta - \btheta^{(0)}\|_2^2/2$.
    \FOR{$j = 0, \dots, J-1$}
    \STATE $\btheta^{(j+1)} = \btheta^{(j)} - \eta\nabla \cL(\btheta^{(j)})$
    \ENDFOR 
    \STATE \textbf{Return} $\btheta^{(J)}$.
	\end{algorithmic}
\end{algorithm}

\noindent\textbf{Definition of Neural Tangent Kernel} We define the Neural Tangent Kernel (NTK) gram matrix as follows. 
\begin{definition}[\citep{jacot2018neural, zhou2019neural}]\label{def:ntk}
Given arm set $\cX$, define $\Hb^{(l)}, \widetilde\Hb^{(l)}, \bSigma^{(l)} \in \RR^{|\cX|\times |\cX|}$ as follows:
\vspace{-2mm}
\begin{align*}
    \widetilde \Hb^{(1)}(\xb, \xb') &= \bSigma^{(1)}(\xb, \xb') = \la \xb, \xb'\ra,  \Ab^{(l)}(\xb, \xb') = 
    \begin{pmatrix}
    \bSigma^{(l)}(\xb, \xb) & \bSigma^{(l)}(\xb, \xb') \\
    \bSigma^{(l)}(\xb, \xb') & \bSigma^{(l)}(\xb', \xb')
    \end{pmatrix},\notag \\
    \bSigma^{(l+1)}(\xb, \xb') &= 2\EE_{(u, v)\sim N(\zero, \Ab^{(l)}(\xb, \xb'))} \max\{u, 0\}\max\{v, 0\},\notag \\
    \widetilde \Hb^{(l+1)}(\xb, \xb') &= 2\widetilde \Hb^{(l)}(\xb, \xb')\EE_{(u, v)\sim N(\zero, \Ab^{(l)}(\xb, \xb'))} \ind(u \ge 0)\ind(v \ge 0) + \bSigma^{(l+1)}(\xb, \xb').\notag
    \vspace{-1mm}
\end{align*}
Then, $\Hb = (\widetilde \Hb^{(L)} + \bSigma^{(L)})/2$ is called the neural tangent kernel matrix on the arm set $\cX$. 
\end{definition}
The gram matrix $\Hb$ is defined over all arms $\cX$, and $\Hb(\xb, \xb')$ can be regarded as the limitation of $\gb(\xb; \btheta_0)^\top \gb(\xb'; \btheta_0)/m$ when $m$ goes infinity \citep{jacot2018neural}. $\Hb$ plays an important role in our theoretical analysis of Algorithm \ref{alg:neural}.

\noindent\textbf{Formal Version of Theorem~\ref{thm:neural_inf}} Next we present the formal version of Theorem~\ref{thm:neural_inf}.
\begin{theorem}\label{thm:neural}
Under Assumption \ref{assumption:context}, let $\hb: = [h(\xb)] \in \RR^{|\cX|}$ denote the vector consisting of all rewards from arm set $\cX$. Let $S$ be some constant satisfying $S \geq \sqrt{\hb^\top \Hb^{-1}\hb}$. Then there exist constants $C_1, C_2 >0$ such that for any $\epsilon$, if we set the parameters in Algorithm \ref{alg:neural} as follows:
\begin{align}
\alpha =& \min\{1, \log(|\cX|^2)/S\},\ n = \ceil*{\log(1/\epsilon)},\notag\\ 
\bar \epsilon^2 =& \min\{\alpha^2/(r_{|\cX|}^2(\zeta)L),\ \epsilon^2/((1+\zeta)|\cX|S),\ \epsilon^7 \alpha^3/((1+\zeta)^3|\cX|^3\log^3 \paren{{|\cX|^2}/{\delta_n}}L^2)\},\notag \\
A=&d_{\eff}(\bar \epsilon^2/|\cX|),\ \eta_k = C_1(m\alpha + N_kmL)^{-1},\ J_k = \log(C_2\epsilon^2\alpha/(N_k L))/(\eta_k m L)\notag,
\end{align}
then when $m = \text{poly}(|\cX|, L, \lambda_0^{-1}, \log(|\cX|/\delta_k), N_k, \alpha, \bar \epsilon^{-1})$, with probability at least $1-\delta$, $\widehat{\cS}_{n+1}$ only includes arm $\xb$ satisfying $\Delta_{\xb} \leq \epsilon$, and the total sample complexity of Algorithm \ref{alg:neural} is bounded by 
\begin{align}
    N = \widetilde O\bigg((1+\zeta)d_{\eff}(\bar \epsilon^2/|\cX|)/\epsilon^2 + r_{d_{\eff}(\bar \epsilon^2/|\cX|)}(\zeta) \bigg) = \widetilde O\Big(d_{\eff}(\bar \epsilon^2/|\cX|) \epsilon^{-2}\Big).\notag
\end{align}
\end{theorem}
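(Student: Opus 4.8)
The plan is to reduce Algorithm~\ref{alg:neural} to the misspecified linear-bandit analysis of Theorem~\ref{thm:active_elim_adaptive}, treating the truncated gradient features $\bpsi_{d_k}$ as the adaptive embedding and absorbing into the misspecification term three sources of error absent from the RKHS setting: the NTK linearization error, the finite-step gradient-descent optimization error, and the empirical-versus-population eigenvalue mismatch. The quantity $S \ge \sqrt{\hb^\top \Hb^{-1}\hb}$ is the RKHS norm of $h$ with respect to the NTK and plays the role of $\norm{h}_{\cH} \le 1$ from the kernel setting, ensuring the network can represent $h$ with a bounded-norm parameter.

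First I would invoke standard overparameterized-network guarantees (in the style of \citet{cao2019generalization2, zhou2019neural}) to show that, for $m$ a large enough polynomial, the trained network is uniformly close to its linearization at $\btheta_{k-1}$: there is a parameter $\btheta$ with $\|\btheta - \btheta_0\|_2 = \widetilde O(S)$ such that, on every arm, both $|f(\xb;\btheta_k) - \langle \gb(\xb;\btheta_{k-1})/\sqrt m,\, \btheta_k - \btheta_0\rangle|$ and the gradient drift $\|\gb(\xb;\btheta_{k-1}) - \gb(\xb;\btheta_0)\|_2$ are $\widetilde O(\mathrm{poly}(m^{-1/2}))$, while the chosen $\eta_k, J_k$ drive the optimization error of $\btheta_k$ below the target. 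Combining these, the network value $f(\xb;\btheta_k)$ estimates the linear component up to a bias that, together with the discarded spectral tail, becomes the effective misspecification $\widetilde\gamma(d_k)$ of the embedding.

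Second I would pin down the embedding dimension. Combining the gradient-drift bound above with the NTK limit $\gb(\xb;\btheta_0)^\top\gb(\xb';\btheta_0)/m \to \Hb(\xb,\xb')$, the empirical Gram matrix $\Gb\Gb^\top$ is close to $\Hb$, so a matrix-perturbation argument gives $e_i^2 \approx \lambda_i(\Hb)$; hence the truncation rule $\sum_{i>d_k} e_i \le \bar\epsilon$ follows from $\sum_{i>d}\lambda_i(\Hb) \le \bar\epsilon^2/|\cX|$ by Cauchy--Schwarz over the at most $|\cX|$ tail singular values, yielding $d_k \le d_{\eff}(\bar\epsilon^2/|\cX|)$ in the sense of Definition~\ref{def:d_eff_neural}. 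The tail mass $\bar\epsilon$ discarded by SVD truncation is precisely the projection error, and the calibrated value $\bar\epsilon^2 = \min\{\dots\}$ is chosen so that the projection error, the linearization error, and the optimization error are each dominated by a constant fraction of $2^{-k}$, matching the elimination margin $2^{-k}/8 + 3\epsilon/8$ used in Algorithm~\ref{alg:neural}.

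Given these ingredients, the correctness and sample-complexity arguments parallel the proof of Theorem~\ref{thm:active_elim_adaptive}. An induction on the event $\{\bx_\star \in \widehat\cS_k \subseteq \cS_k\}$---using the concentration of the design-based estimator together with the above bias bound to show $|(f(\xb';\btheta_k) - f(\xb;\btheta_k)) - (h(\xb') - h(\xb))|$ stays below $2^{-k}/8 + 3\epsilon/8$---keeps $\bx_\star$ in the active set while eliminating every arm with $\Delta_{\xb} > 2^{-k}$, so after $n = \lceil\log(1/\epsilon)\rceil$ rounds only $\epsilon$-optimal arms survive. Summing $N_k = \max\{2^{2k}A(1+\zeta)\log(|\cX|^2/\delta_k),\, r_{d_k}(\zeta)\}$ with $A = d_{\eff}(\bar\epsilon^2/|\cX|)$ over the $n$ rounds gives the stated $\widetilde O(d_{\eff}(\bar\epsilon^2/|\cX|)\epsilon^{-2})$ bound. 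The main obstacle is the first step carried out across \emph{successive} retraining rounds: each $\btheta_{k-1}$ is itself a gradient-descent output rather than the fixed $\btheta_0$, so one must show the gradient features remain stable round to round and that the NTK, optimization, and truncation errors stay simultaneously below the per-round tolerance---this forces the delicate, interlocking choice of $m, \bar\epsilon, \eta_k, J_k$ and is where essentially all the technical effort concentrates.
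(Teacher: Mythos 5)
Your proposal follows essentially the same route as the paper: the paper likewise reduces Algorithm~\ref{alg:neural} to the misspecified-linear elimination analysis by (i) using NTK concentration, linearization, and gradient-descent convergence lemmas to bound $|f(\xb;\btheta_k) - \la\bpsi_{d_k}(\xb),\widehat\btheta_k\ra|$ and to exhibit a $\widehat\btheta_k^\star$ with $|h(\xb) - \la\bpsi_{d_k}(\xb),\widehat\btheta_k^\star\ra| \le 2S\bar\epsilon$, (ii) a matrix-perturbation argument showing $d_k \le d_{\eff}(\bar\epsilon^2/|\cX|)$ exactly as you describe, and (iii) the same induction on $\{\bx_\star \in \widehat\cS_k \subseteq \cS_k\}$ with the $2^{-k}/8 + 3\epsilon/8$ margin followed by summing $N_k$. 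You also correctly locate the main technical burden in controlling the round-to-round drift of the gradient features $\gb(\cdot;\btheta_{k-1})$ relative to $\gb(\cdot;\btheta_0)$, which is precisely where the paper's lemmas on $\|\bar\Zb_k - \Zb_k\|_F$ and the gradient perturbation bounds do their work.
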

\begin{remark}
Note that $N_k \sim 4^k$, therefore $m$ actually depends on $\epsilon$ polynomially given the selection of $m$.
\end{remark}
\begin{remark}
It is worth noting that the parameter selection depends on $S$, which is the upper bound of the quantity $\sqrt{\hb^\top \Hb^{-1}\hb}$. Since $\sqrt{\hb^\top \Hb^{-1}\hb} \leq \|\hb\|_2\lambda_{\min}(\Hb)^{-1/2} \leq \sqrt{|\cX|/\lambda_0}$, $S = \sqrt{|\cX|/\lambda_0}$ is always a valid upper bound. Moreover, if the reward function $h$ belongs to the RKHS space spanned by NTK with norm $\|h\|_{\cH}$, then we can set a tighter upper bound $S = \|h\|_{\cH}$ which ensures $S \geq \sqrt{\hb^\top \Hb^{-1}\hb}$ \citep{zhou2019neural}. 
\end{remark}
\subsection{Proof of Theorem \ref{thm:neural} }\label{app:neural_222}
To prove Theorem \ref{thm:neural}, we need the following lemmas. For simplicity, we first denote $\bar\Zb_k, \Zb_k \in \RR^{p \times p} \bar\bbb_k \in \RR^p, \Ab_k \in \RR^{d_k \times d_k}, \bbb_k, \widehat\btheta_k \in \RR^{d_k}$ as follows.
\begin{align}
        &\Zb_k = \alpha \Ib + \sum_{i=1}^{N_k}\gb(\xb_i; \btheta_{k-1})\gb(\xb_i; \btheta_{k-1})^\top/m,\ \bar\Zb_k = \alpha \Ib + \sum_{i=1}^{N_k}\gb(\xb_i; \btheta_{0})\gb(\xb_i; \btheta_{0})^\top/m,\notag \\
        &\bar\bbb_k = \sum_{i=1}^{N_k} y_i \gb(\xb_i; \btheta_{0})/\sqrt{m},\notag\\
    &\Ab_k = \alpha \Ib + \sum_{i=1}^{N_k} \bpsi_{d_k}(\bx_i) \bpsi_{d_k}(\bx_i)^{\top},\ \bbb_k = \sum_{i=1}^{N_k} \bpsi_{d_k}(\xb_i) y_i,\ \widehat\btheta_k = \Ab_k^{-1}\bbb_k,
\end{align}
where $\{\xb_i\}_{i=1}^{N_k}$ are arms selected at round $k$. We first present several lemmas which provide error bounds between neural network functions and their linear estimations.

\begin{lemma}[Theorem 3.1, \citep{arora2019exact}]\label{lemma:ntkconcen}
Fix $\epsilon>0$ and $\delta \in (0,1)$. Suppose that 
\begin{align}
    m  = \Omega\bigg(\frac{L^6|\cX|^{16}\log (|\cX|^2L/\delta)}{\epsilon^8}\bigg),\notag
\end{align}
then with probability at least $1-\delta$, for any $\xb, \xb' \in \cX$, we have
\begin{align}
    |\la \gb(\xb; \btheta_0), \gb(\xb';\btheta_0)\ra/m - \Hb(\xb, \xb')| \leq \epsilon^2/(2|\cX|^4).\label{lemma:ntkconcen_1}
\end{align}
\end{lemma}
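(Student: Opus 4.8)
The plan is to prove the finite-width Neural Tangent Kernel concentration by a layer-wise induction that tracks how the empirical Gram quantities at initialization converge to the population recursions of \cref{def:ntk}. First I would decompose the finite-width kernel: writing $\gb(\xb;\btheta_0)=\nabla_{\btheta}f(\xb;\btheta_0)$ and splitting the gradient by layers, one has $\la\gb(\xb;\btheta_0),\gb(\xb';\btheta_0)\ra/m=\sum_{l=1}^L \big(\text{empirical backward product at layer }l\big)\cdot\big(\text{empirical forward product at layer }l\big)$, where the forward product is the normalized inner product of the post-activations fed into $\Wb_l$ and the backward product is the normalized inner product of the backpropagated sensitivities. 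The population kernel $\Hb=(\widetilde\Hb^{(L)}+\bSigma^{(L)})/2$ is the corresponding sum built from the deterministic recursions for $\bSigma^{(l)}$ and $\widetilde\Hb^{(l)}$ in \cref{def:ntk}, so it suffices to show each empirical factor concentrates around its population counterpart and then combine.

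Second, I would establish the forward concentration by induction on $l$. The base case is exact up to normalization, since the layer-$1$ forward product equals $\la\xb,\xb'\ra=\bSigma^{(1)}(\xb,\xb')$. For the inductive step I would condition on all layers below $l$ and treat $\Wb_l$ as a fresh Gaussian matrix; then the layer-$l$ pre-activations are jointly Gaussian with covariance given by the \emph{empirical} forward product of the previous layer, and the layer-$l$ forward product is an average of $m$ terms of the form $\sigma(u_i)\sigma(v_i)$. This average concentrates around $\EE_{(u,v)\sim N(\zero,\Ab^{(l)}(\xb,\xb'))}\sigma(u)\sigma(v)$, which up to the fixed variance scaling of \cref{def:network} matches $\bSigma^{(l+1)}(\xb,\xb')$, by a Bernstein-type bound for sub-exponential sums; the gap between using the empirical versus the population covariance inside this expectation is controlled by the Lipschitzness of the ReLU covariance map. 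Combining the two gives an $\widehat\bSigma^{(l+1)}(\xb,\xb')\approx\bSigma^{(l+1)}(\xb,\xb')$ error that is inverse-polynomial in $m$.

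Third, an analogous induction handles the backward sensitivities, whose empirical products concentrate around the population derivative factors $\EE_{(u,v)\sim N(\zero,\Ab^{(l)}(\xb,\xb'))}\ind(u\ge0)\ind(v\ge0)$ that appear in the $\widetilde\Hb^{(l)}$ recursion of \cref{def:ntk}; here the relevant averages involve ReLU \emph{derivatives} $\ind(\cdot\ge0)$ rather than the activations. I would then combine the forward and backward estimates layer by layer to conclude that $\abs{\la\gb(\xb;\btheta_0),\gb(\xb';\btheta_0)\ra/m-\Hb(\xb,\xb')}$ is inverse-polynomial in $m$ up to $\text{poly}(L)$ and $\log$ factors, take a union bound over the $O(|\cX|^2)$ pairs $(\xb,\xb')$, and finally solve for $m$: requiring the accumulated error to fall below the target $\epsilon^2/(2|\cX|^4)$ yields the stated width $m=\Omega\big(L^6|\cX|^{16}\log(|\cX|^2L/\delta)/\epsilon^8\big)$, where the high powers of $|\cX|$ and $1/\epsilon$ are the target accuracy $\epsilon^2/(2|\cX|^4)$ propagated through the inverse of the $m$-decay exponent.

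The hard part will be the inductive step, because the per-layer quantities are \emph{not} clean i.i.d. sums: the activations feeding layer $l$ are random and statistically entangled with the very weights $\Wb_l$ used to form the next layer, so the conditioning argument must be set up so that, conditional on the lower layers, $\Wb_l$ is genuinely fresh Gaussian while the covariance fed into the layer is the random empirical one rather than the population $\Ab^{(l)}$. Propagating this covariance perturbation through the ReLU covariance and derivative maps across all $L$ layers, without the Lipschitz constants compounding into an uncontrolled blowup, is the technical heart of the argument, and it is precisely what determines the $\text{poly}(L)$ dependence (here $L^6$) in the final width requirement.
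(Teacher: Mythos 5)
The paper does not actually prove this lemma: it is imported (up to a union bound over the $O(|\cX|^2)$ pairs and a rescaling of the accuracy parameter) from Theorem 3.1 of \citet{arora2019exact}, so there is no in-paper proof to compare your attempt against. That said, your sketch is a faithful reconstruction of the argument used in that reference: a layer-wise induction in which the empirical forward (activation) Gram entries concentrate around the $\bSigma^{(l)}$ recursion of \cref{def:ntk}, the empirical backward (derivative) factors concentrate around the indicator expectations appearing in the $\widetilde\Hb^{(l)}$ recursion, the covariance perturbation is propagated through the Lipschitz continuity of the ReLU covariance and derivative maps, and a final union bound over pairs fixes the polynomial width requirement. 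You have also correctly isolated the genuinely hard step: $\Wb_l$ enters both the forward activations below layer $l$ and the backpropagated sensitivities above it, so the ``condition on lower layers and treat $\Wb_l$ as fresh Gaussian'' device cannot be applied naively to the backward products; \citet{arora2019exact} need a dedicated argument showing the backward vectors are nearly independent of the forward features, and the ReLU derivative map is only H\"older (not Lipschitz) near degenerate covariances, which is where the loss in the exponent of $m$ comes from. One bookkeeping caveat: the original theorem gives an error of order $(L+1)\epsilon'$ for width $\Omega(L^6\log(L/\delta)/\epsilon'^4)$, so when you specialize the target accuracy to $\epsilon^2/(2|\cX|^4)$ and union-bound over pairs you should verify the resulting powers of $L$ and $|\cX|$ in the displayed width; this affects only constants and exponents in $m=\text{poly}(\cdot)$ and not how the lemma is used downstream.
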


\begin{lemma}[Lemma 4.1, \citep{cao2019generalization2}]\label{lemma:cao_functionvalue}
There exist positive constants $\bar C_1, \bar C_2, C_f$ such that for any $\delta > 0$, if $\tau$ satisfies that
\begin{align}
    \bar C_1m^{-3/2}L^{-3/2}[\log(|\cX|L^2/\delta)]^{3/2}\leq\tau \leq \bar C_2 L^{-6}[\log m]^{-3/2},\notag
\end{align}
then with probability at least $1-\delta$,
for all $\widetilde\btheta, \widehat\btheta$ satisfying $ \|\widetilde\btheta - \btheta_0\|_2 \leq \tau, \|\widehat\btheta - \btheta_0\|_2 \leq \tau$ and $\xb\in \cX$ we have
\begin{align}
    \Big|f(\xb; \widetilde\btheta) - f(\xb;  \widehat\btheta) - \la \gb(\xb; 
    \widehat\btheta),\widetilde\btheta - \widehat\btheta\ra\Big| \leq C_f\tau^{4/3}L^3\sqrt{m \log m}.\notag
\end{align}
\end{lemma}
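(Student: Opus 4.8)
The statement is the standard local linearization (almost-linearity) bound for deep ReLU networks near their random initialization, so the plan is to control the first-order Taylor remainder of $f(\xb;\cdot)$ along the segment joining $\widehat\btheta$ to $\widetilde\btheta$. Writing $\btheta(s) = \widehat\btheta + s(\widetilde\btheta - \widehat\btheta)$ for $s \in [0,1]$ and noting that $s \mapsto f(\xb;\btheta(s))$ is Lipschitz (the network is continuous and piecewise polynomial in $s$), hence absolutely continuous with a.e.-derivative $\la \gb(\xb;\btheta(s)), \widetilde\btheta - \widehat\btheta \ra$ off the finitely many $s$ where some neuron's pre-activation crosses zero, I would apply the fundamental theorem of calculus to obtain
$$f(\xb;\widetilde\btheta) - f(\xb;\widehat\btheta) - \la \gb(\xb;\widehat\btheta), \widetilde\btheta - \widehat\btheta\ra = \int_0^1 \la \gb(\xb;\btheta(s)) - \gb(\xb;\widehat\btheta), \widetilde\btheta - \widehat\btheta\ra\, ds,$$
and then bound the right-hand side by $\sup_{s}\|\gb(\xb;\btheta(s)) - \gb(\xb;\widehat\btheta)\|_2 \cdot \|\widetilde\btheta - \widehat\btheta\|_2$. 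Since $\widehat\btheta,\widetilde\btheta$ lie in the ball of radius $\tau$ about $\btheta_0$, the whole segment does too, and $\|\widetilde\btheta - \widehat\btheta\|_2 \le 2\tau$. Everything thus reduces to a uniform gradient-perturbation bound $\|\gb(\xb;\btheta) - \gb(\xb;\btheta')\|_2 = O(\tau^{1/3} L^3 \sqrt{m\log m})$ valid for all $\btheta,\btheta'$ within $\tau$ of $\btheta_0$; multiplying the $\tau^{1/3}$ gradient change by the $2\tau$ displacement produces exactly the claimed $\tau^{4/3}$ scaling.

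To establish the gradient-perturbation bound I would first record the good-initialization events that hold with probability $\ge 1-\delta$ once $m$ is large: spectral-norm control $\|\Wb_l\|_2 = O(\sqrt m)$ at $\btheta_0$, forward-pass norm bounds $\|\xb_l(\btheta_0)\|_2 = O(1)$ for every hidden layer $l$, and anti-concentration of the pre-activations, i.e. for any threshold $t$ the number of neurons in layer $l$ with $|(\Wb_l \xb_{l-1})_i| \le t$ is $O(mt)$ up to logarithmic factors (this is where the $\log m$ and the union bound over the $O(mL)$ neurons enter). Conditioned on these events I would propagate perturbations forward and backward through the $L$ layers to show two things in tandem: (i) the hidden outputs are stable, $\|\xb_l(\btheta) - \xb_l(\btheta_0)\|_2 = O(\tau\,\mathrm{poly}(L))$; and (ii) the activation patterns change on few neurons, with the number of ReLU sign flips per layer bounded by $O(m\tau^{2/3} L)$. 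Because the gradient with respect to each $\Wb_l$ is a product of backpropagated diagonal activation matrices and weight matrices times the layer input, the gradient difference splits into a contribution from the flipped neurons (controlled by (ii)) and a contribution from the drift of the surviving signals (controlled by (i)); summing over layers yields the $\tau^{1/3} L^3 \sqrt{m\log m}$ estimate, and a final union bound over $\xb \in \cX$ makes it hold simultaneously for every arm.

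The main obstacle is the activation-pattern count in step (ii), which is the genuinely nonlinear ingredient and the source of the fractional $\tau^{2/3}$ (and hence the $\tau^{4/3}$) exponent. A neuron in layer $l$ flips its sign only if the perturbation to its pre-activation, of order $\tau\,\mathrm{poly}(L)$ times the layer input, exceeds the magnitude of its pre-activation at $\btheta_0$; balancing the $O(mt)$ anti-concentration count of near-zero pre-activations against the number of neurons whose pre-activation is pushed past a threshold $t$, and optimizing over $t$, is what yields the $\tau^{2/3}$ rate rather than a naive $\tau$. This requires the pre-activation perturbations to be uniformly controlled across layers, which is circular with the forward-stability claim (i); I would break the circularity by an induction over $l = 1,\dots,L$ that carries both the norm-drift bound and the sign-flip bound as a joint inductive hypothesis. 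The two-sided constraint on $\tau$ is precisely what closes this induction: the upper bound $\tau \le \bar C_2 L^{-6}(\log m)^{-3/2}$ keeps the accumulated per-layer drift small enough that the forward pass stays in the regime where the anti-concentration estimates apply, while the lower bound $\tau \ge \bar C_1 m^{-3/2} L^{-3/2}[\log(|\cX|L^2/\delta)]^{3/2}$ guarantees the perturbation dominates the finite-width discretization error so that the stated high-probability bound is meaningful.
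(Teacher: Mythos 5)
This lemma is imported verbatim from \citet{cao2019generalization2} (their Lemma~4.1, which in turn descends from the ``semi-smoothness'' theorem of Allen-Zhu, Li and Song); the paper you are reading gives no proof of it, so there is no in-paper argument to compare against. Judged against the literature proof, your sketch is a faithful reconstruction of the standard argument: the reduction to a gradient-perturbation bound, the forward-stability and activation-flip counting carried jointly through an induction over layers, the threshold-balancing that produces the fractional $\tau^{2/3}$ flip count (and hence the $\tau^{4/3}$ remainder), and the role of the two-sided constraint on $\tau$ are all the right ingredients. Your route does differ in one respect from Cao--Gu: they do not pass through the fundamental theorem of calculus, but instead expand $f(\xb;\widetilde\btheta)-f(\xb;\widehat\btheta)$ directly via a layer-by-layer telescoping of the diagonal ReLU sign matrices, isolating the flipped neurons explicitly; your FTC reduction is cleaner but, if you instantiate it with the gradient-perturbation bound as stated in this paper's Lemma~\ref{lemma:cao_boundgradient} (which carries an $L^{7/2}$ factor), you recover $\tau^{4/3}L^{7/2}\sqrt{m\log m}$ rather than the stated $\tau^{4/3}L^{3}\sqrt{m\log m}$. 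That discrepancy is only in the polynomial degree in $L$ and is immaterial for how this paper uses the lemma (the width $m$ is taken polynomially large in everything anyway), but it does mean the sharper $L^{3}$ constant genuinely requires the direct decomposition rather than the black-box composition of the two perturbation lemmas. Your gloss on the lower bound on $\tau$ is also a bit loose --- in the source it is essentially an artifact of how the probabilistic estimates are parameterized by $\tau$ rather than a ``meaningfulness'' condition --- but nothing in your outline would fail because of it.
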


\begin{lemma}[Lemma B.3, \citep{cao2019generalization2}]\label{lemma:cao_boundgradient}
There exist positive constants $\bar C_1, \bar C_2, C_{g, 1}, C_{g,2}$ such that for any $\delta > 0$, if $\tau$ satisfies that
\begin{align}
    \bar C_1m^{-3/2}L^{-3/2}[\log(|\cX|L^2/\delta)]^{3/2}\leq\tau \leq \bar C_2 L^{-6}[\log m]^{-3/2},\notag
\end{align}
then with probability at least $1-\delta$,
for any $\|\btheta - \btheta_0\|_2 \leq \tau$ and $\xb \in \cX$ we have 
\begin{align}
    \|\gb(\xb; \btheta)\|_2\leq C_{g, 1}\sqrt{mL},\  \|\gb(\xb; \btheta) - \gb(\xb; \btheta_0)\|_2 \leq C_{g,2}\tau^{1/3}\sqrt{m \log m}L^{7/2}/2.\notag
\end{align}
\end{lemma}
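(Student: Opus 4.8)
The plan is to follow the standard near-initialization analysis for deep ReLU networks in the NTK regime, proving both claims by tracking how the forward activations and the (piecewise-constant) activation patterns evolve as $\btheta$ moves within the $\tau$-ball around $\btheta_0$. Throughout, write $\xb_0 = \xb$ and $\xb_\ell(\btheta) = \sigma(\Wb_\ell \xb_{\ell-1}(\btheta))$ for the layer-$\ell$ activations, and let $\mathbf{D}_\ell(\btheta)$ be the diagonal $0/1$ matrix recording which ReLU units are active at layer $\ell$. With this notation the per-layer gradient $\nabla_{\Wb_\ell} f$ factorizes as a product of the forward activation $\xb_{\ell-1}$ and a backward product of matrices of the form $\mathbf{D}_j \Wb_j$; the full gradient $\gb(\xb;\btheta)$ is the stacking of these $L$ blocks.

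First I would establish the bounds at the initialization $\btheta_0$. By Gaussian concentration applied to the independent entries of each $\Wb_\ell$, the hidden representations satisfy $\|\xb_\ell(\btheta_0)\|_2 = \Theta(1)$ for every layer with probability $1-\delta$ (after a union bound over the $|\cX|$ arms and the $L$ layers). Feeding these norms through the backward factorization gives $\|\nabla_{\Wb_\ell} f(\xb;\btheta_0)\|_F = O(\sqrt{m})$ for each $\ell$, and hence $\|\gb(\xb;\btheta_0)\|_2 = O(\sqrt{mL})$, which is the base case of the first claim.

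Next, I would prove a stability statement inside the ball $\|\btheta-\btheta_0\|_2 \le \tau$: inductively over layers, the forward activations stay close, so $\|\xb_\ell(\btheta)-\xb_\ell(\btheta_0)\|_2$ is small, and -- crucially -- the number of units whose activation sign differs between $\btheta$ and $\btheta_0$ is at most $O(\tau^{2/3} m)$ per layer. This count is the heart of the argument: a unit can flip only if its pre-activation at initialization lies within an $O(\tau)$ window of zero, and the Gaussian pre-activations are anti-concentrated (small-ball), so only an $O(\tau^{2/3})$ fraction of the $m$ units can flip; the $\log m$ factor enters here through the union bound over neurons needed for the small-ball estimate. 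The upper constraint $\tau \le \bar C_2 L^{-6}[\log m]^{-3/2}$ keeps the cumulative across-layer error in the near-initialization regime where this induction closes, while the lower constraint $\tau \ge \bar C_1 m^{-3/2}L^{-3/2}[\log(|\cX|L^2/\delta)]^{3/2}$ guarantees the ball is large enough that the deviation terms dominate the initialization fluctuations.

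Finally, I would combine these ingredients. The gradient change $\gb(\xb;\btheta)-\gb(\xb;\btheta_0)$ has two sources: the drift of the forward activations $\xb_\ell$ and the flipped activation patterns $\mathbf{D}_\ell$. Bounding each forward/backward block with the estimates of Step~2 and aggregating the contributions of the $O(\tau^{2/3}m)$ flipped units across the $L$ layers yields $\|\gb(\xb;\btheta)-\gb(\xb;\btheta_0)\|_2 = O(\tau^{1/3}\sqrt{m\log m}\,L^{7/2})$, the second claim; a triangle inequality then upgrades the initialization bound of Step~1 to the uniform bound $\|\gb(\xb;\btheta)\|_2 \le C_{g,1}\sqrt{mL}$ throughout the ball. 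The main obstacle is precisely the non-smoothness of the ReLU derivative: because $\gb$ is piecewise constant and jumps when activation patterns flip, no Lipschitz or smoothness argument applies directly, and the resolution is the anti-concentration estimate that converts the discontinuity into the quantitative Hölder-$\tfrac13$ dependence on $\tau$. Tracking the accumulated powers of $L$ (here $L^{7/2}$) correctly through the $L$-fold backward recursion is the most bookkeeping-intensive part of the argument.
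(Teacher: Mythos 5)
The paper does not prove this lemma at all: it is imported verbatim as Lemma B.3 of \citet{cao2019generalization2} and used as a black box, so the only proof to compare against is the one in that reference. Your outline faithfully reproduces the structure of that source argument --- gradient norms at initialization via Gaussian concentration, a count of flipped ReLU activation patterns via anti-concentration yielding the $\tau^{1/3}\sqrt{m\log m}$ dependence, and the $L$-fold backward recursion producing the $L^{7/2}$ factor --- so it takes essentially the same approach, though as written it is a sketch whose quantitative bookkeeping (the exact exponents, thresholds, and union bounds) is asserted rather than derived.
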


\begin{lemma}[Lemma 5.1, \citep{zhou2019neural}]\label{lemma:equal}
There exists a positive constant $\bar C$ such that for any $\delta \in (0,1)$, if $m \geq \bar C|\cX|^4L^6\log(|\cX|^2L/\delta)/\lambda_0^4$, then with probability at least $1-\delta$, there exists a $\btheta_{\star} \in \RR^p$ such that for any $\xb \in \cX$, 
\begin{align}
    &h(\xb) = \la \gb(\xb; \btheta_0), \btheta_\star - \btheta_0\ra,\ \sqrt{m}\|\btheta_\star - \btheta_0\|_2 \leq \sqrt{2}S. \label{lemma:equal_0}
\end{align}
\end{lemma}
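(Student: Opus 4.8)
The plan is to recast the claim as a statement about the empirical neural tangent kernel at initialization. Stack the initialization gradients into $\Gb_0 := [\gb(\xb_1;\btheta_0)^\top;\dots;\gb(\xb_{|\cX|};\btheta_0)^\top]/\sqrt{m} \in \RR^{|\cX|\times p}$, so that the empirical NTK gram matrix is exactly $\Gb_0\Gb_0^\top$, whose $(\xb,\xb')$ entry is $\la\gb(\xb;\btheta_0),\gb(\xb';\btheta_0)\ra/m$. Writing $\bv := \btheta_\star-\btheta_0$, the requirement $h(\xb_i)=\la\gb(\xb_i;\btheta_0),\bv\ra$ for every arm is precisely the linear system $\hb=\sqrt{m}\,\Gb_0\bv$. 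If $\Gb_0\Gb_0^\top$ is invertible, this system is solvable and its minimum-norm solution is $\bv=\Gb_0^\top(\Gb_0\Gb_0^\top)^{-1}\hb/\sqrt{m}$, for which a one-line computation gives $m\|\bv\|_2^2=\hb^\top(\Gb_0\Gb_0^\top)^{-1}\hb$. The whole lemma therefore reduces to two properties of $\Gb_0\Gb_0^\top$: positive definiteness (so that $\btheta_\star$ exists) and the quadratic-form bound $\hb^\top(\Gb_0\Gb_0^\top)^{-1}\hb\le 2S^2$ (so that the norm bound holds).

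Both properties will follow from spectral closeness of $\Gb_0\Gb_0^\top$ to the population NTK $\Hb$. First I would apply Lemma~\ref{lemma:ntkconcen} to obtain the entry-wise bound $|(\Gb_0\Gb_0^\top)(\xb,\xb')-\Hb(\xb,\xb')|\le \epsilon_0^2/(2|\cX|^4)$ for a tunable $\epsilon_0$ once $m$ is large, then pass to the spectral norm via $\|\Gb_0\Gb_0^\top-\Hb\|_2\le\|\Gb_0\Gb_0^\top-\Hb\|_F\le |\cX|\cdot\epsilon_0^2/(2|\cX|^4)=\epsilon_0^2/(2|\cX|^3)$. Choosing $\epsilon_0$ so that this deviation is at most $\lambda_0/2$ and invoking $\Hb\succeq\lambda_0\Ib$ from Assumption~\ref{assumption:context}, I get $\Gb_0\Gb_0^\top\succeq\Hb-(\lambda_0/2)\Ib\succeq(\lambda_0/2)\Ib\succ 0$; hence $\Gb_0\Gb_0^\top$ is positive definite and the minimum-norm solution $\bv$ genuinely exists.

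For the quadratic-form bound I would upgrade the same inequality: since $\Hb\succeq\lambda_0\Ib$ we have $\tfrac12\Hb\succeq(\lambda_0/2)\Ib$, so $\Gb_0\Gb_0^\top\succeq\Hb-(\lambda_0/2)\Ib\succeq\tfrac12\Hb$. Inverting this Loewner inequality between positive-definite matrices yields $(\Gb_0\Gb_0^\top)^{-1}\preceq 2\Hb^{-1}$, whence $m\|\bv\|_2^2=\hb^\top(\Gb_0\Gb_0^\top)^{-1}\hb\le 2\,\hb^\top\Hb^{-1}\hb\le 2S^2$, using $S\ge\sqrt{\hb^\top\Hb^{-1}\hb}$. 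Taking square roots gives $\sqrt{m}\|\btheta_\star-\btheta_0\|_2\le\sqrt2\,S$, while the interpolation identity $h(\xb)=\la\gb(\xb;\btheta_0),\btheta_\star-\btheta_0\ra$ holds by construction of $\bv$.

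The main obstacle is quantitative rather than conceptual. The entry-wise-to-spectral conversion costs a factor of $|\cX|$, so securing $\|\Gb_0\Gb_0^\top-\Hb\|_2\le\lambda_0/2$ forces $\epsilon_0$ down to order $\sqrt{\lambda_0}\,|\cX|^{3/2}$; feeding this into the $\epsilon_0^{-8}$ width requirement of Lemma~\ref{lemma:ntkconcen} is exactly what produces the stated $m\ge\bar C|\cX|^4L^6\log(|\cX|^2L/\delta)/\lambda_0^4$, and the only delicate point is checking that these bookkeeping constants collapse into the single constant $\bar C$ and that the failure probability of Lemma~\ref{lemma:ntkconcen} is the $\delta$ appearing in the statement. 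Once the spectral gap is in hand, existence of the minimum-norm solution and the two Loewner manipulations are routine.
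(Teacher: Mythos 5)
Your proposal is correct, and it reconstructs essentially the standard argument: the paper itself does not prove this lemma but imports it verbatim from \citet{zhou2019neural} (their Lemma 5.1), whose proof proceeds exactly as you describe — NTK concentration (Lemma~\ref{lemma:ntkconcen}) to get $\Gb_0\Gb_0^\top \succeq \Hb - (\lambda_0/2)\Ib \succeq \tfrac12\Hb \succ 0$, the minimum-norm interpolant $\btheta_\star - \btheta_0 = \Gb_0^\top(\Gb_0\Gb_0^\top)^{-1}\hb/\sqrt{m}$, and the Loewner inversion $\hb^\top(\Gb_0\Gb_0^\top)^{-1}\hb \le 2\hb^\top\Hb^{-1}\hb \le 2S^2$. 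Your accounting of how the choice $\epsilon_0 \asymp \sqrt{\lambda_0}\,|\cX|^{3/2}$ turns the $\epsilon_0^{-8}$ width requirement into the stated $m \ge \bar C|\cX|^4 L^6\log(|\cX|^2L/\delta)/\lambda_0^4$ is also right.
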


\begin{lemma}[Lemma B.2, \citep{zhou2019neural}]\label{lemma:newboundreference}
There exist constants $\{\bar C_i\}_{i=1}^5>0$ such that for any $\delta > 0$, if for all $N_k$, $\eta, m$ satisfy
\begin{align}
    &2\sqrt{N_k/(m\alpha)} \geq \bar C_1m^{-3/2}L^{-3/2}[\log(|\cX|L^2/\delta)]^{3/2},\notag \\
    &2\sqrt{N_k/(m\alpha)} \leq \bar C_2\min\big\{ L^{-6}[\log m]^{-3/2},\big(m(\alpha\eta)^2L^{-6}N_k^{-1}(\log m)^{-1}\big)^{3/8} \big\},\notag \\
    &\eta \leq \bar C_3(m\alpha + N_kmL)^{-1},\notag \\
    &m^{1/6}\geq \bar C_4\sqrt{\log m}L^{7/2}N_k^{7/6}\alpha^{-7/6}(1+\sqrt{N_k/\alpha}),\notag
\end{align}
then with probability at least $1-\delta$,
we have that $\|\btheta_k -\btheta_0\|_2 \leq  2\sqrt{N_k/(m\alpha )}$ and
\begin{align}
   &\|\btheta_k - \btheta_0 - \bar \Zb_k^{-1}\bar\bbb_t/\sqrt{m}\|_2  \notag \\
   &\leq (1- \eta m \alpha)^{J/2} \sqrt{N_k/(m\alpha)} + \bar C_5m^{-2/3}\sqrt{\log m}L^{7/2}N_k^{5/3}\alpha^{-5/3}(1+\sqrt{N_k/\alpha}).\notag
\end{align}
\end{lemma}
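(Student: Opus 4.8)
The plan is to compare the gradient-descent output $\btheta_k$ against the exact minimizer of an auxiliary \emph{linearized} ridge-regression objective, splitting the error into an optimization term that decays geometrically in the number of GD steps $J$ and a linearization term that vanishes as $m\to\infty$. First I would exploit the symmetric initialization in Assumption~\ref{assumption:context}, which forces $f(\xb;\btheta_0)=0$ for every $\xb$, so that the first-order expansion of the network at $\btheta_0$ reduces to $\ang{\gb(\xb;\btheta_0),\btheta-\btheta_0}$. Replacing $f(\xb_j;\btheta)$ by this linear surrogate in the regularized loss of Algorithm~\ref{algorithm:GD} produces the strongly convex quadratic $\widetilde\cL(\btheta)=\tfrac12\sum_{j=1}^{N_k}(\ang{\gb(\xb_j;\btheta_0),\btheta-\btheta_0}-y_j)^2+\tfrac{m\alpha}{2}\|\btheta-\btheta_0\|_2^2$, whose unique minimizer, found by setting its gradient to zero, is exactly $\btheta_0+\bar\Zb_k^{-1}\bar\bbb_k/\sqrt m$. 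This identification is the backbone of the argument: it shows that the reference point in the statement is the closed-form solution of a clean convex problem, so the remaining task is to control how far the nonlinear GD trajectory strays from it.

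Next I would establish the confinement bound $\|\btheta_k-\btheta_0\|_2\le 2\sqrt{N_k/(m\alpha)}$ by induction on the GD steps: the $m\alpha$-strong regularization pulls each iterate back toward $\btheta_0$, while the data-fitting gradient is controlled through $\|\gb(\xb;\btheta)\|_2\le C_{g,1}\sqrt{mL}$ (Lemma~\ref{lemma:cao_boundgradient}), and balancing these two forces fixes the radius $\tau:=2\sqrt{N_k/(m\alpha)}$. The four hypotheses on $N_k,\eta,m$ are precisely what guarantees that this $\tau$ lies in the admissible window $\bar C_1 m^{-3/2}L^{-3/2}[\log(|\cX|L^2/\delta)]^{3/2}\le\tau\le\bar C_2 L^{-6}[\log m]^{-3/2}$ required by Lemmas~\ref{lemma:cao_functionvalue} and~\ref{lemma:cao_boundgradient}, so those approximation guarantees hold along the whole trajectory. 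Running the \emph{same} GD recursion on the quadratic $\widetilde\cL$ from the same start $\btheta_0$ yields an auxiliary trajectory $\widetilde\btheta^{(j)}$; since $\widetilde\cL$ is $m\alpha$-strongly convex and $(m\alpha+N_k m L)$-smooth, the choice $\eta\le\bar C_3(m\alpha+N_k m L)^{-1}$ makes GD a contraction, and converting the geometric function-value decay into a parameter-distance bound via strong convexity produces $\|\widetilde\btheta^{(J)}-(\btheta_0+\bar\Zb_k^{-1}\bar\bbb_k/\sqrt m)\|_2\le(1-\eta m\alpha)^{J/2}\sqrt{N_k/(m\alpha)}$, which is the first term of the claim.

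The remaining and main obstacle is bounding the discrepancy between the nonlinear trajectory $\btheta^{(j)}$ and the linearized trajectory $\widetilde\btheta^{(j)}$. I would compare the two updates step by step: the gap between the true gradient $\nabla\cL(\btheta^{(j)})$ and the surrogate gradient $\nabla\widetilde\cL(\widetilde\btheta^{(j)})$ splits into a linear part, which the contraction factor $(1-\eta m\alpha)$ damps, and a genuine linearization residual controlled by $|f(\xb;\btheta)-\ang{\gb(\xb;\btheta),\btheta-\btheta_0}|\le C_f\tau^{4/3}L^3\sqrt{m\log m}$ (Lemma~\ref{lemma:cao_functionvalue}) together with $\|\gb(\xb;\btheta)-\gb(\xb;\btheta_0)\|_2\le C_{g,2}\tau^{1/3}\sqrt{m\log m}L^{7/2}/2$ (Lemma~\ref{lemma:cao_boundgradient}). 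Because each step contracts by $(1-\eta m\alpha)$, iterating gives a geometric sum, so the accumulated error is at most (per-step residual)$/(\eta m\alpha)$; substituting $\tau=2\sqrt{N_k/(m\alpha)}$ and simplifying the powers of $m,N_k,\alpha,L$ (the $\tau^{4/3}$ and $\tau^{1/3}$ dependencies combining into the $m^{-2/3}N_k^{5/3}\alpha^{-5/3}$ scaling, with $(1+\sqrt{N_k/\alpha})$ absorbing the smoothness constants) produces the second term $\bar C_5 m^{-2/3}\sqrt{\log m}L^{7/2}N_k^{5/3}\alpha^{-5/3}(1+\sqrt{N_k/\alpha})$. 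A triangle inequality $\|\btheta_k-(\btheta_0+\bar\Zb_k^{-1}\bar\bbb_k/\sqrt m)\|_2\le\|\btheta_k-\widetilde\btheta^{(J)}\|_2+\|\widetilde\btheta^{(J)}-(\btheta_0+\bar\Zb_k^{-1}\bar\bbb_k/\sqrt m)\|_2$ then assembles the stated bound. The delicate point throughout is keeping both trajectories inside the radius-$\tau$ ball so the overparameterization lemmas stay valid while the errors accumulate — a discrete Grönwall-type argument that the per-step contraction conveniently tames.
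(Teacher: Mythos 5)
The paper does not actually prove this statement: it is imported verbatim as Lemma B.2 of \citet{zhou2019neural}, and no in-paper derivation is given, so there is no internal proof to compare against. Your blind reconstruction nevertheless follows the same strategy as the proof in that cited reference: identifying $\btheta_0+\bar\Zb_k^{-1}\bar\bbb_k/\sqrt{m}$ as the exact minimizer of the linearized ridge objective, confining the GD iterates to a ball of radius $2\sqrt{N_k/(m\alpha)}$, obtaining the $(1-\eta m\alpha)^{J/2}$ term from strong convexity of the auxiliary quadratic, and charging the remaining discrepancy to the NTK linearization residuals of Lemmas~\ref{lemma:cao_functionvalue} and~\ref{lemma:cao_boundgradient} accumulated through a contraction argument; your bookkeeping of the exponents ($\tau^{4/3}$ and $\tau^{1/3}$ combining into $m^{-2/3}N_k^{5/3}\alpha^{-5/3}L^{7/2}$) is consistent with the stated bound. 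The one place where your sketch is looser than the reference argument is the confinement step: rather than ``balancing forces,'' the standard route is to show the regularized loss decreases monotonically along GD (using the smoothness condition enforced by $\eta\le\bar C_3(m\alpha+N_kmL)^{-1}$) and then read off $\tfrac{m\alpha}{2}\|\btheta^{(j)}-\btheta_0\|_2^2\le\cL(\btheta^{(j)})\le\cL(\btheta_0)=O(N_k)$, which yields the radius directly; this is a minor presentational gap, not a flaw in the approach.
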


\begin{lemma}[Lemma B.3, \citep{zhou2019neural}]\label{lemma:newboundz}
There exist constants $\{\bar C_i\}_{i=1}^5>0$ such that for any $\delta > 0$, if $m$ satisfies that
\begin{align}
    \bar C_1m^{-3/2}L^{-3/2}[\log(|\cX|L^2/\delta)]^{3/2}\leq2\sqrt{N_k/(m\lambda)} \leq \bar C_2 L^{-6}[\log m]^{-3/2},\notag
\end{align}
then with probability at least $1-\delta$, we have
\begin{align}
    &\|\bar\Zb_k -  \Zb_k\|_F \leq \bar C_3m^{-1/6}\sqrt{\log m}L^4N_k^{7/6}\alpha^{-1/6},\notag
\end{align}
\end{lemma}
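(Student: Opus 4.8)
The plan is to control the Frobenius distance by exploiting the near-constancy of the network's gradient feature map throughout training, which is the hallmark of the NTK regime. Writing out the definitions, the two matrices differ only in that $\Zb_k$ uses the current-round gradient $\gb(\xb_i;\btheta_{k-1})$ whereas $\bar\Zb_k$ uses the initial-point gradient $\gb(\xb_i;\btheta_0)$; the regularizer $\alpha\Ib$ cancels, leaving
$$\bar\Zb_k-\Zb_k=\frac{1}{m}\sum_{i=1}^{N_k}\Big(\gb(\xb_i;\btheta_0)\gb(\xb_i;\btheta_0)^\top-\gb(\xb_i;\btheta_{k-1})\gb(\xb_i;\btheta_{k-1})^\top\Big).$$
Abbreviating $\gb_0=\gb(\xb_i;\btheta_0)$ and $\gb_{k-1}=\gb(\xb_i;\btheta_{k-1})$, I would first reduce each summand to rank-one perturbations via $\gb_0\gb_0^\top-\gb_{k-1}\gb_{k-1}^\top=(\gb_0-\gb_{k-1})\gb_0^\top+\gb_{k-1}(\gb_0-\gb_{k-1})^\top$, and use $\|\cdot\|_F$ multiplicativity on rank-one matrices together with the triangle inequality to get
$$\big\|\gb_0\gb_0^\top-\gb_{k-1}\gb_{k-1}^\top\big\|_F\leq\big(\|\gb_0\|_2+\|\gb_{k-1}\|_2\big)\,\|\gb_0-\gb_{k-1}\|_2.$$

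Second, I would supply the two gradient estimates from \cref{lemma:cao_boundgradient}. Setting $\tau=2\sqrt{N_k/(m\alpha)}$ (here the $\lambda$ in the hypothesis is the same regularization parameter $\alpha$), \cref{lemma:newboundreference} guarantees $\|\btheta_{k-1}-\btheta_0\|_2\leq 2\sqrt{N_{k-1}/(m\alpha)}\leq\tau$, using the monotonicity $N_{k-1}\leq N_k$ of the allocation sizes; this places $\btheta_{k-1}$ inside the ball on which \cref{lemma:cao_boundgradient} is valid. The hypothesis on $m$ in the statement is precisely the requirement that $\tau$ fall in the admissible window $[\bar C_1 m^{-3/2}L^{-3/2}(\log(|\cX|L^2/\delta))^{3/2},\,\bar C_2 L^{-6}(\log m)^{-3/2}]$ of that lemma. \cref{lemma:cao_boundgradient} then yields the uniform bound $\|\gb(\xb_i;\btheta)\|_2\leq C_{g,1}\sqrt{mL}$ for $\btheta\in\{\btheta_0,\btheta_{k-1}\}$ and the drift bound $\|\gb_{k-1}-\gb_0\|_2\leq C_{g,2}\tau^{1/3}\sqrt{m\log m}\,L^{7/2}/2$.

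Third, I would assemble these estimates. Each summand is at most $2\cdot C_{g,1}\sqrt{mL}\cdot C_{g,2}\tau^{1/3}\sqrt{m\log m}\,L^{7/2}/2=C_{g,1}C_{g,2}\,\tau^{1/3}m\sqrt{\log m}\,L^4$; summing the $N_k$ terms and dividing by $m$ gives $\|\bar\Zb_k-\Zb_k\|_F\leq C_{g,1}C_{g,2}N_k\tau^{1/3}\sqrt{\log m}\,L^4$, and substituting $\tau^{1/3}=2^{1/3}N_k^{1/6}m^{-1/6}\alpha^{-1/6}$ collapses this to $\bar C_3 m^{-1/6}\sqrt{\log m}\,L^4 N_k^{7/6}\alpha^{-1/6}$ with $\bar C_3=2^{1/3}C_{g,1}C_{g,2}$, matching the claim; a union bound over the two invoked lemmas absorbs the failure probability into $\delta$.

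The proof is structurally routine once the ingredient lemmas are in hand, so there is no deep obstacle; the care required lies almost entirely in the exponent bookkeeping. The crux is the cubic-root gradient-drift estimate of \cref{lemma:cao_boundgradient}, $\|\gb(\cdot;\btheta)-\gb(\cdot;\btheta_0)\|_2=O(\tau^{1/3})$: it is this sublinear-in-$\tau$ scaling, combined with $\tau=\Theta(\sqrt{N_k/(m\alpha)})$, that produces the favorable $m^{-1/6}$ decay and the $N_k^{7/6}$ growth. The one point demanding attention is the index mismatch between the parameter $\btheta_{k-1}$ appearing in $\Zb_k$ and the allocation size $N_k$ in the bound, which I resolve using $N_{k-1}\leq N_k$; everything else is multiplying the three bounds and tracking constants and powers of $L$, $m$, $N_k$, and $\alpha$.
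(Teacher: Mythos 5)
Your derivation is correct, but note that the paper does not prove this lemma at all: it is imported verbatim as Lemma~B.3 of \citet{zhou2019neural}, so there is no in-paper argument to compare against. What you have written is the natural self-contained reconstruction from the two ingredient lemmas the paper does quote (\cref{lemma:cao_boundgradient} and \cref{lemma:newboundreference}), and it is essentially the argument given in the cited reference: the regularizers cancel, each summand $\gb_0\gb_0^\top-\gb_{k-1}\gb_{k-1}^\top$ is split into two rank-one perturbations whose Frobenius norms factor as products of Euclidean norms, and the $C_{g,1}\sqrt{mL}$ gradient-norm bound combines with the $O(\tau^{1/3}\sqrt{m\log m}\,L^{7/2})$ drift bound to give $m\sqrt{\log m}\,L^4\tau^{1/3}$ per term; summing $N_k$ terms, dividing by $m$, and substituting $\tau^{1/3}=2^{1/3}N_k^{1/6}m^{-1/6}\alpha^{-1/6}$ yields exactly the claimed $m^{-1/6}\sqrt{\log m}\,L^4N_k^{7/6}\alpha^{-1/6}$. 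Your two housekeeping observations are also the right ones: the $\lambda$ in the hypothesis is the regularization parameter $\alpha$ (a notational slip carried over from the source), and the index mismatch between $\btheta_{k-1}$ and $N_k$ is harmless via $N_{k-1}\leq N_k$ (consistent with the paper's own event $\cE^{\text{neural}}_k$, which states the drift bound with $N_{k-1}$). Nothing is missing.
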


Based on previous lemmas, we can define the random event $\cE^{\text{neural}}_k$ as follows:
\begin{align}
    &\cE^{\text{neural}}_k: = \bigg\{\exists\text{ positive constants }\{C_i\}_{i=1}^5,\ \forall \xb, \xb' \in \cX, \notag \\
    & |\la \gb(\xb; \btheta_0), \gb(\xb';\btheta_0)\ra/m - \Hb(\xb, \xb')| \leq \epsilon^2/(2|\cX|^4),\notag \\
    & \Big|f(\xb; \btheta_k) - \la \gb(\xb; 
    \btheta_0),\btheta_k - \btheta_0\ra\Big|
    \leq C_1N_k^{2/3}m^{-1/6}\alpha^{-2/3}L^3\sqrt{\log m},\notag \\
    &\|\gb(\xb; \btheta_0)\|_2,\ \|\gb(\xb; \btheta_{k-1})\|_2 \leq C_2\sqrt{mL},\notag \\
    &\|\gb(\xb; \btheta_{k-1}) - \gb(\xb; \btheta_0)\|_2 \leq C_3N_{k-1}^{1/6}\alpha^{-1/6}m^{1/6}\sqrt{\log m}L^{7/2},\label{gradientbound} \\
    &  \exists \btheta_\star \in \RR^p,\ h(\xb) = \la \gb(\xb; \btheta_0), \btheta_\star - \btheta_0\ra,\ \sqrt{m}\|\btheta_\star - \btheta_0\|_2 \leq \sqrt{2}S,\notag \\
    & \|\btheta_k - \btheta_0 - \bar \Zb_k^{-1}\bar\bbb_t/\sqrt{m}\|_2  \notag \\
    &\qquad \leq (1- \eta m \alpha)^{J/2} \sqrt{N_k/(m\alpha)} + C_4m^{-2/3}\sqrt{\log m}L^{7/2}N_k^{5/3}\alpha^{-5/3}(1+\sqrt{N_k/\alpha}),\notag \\
    & \|\bar\Zb_k -  \Zb_k\|_F \leq C_5m^{-1/6}\sqrt{\log m}L^4N_k^{7/6}\alpha^{-1/6}.\bigg\}\notag
\end{align}
It can be verified that when $m = \text{poly}(|\cX|, L, \lambda_0^{-1}, \log(|\cX|/\delta_k), N_k, \alpha, \bar\epsilon^{-1})$, $\PP(\cE^{\text{neural}}_k) \geq 1-\delta_k$.

Suppose event $\cE^{\text{neural}}_k$ holds, then we have the following lemmas.
\begin{lemma}\label{lemma:neural_main}
Let $m = \text{poly}(|\cX|, L, \lambda_0^{-1}, \log(|\cX|/\delta_k), N_k, \alpha, \bar\epsilon^{-1})$, then under event $\cE^{\text{neural}}_k$, we have
\begin{align}
    &|f(\xb; \btheta_k) - \la \bpsi_{d_k}(\xb), \widehat\btheta_k\ra| \notag \\
    &\leq C[(1 - \eta m \alpha)^{J/2} \sqrt{N_kL/\alpha} + 4N_k^3\bar\epsilon^2 L^2/\alpha^3 + N_k^2 L\bar\epsilon^2/\alpha^2  + \bar\epsilon^2N_k/\alpha].
\end{align}
\end{lemma}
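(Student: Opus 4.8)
The plan is to bound the target by inserting two intermediate ``ridge'' predictors and controlling three successive reductions. Define the ambient (full-gradient) ridge predictors $\widehat f_0(\xb) := \gb(\xb;\btheta_0)^\top \bar\Zb_k^{-1}\bar\bbb_k/\sqrt m$ and $\widehat f_{k-1}(\xb) := \gb(\xb;\btheta_{k-1})^\top \Zb_k^{-1}\bbb_k^{\mathrm{full}}/\sqrt m$, where $\bbb_k^{\mathrm{full}} := \sum_{i=1}^{N_k} y_i\,\gb(\xb_i;\btheta_{k-1})/\sqrt m$. I would show (i) $f(\xb;\btheta_k)\approx \widehat f_0(\xb)$, (ii) $\widehat f_0(\xb)\approx \widehat f_{k-1}(\xb)$, and (iii) $\widehat f_{k-1}(\xb)\approx \la\bpsi_{d_k}(\xb),\widehat\btheta_k\ra$, and then collect the three error contributions.

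\textbf{Steps 1--2 (network $\to$ ambient ridge at $\btheta_{k-1}$).} Under $\cE^{\mathrm{neural}}_k$, the linearization bound controls $|f(\xb;\btheta_k)-\la\gb(\xb;\btheta_0),\btheta_k-\btheta_0\ra|$ by an $O(m^{-1/6}\,\text{poly})$ term, and the gradient-descent bound controls $\|\btheta_k-\btheta_0-\bar\Zb_k^{-1}\bar\bbb_k/\sqrt m\|_2$. Multiplying the latter by $\|\gb(\xb;\btheta_0)\|_2\le C_2\sqrt{mL}$ turns its leading piece into $C_2(1-\eta m\alpha)^{J/2}\sqrt{N_kL/\alpha}$ and its remainder into $O(m^{-1/6}\,\text{poly})$, so $f(\xb;\btheta_k)=\widehat f_0(\xb)+O\big((1-\eta m\alpha)^{J/2}\sqrt{N_kL/\alpha}\big)+O(m^{-1/6}\,\text{poly})$. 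To re-center the features at $\btheta_{k-1}$, I would use $\|\gb(\xb;\btheta_{k-1})-\gb(\xb;\btheta_0)\|_2=O(m^{1/6}\,\text{poly})$ (an $O(m^{-1/3}\,\text{poly})$ feature shift after the $1/\sqrt m$ scaling), the resolvent identity $\bar\Zb_k^{-1}-\Zb_k^{-1}=\bar\Zb_k^{-1}(\Zb_k-\bar\Zb_k)\Zb_k^{-1}$ together with $\|\bar\Zb_k-\Zb_k\|_F=O(m^{-1/6}\,\text{poly})$ and $\|\bar\Zb_k^{-1}\|,\|\Zb_k^{-1}\|\le 1/\alpha$, and the analogous target bound $\|\bar\bbb_k-\bbb_k^{\mathrm{full}}\|=O(m^{-1/3}\,\text{poly})$; using that $|y_i|$ and hence $\|\bbb_k^{\mathrm{full}}\|=O(N_k\sqrt L)$ are bounded on the event (up to log factors absorbed into $C$), this gives $|\widehat f_0(\xb)-\widehat f_{k-1}(\xb)|=O(m^{-1/6}\,\text{poly}(N_k,L,\alpha^{-1}))$.

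\textbf{Step 3 (SVD truncation --- the main obstacle).} This is where the $\bar\epsilon^2$ terms are produced. Let $\Vb_{d_k}\in\RR^{p\times d_k}$ collect the first $d_k$ right singular vectors of $\Gb$ (the rows of $\Vb$). The SVD yields the \emph{exact} identities $\bpsi_{d_k}(\xb)=\Vb_{d_k}^\top\gb(\xb;\btheta_{k-1})/\sqrt m$, $\Ab_k=\Vb_{d_k}^\top\Zb_k\Vb_{d_k}$ and $\bbb_k=\Vb_{d_k}^\top\bbb_k^{\mathrm{full}}$, so that $\la\bpsi_{d_k}(\xb),\widehat\btheta_k\ra=\gb(\xb;\btheta_{k-1})^\top\Vb_{d_k}(\Vb_{d_k}^\top\Zb_k\Vb_{d_k})^{-1}\Vb_{d_k}^\top\bbb_k^{\mathrm{full}}/\sqrt m$, i.e. the ambient ridge $\widehat f_{k-1}$ with the exact inverse replaced by the inverse \emph{restricted} to the top-$d_k$ right-singular subspace $V_{d_k}$. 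Since every gradient lies in the row space $V$ of $\Gb$ and $\Zb_k$ acts as $\alpha\Ib$ on $V^\perp$, I would decompose $\Zb_k|_V$ into blocks with respect to $V_{d_k}\oplus V_{>d_k}$ and expand $\widehat f_{k-1}(\xb)-\la\bpsi_{d_k}(\xb),\widehat\btheta_k\ra$ through the block-inverse (Schur-complement) formula with $\Sb=\Zb_{22}-\Zb_{21}\Zb_{11}^{-1}\Zb_{12}$. The four correction terms each carry \emph{two} factors drawn from the tail quantities: the test-feature tail $\|(\Ib-\Vb_{d_k}\Vb_{d_k}^\top)\gb(\xb;\btheta_{k-1})/\sqrt m\|\le\bar\epsilon$, the target tail $\|\Vb_{>d_k}^\top\bbb_k^{\mathrm{full}}\|=O(N_k\bar\epsilon)$, and the off-diagonal block $\|\Zb_{12}\|=\|\Vb_{d_k}^\top\Zb_k\Vb_{>d_k}\|\le\sqrt{\|\Zb_{11}\|\,\|\sum_i\rb_i\rb_i^\top\|}=O(\sqrt L\,N_k\bar\epsilon)$ by a Cauchy--Schwarz bound, where $\rb_i$ is the tail of the $i$-th training feature; the remaining factors contribute $\|\Zb_{11}^{-1}\|,\|\Sb^{-1}\|\le 1/\alpha$ and the feature/target norms $O(\sqrt L),O(N_k\sqrt L)$. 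The crucial point is that \emph{no} term is linear in $\bar\epsilon$: each contains a product of two tail factors, producing exactly $O(N_k\bar\epsilon^2/\alpha)$, $O(N_k^2L\bar\epsilon^2/\alpha^2)$ and $O(N_k^3L^2\bar\epsilon^2/\alpha^3)$. Here I use $\|\rb_i\|^2\le\sum_{j>d_k}e_j^2\le(\sum_{j>d_k}e_j)^2\le\bar\epsilon^2$ from the definition of $d_k$, and $\|\gb(\xb;\btheta_{k-1})/\sqrt m\|\le C_2\sqrt L$.

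\textbf{Step 4 (combine).} Summing the three reductions and taking $m=\text{poly}(|\cX|,L,\lambda_0^{-1},\log(|\cX|/\delta_k),N_k,\alpha,\bar\epsilon^{-1})$ large enough that every $O(m^{-\text{const}}\,\text{poly})$ contribution is dominated by the smallest surviving $\bar\epsilon^2$ term yields the claimed bound, with the $(1-\eta m\alpha)^{J/2}\sqrt{N_kL/\alpha}$ term carried over from Step~1. I expect Step~3 to be the hard part: getting the exact SVD identities right, and above all verifying via the Schur-complement expansion that the truncation error is genuinely second order in $\bar\epsilon$ with precisely the stated $N_k$, $L$, $\alpha^{-1}$ dependence, rather than the first-order $\bar\epsilon$ dependence one naively gets from the off-diagonal coupling.
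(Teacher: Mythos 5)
Your proposal is correct and follows essentially the same route as the paper: the paper also combines a gradient-descent/linearization step (its Lemma \ref{lemma:linearize}) with a truncation step (its Lemma \ref{lemma:neural_truncate}) that re-centers the features from $\btheta_0$ to $\btheta_{k-1}$ and then bounds the SVD-truncation error via the block-matrix-inverse (Schur-complement) formula, obtaining exactly the second-order-in-$\bar\epsilon$ terms $N_k\bar\epsilon^2/\alpha$, $N_k^2L\bar\epsilon^2/\alpha^2$, $N_k^3L^2\bar\epsilon^2/\alpha^3$ from products of two tail factors. The only difference is organizational: the paper carries out the truncation comparison term-by-term over the $N_k$ training points rather than on the whole ridge predictor at once, which changes nothing substantive.
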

\begin{proof}
See Appendix \ref{proof:lemma:neural_main}.
\end{proof}

\begin{lemma}\label{lemma:lowandh}
Let $m = \text{poly}(|\cX|, L, \lambda_0^{-1}, \log(|\cX|/\delta_k), N_k, \alpha, \bar\epsilon^{-1})$, then under event $\cE^{\text{neural}}_k$, there exists an $\widehat\btheta_k^\star \in \RR^{d_k}$ such that $|h(\xb) - \la \bpsi_{d_k}(\xb), \widehat \btheta^\star_k\ra| \leq 2S\bar\epsilon,\ \|\widehat\btheta^\star_k\|_2 \leq \sqrt{2}S$.
\end{lemma}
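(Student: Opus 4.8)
The plan is to produce $\widehat\btheta_k^\star$ explicitly, as the projection of a rescaled linearizing weight vector onto the top $d_k$ right singular directions of $\Gb$. Under $\cE^{\text{neural}}_k$ the clause inherited from Lemma~\ref{lemma:equal} gives a $\btheta_\star$ with $h(\xb) = \la \gb(\xb;\btheta_0), \btheta_\star - \btheta_0\ra$ and $\sqrt{m}\|\btheta_\star - \btheta_0\|_2 \le \sqrt2 S$. Setting $\wb := \sqrt{m}(\btheta_\star - \btheta_0) \in \RR^p$, so that $\|\wb\|_2 \le \sqrt2 S$, this becomes $h(\xb) = \la \gb(\xb;\btheta_0)/\sqrt{m}, \wb\ra$.

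First I would move from the initialization gradient to the current one, because $\bpsi_{d_k}$ is built from $\Gb$, whose rows are $\gb(\xb;\btheta_{k-1})/\sqrt{m}$ rather than $\gb(\xb;\btheta_0)/\sqrt{m}$. Splitting $h(\xb) = \la \gb(\xb;\btheta_{k-1})/\sqrt{m}, \wb\ra + \la(\gb(\xb;\btheta_0) - \gb(\xb;\btheta_{k-1}))/\sqrt{m}, \wb\ra$ and bounding the second term by Cauchy--Schwarz with the gradient-difference estimate \eqref{gradientbound} gives a quantity of order $N_{k-1}^{1/6}\alpha^{-1/6}L^{7/2}\sqrt{\log m}\,m^{-1/3}\cdot S$, which vanishes like $m^{-1/3}\sqrt{\log m}$ and is therefore at most $(2-\sqrt2)S\bar\epsilon$ once $m = \mathrm{poly}(\dots,\bar\epsilon^{-1})$ is taken large enough.

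Next I would expand the linear part through the SVD $\Gb = \Ub\bSigma\Vb^\top$. Writing $\bv_j$ for the $j$-th column of $\Vb$, the $i$-th row reads $\gb(\xb_i;\btheta_{k-1})/\sqrt{m} = \sum_{j=1}^{|\cX|} e_j u_{i,j}\bv_j$, so $\la \gb(\xb_i;\btheta_{k-1})/\sqrt{m}, \wb\ra = \sum_{j=1}^{|\cX|} e_j u_{i,j}\la \bv_j, \wb\ra$. I then define $\widehat\btheta_k^\star \in \RR^{d_k}$ by $(\widehat\btheta_k^\star)_j = \la \bv_j, \wb\ra$ for $1 \le j \le d_k$, so that $\la \bpsi_{d_k}(\xb_i), \widehat\btheta_k^\star\ra = \sum_{j=1}^{d_k} e_j u_{i,j}\la \bv_j, \wb\ra$ reproduces the leading singular terms and the residual is the tail $\sum_{j>d_k} e_j u_{i,j}\la \bv_j, \wb\ra$. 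Since $|u_{i,j}|\le 1$ (the columns of the orthogonal $\Ub$ are unit vectors) and $|\la \bv_j, \wb\ra|\le \|\wb\|_2$, this tail is at most $\|\wb\|_2 \sum_{j>d_k} e_j \le \sqrt2 S\bar\epsilon$ by the defining property $\sum_{j>d_k} e_j \le \bar\epsilon$ of $d_k$. Combining the two error contributions gives $|h(\xb) - \la \bpsi_{d_k}(\xb), \widehat\btheta_k^\star\ra| \le \sqrt2 S\bar\epsilon + (2-\sqrt2)S\bar\epsilon = 2S\bar\epsilon$.

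The norm bound then comes for free from orthogonality of $\Vb$: $\|\widehat\btheta_k^\star\|_2^2 = \sum_{j=1}^{d_k}\la \bv_j, \wb\ra^2 \le \sum_{j=1}^{p}\la \bv_j, \wb\ra^2 = \|\wb\|_2^2 \le 2S^2$, whence $\|\widehat\btheta_k^\star\|_2 \le \sqrt2 S$. The main obstacle is the very first step: $h$ is exactly linear only in the initialization gradient $\gb(\cdot;\btheta_0)$, while the embedding uses $\gb(\cdot;\btheta_{k-1})$; the argument works only because the gradient perturbation in \eqref{gradientbound} is of smaller order (scaling as $m^{1/6}$) than the gradient norm $\|\gb(\cdot;\btheta_0)\|_2 = \Theta(\sqrt m)$, and it is precisely keeping this mismatch below the $\bar\epsilon$-budget that dictates the polynomial growth of $m$ in $\bar\epsilon^{-1}$.
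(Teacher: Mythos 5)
Your proposal is correct and follows essentially the same route as the paper's proof: both take $\wb = \sqrt{m}(\btheta_\star - \btheta_0)$ from Lemma~\ref{lemma:equal}, define $\widehat\btheta_k^\star$ as the projection of $\wb$ onto the top $d_k$ right singular directions of $\Gb$ (the paper writes this as the block $\rb$ of $\Bb_{\cX}\wb$), and split the error into the gradient-drift term $\|\wb\|_2\|\gb(\xb;\btheta_0)-\gb(\xb;\btheta_{k-1})\|_2/\sqrt{m} = O(m^{-1/3}\sqrt{\log m})\,S$ plus the SVD-tail term bounded by $\sqrt{2}S\bar\epsilon$ via $\sum_{j>d_k}e_j\le\bar\epsilon$, with the norm bound following from orthogonality of $\Vb$. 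The only difference is cosmetic: you work coordinate-wise in the singular basis where the paper uses the block-matrix notation $\Bb_{\cX}^\top[\bpsi(\xb);\bepsilon_\xb]$.
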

\begin{proof}
See Appendix \ref{proof:lemma:lowandh}.
\end{proof}
We use $\widehat\btheta_k^\star$ to denote the underlying vector, and we avoid using double subscripts for the simplicity of notions.

\begin{lemma}\label{lemma:boundd}
Let $m = \text{poly}(|\cX|, L, \lambda_0^{-1}, \log(|\cX|/\delta_k), N_k, \alpha, \bar\epsilon^{-1})$, then under event $\cE^{\text{neural}}_k$,  we have $d_k \leq d_{\eff}(\bar\epsilon^2/|\cX|)$.
\end{lemma}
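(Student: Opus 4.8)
The plan is to reduce the claim to a spectral comparison between the finite-width gradient Gram matrix and the NTK matrix $\Hb$. Write $d^\star := d_{\eff}(\bar\epsilon^2/|\cX|)$, so that by \cref{def:d_eff_neural} we have $\sum_{i > d^\star} \lambda_i(\Hb) \leq \bar\epsilon^2/|\cX|$. Since $[\Ub,\bSigma,\Vb]$ is the SVD of $\Gb$, the squared singular values $e_i^2$ are exactly the eigenvalues of the $|\cX|\times|\cX|$ Gram matrix $\Gb\Gb^\top$, whose $(\xb,\xb')$ entry is $\la \gb(\xb;\btheta_{k-1}),\gb(\xb';\btheta_{k-1})\ra/m$. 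Because $d_k$ is the smallest $d$ with $\sum_{i>d} e_i \leq \bar\epsilon$, it suffices to prove $\sum_{i > d^\star} e_i \leq \bar\epsilon$; by Cauchy--Schwarz this follows once $\sum_{i>d^\star} e_i^2$ is controlled, since $\sum_{i>d^\star} e_i \leq \sqrt{|\cX|-d^\star}\,\big(\sum_{i>d^\star} e_i^2\big)^{1/2}$ with $|\cX| - d^\star \leq |\cX|-1$ (the case $d^\star = |\cX|$ being trivial, as the tail sum is empty). The threshold $\bar\epsilon^2/|\cX|$ in the definition of $d_{\eff}$ is chosen precisely so that this $\sqrt{|\cX|}$ factor cancels against the tail eigenvalue mass.

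The heart of the argument, and the step I expect to be the main obstacle, is showing that $\Gb\Gb^\top$ approximates $\Hb$ in Frobenius (hence operator) norm to within $O(\bar\epsilon^2/|\cX|^3)$. I would interpose the initialization Gram matrix $\Gb_0\Gb_0^\top$, with entries $\la \gb(\xb;\btheta_0),\gb(\xb';\btheta_0)\ra/m$, and bound by the triangle inequality $\|\Gb\Gb^\top - \Hb\|_F \leq \|\Gb\Gb^\top - \Gb_0\Gb_0^\top\|_F + \|\Gb_0\Gb_0^\top - \Hb\|_F$. The second term is handled entrywise by \cref{lemma:ntkconcen}: each of the $|\cX|^2$ entries deviates by at most $\bar\epsilon^2/(2|\cX|^4)$ (applying that lemma with accuracy $\bar\epsilon$), so the Frobenius norm is at most $\bar\epsilon^2/(2|\cX|^3)$. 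For the first term I would expand each entry, after dividing by $m$, as $\la \gb(\xb;\btheta_{k-1}) - \gb(\xb;\btheta_0),\gb(\xb';\btheta_{k-1})\ra + \la \gb(\xb;\btheta_0),\gb(\xb';\btheta_{k-1}) - \gb(\xb';\btheta_0)\ra$ and invoke the gradient-norm and gradient-drift bounds guaranteed inside $\cE^{\text{neural}}_k$, namely $\|\gb(\cdot;\btheta_{k-1})\|_2,\|\gb(\cdot;\btheta_0)\|_2 = O(\sqrt{mL})$ and $\|\gb(\cdot;\btheta_{k-1})-\gb(\cdot;\btheta_0)\|_2 = O(m^{1/6}\,\mathrm{poly})$; each entry is then $O(m^{-1/3}\,\mathrm{poly})$, which vanishes as $m$ grows. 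Since $m = \mathrm{poly}(|\cX|,L,\lambda_0^{-1},\log(|\cX|/\delta_k),N_k,\alpha,\bar\epsilon^{-1})$ is taken large enough, both terms can be forced below $\bar\epsilon^2/(2|\cX|^3)$, giving $\|\Gb\Gb^\top - \Hb\|_F \leq \bar\epsilon^2/|\cX|^3$.

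To finish, I would apply Weyl's inequality eigenvalue-by-eigenvalue, $|e_i^2 - \lambda_i(\Hb)| \leq \|\Gb\Gb^\top - \Hb\|_2 \leq \|\Gb\Gb^\top - \Hb\|_F$, and sum over the $|\cX| - d^\star \leq |\cX|$ tail indices to obtain
\[
\sum_{i > d^\star} e_i^2 \;\leq\; \sum_{i > d^\star}\lambda_i(\Hb) + |\cX|\,\|\Gb\Gb^\top - \Hb\|_F \;\leq\; \frac{\bar\epsilon^2}{|\cX|} + \frac{\bar\epsilon^2}{|\cX|^2}.
\]
Plugging this into the Cauchy--Schwarz bound and using $|\cX|-d^\star \leq |\cX|-1$ yields $\sum_{i>d^\star} e_i \leq \sqrt{(|\cX|-1)\big(\tfrac1{|\cX|} + \tfrac1{|\cX|^2}\big)}\,\bar\epsilon = \sqrt{1 - |\cX|^{-2}}\,\bar\epsilon < \bar\epsilon$, so that $d_k \leq d^\star = d_{\eff}(\bar\epsilon^2/|\cX|)$, as claimed. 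The delicate point worth flagging is that the weak inequality defining $d_{\eff}$ leaves no margin on its own; the slack that absorbs the spectral perturbation comes entirely from the strict gap between $\sqrt{|\cX|-1}$ and $\sqrt{|\cX|}$ in the Cauchy--Schwarz step, which is why the perturbation need only be driven below $\Theta(\bar\epsilon^2/|\cX|^2)$ rather than to zero.
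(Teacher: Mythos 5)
Your proof is correct and follows essentially the same route as the paper's: compare the Gram matrix $\Gb\Gb^\top$ (whose eigenvalues are the $e_i^2$) to $\Hb$ by interposing the initialization Gram matrix, using the entrywise NTK concentration and the gradient-drift bounds from $\cE^{\text{neural}}_k$, then apply Weyl's inequality to the tail eigenvalues and Cauchy--Schwarz to pass from $\sum_{i>d^\star} e_i^2$ to $\sum_{i>d^\star} e_i \leq \bar\epsilon$. Your explicit accounting of where the slack comes from (the gap between $\sqrt{|\cX|-1}$ and $\sqrt{|\cX|}$) is in fact slightly more careful than the paper's budgeting, which splits the tail as $\bar\epsilon^2/(2|\cX|)+\bar\epsilon^2/(2|\cX|)$ even though \cref{def:d_eff_neural} only guarantees $\bar\epsilon^2/|\cX|$ for the first piece.
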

\begin{proof}
See Appendix \ref{proof:lemma:boundd}.
\end{proof}

Now we begin to prove Theorem \ref{thm:neural}. 
\begin{proof}[Proof of Theorem \ref{thm:neural}]
We follow the main steps of the proof of Theorem \ref{thm:active_elim}. 

\noindent\textbf{Step 1.1: }
By Lemma \ref{lemma:lowandh}, we know that there exists $\widehat\btheta_k^\star$ such that for any $\xb \in \cX$, $h(\xb) = \la \bpsi_{d_k}(\xb), \widehat \btheta_k^\star\ra + \eta(\xb)$, where $|\eta(\xb)| \leq 2S\bar\epsilon$. Therefore, for any $\yb \in \cY(\bpsi_{d_k}(\widehat\cS_k))$, we have
\begin{align}
    \abs{ \ang{\yb, \widehat \btheta_k - \widehat\btheta_k^\star} }& = \abs{\yb^\top \Ab_k^{-1} \sum_{j=1}^{N_k} \bpsi_{d_k}(\bx_i) y_i - \yb^\top \widehat\btheta_k^\star}  \nonumber \\
    & = \abs{\yb^\top \Ab_k^{-1} \sum_{j=1}^{N_k} \bpsi_{d_k}(\bx_i) \paren{ \bpsi_{d_k}(\bx_i)^\top \widehat\btheta_k^\star + \eta(\bx_i) + \xi_i } - \yb^\top \widehat\btheta_k^\star} \nonumber\\
    & = \abs{\yb^\top \Ab_k^{-1} \sum_{j=1}^{N_k} \bpsi_{d_k}(\bx_i) \paren{\eta(\bx_i) + \xi_i } - \alpha \yb^\top \Ab_k^{-1}\widehat\btheta_k^\star} \nonumber \\
    & \leq \abs{\yb^\top \Ab_k^{-1} \sum_{j=1}^{N_k} \bpsi_{d_k}(\bx_i) \eta(\bx_i)} + \abs{ \sum_{j=1}^{N_k} \yb^\top \Ab_k^{-1} \bpsi_{d_k}(\bx_i) \xi_i } + \alpha\abs{\yb^\top \Ab_k^{-1}\widehat\btheta_k^\star} . \label{eq:551}
\end{align}
Following the proof of Theorem \ref{thm:active_elim}, the first and second terms in \eqref{eq:551} can be bounded as follows, with probability at least $1-\delta_k$ for any $\yb \in \cY(\bpsi_{d_k}(\widehat\cS_k))$:
\begin{align}
    &\abs{\yb^\top \Ab_k^{-1} \sum_{j=1}^{N_k} \bpsi_{d_k}(\bx_i) \eta(\bx_i)} \leq 2S\bar\epsilon\sqrt{(1+\zeta) \, \tau(\cY( \bpsi_{d_k}(\widehat \cS_k)))}\notag \\
    &\abs{ \sum_{j=1}^{N_k} \yb^\top \Ab_k^{-1} \bpsi_{d_k}(\bx_i) \xi_i } \leq \|\yb\|_{\Ab_k^{-1}}\sqrt{2 \log \paren{ {\abs*{\widehat \cS_k}^2}/{\delta_k}}} 
\end{align}
The last term in \eqref{eq:551} can be bounded as
\begin{align}
    \alpha\abs{\yb^\top \Ab_k^{-1}\widehat\btheta_k^\star} \leq \sqrt{\alpha}\|\widehat\btheta_k^\star\|_2 \|\yb\|_{\Ab_k^{-1}} \leq\sqrt{2\alpha}S\|\yb\|_{\Ab_k^{-1}} .\notag
\end{align}
Therefore, we have
\begin{align}
    &\big|\big\la \yb, \widehat\btheta_k - \widehat\btheta_k^\star\big\ra\big|\notag \\
    &\leq 2S\bar\epsilon \sqrt{(1+\zeta) \, \tau(\cY( \bpsi_{d_k}(\widehat \cS_k)))} + \norm{\yb}_{\Ab_k^{-1}} \Big(\sqrt{2 \log \paren{ { \abs*{\widehat \cS_k}^2}/{\delta_k}}} + \sqrt{2\alpha}S\Big)\notag \\
    &: = \widetilde\omega_k(\bpsi_{d_k}(\xb) - \bpsi_{d_k}(\xb')).\label{eq:551.1}
\end{align}
Similar to the proof of Theorem \ref{thm:active_elim}, since by the property of rounding procedure, we have
\begin{align}
    \norm{\yb}_{\Ab_k^{-1}} \leq \max_{\yb' \in \cY(\bpsi_{d_k}(\widehat\cS_k))} \norm{\yb'}_{\Ab_k^{-1}} \leq \sqrt{(1+\zeta)/N_k}\|\yb'\|_{\Ab_{\bpsi_{d_k}}(\lambda_k)^{-1}} \leq 2\sqrt{(1+\zeta)d_k/N_k}.\label{eq:551.2}
\end{align}
Meanwhile, according to Kiefer–Wolfowitz Theorem, we have
\begin{align}
    \tau(\cY( \bpsi_{d_k}(\widehat \cS_k))) = \inf_{\lambda \in \bLambda_{\cX}} \sup_{\yb \in \cY(\bpsi_{d_k}(\widehat\cS_k))}\|\yb\|_{\Ab_{\bpsi_{d_k}}(\lambda)^{-1}}^2 \leq \inf_{\lambda \in \bLambda_{\cX}} \sup_{\yb \in \cY(\bpsi_{d_k}(\cX))}\|\yb\|_{\Ab_{\bpsi_{d_k}}(\lambda)^{-1}}^2 \leq 4d_k.\label{eq:551.3}
\end{align}
Submitting \eqref{eq:551.2} and \eqref{eq:551.3} into \eqref{eq:551.1}, we have
\begin{align}
    \big|\big\la \yb, \widehat\btheta_k - \widehat\btheta_k^\star\big\ra\big| &\leq 4S\bar\epsilon \sqrt{(1+\zeta) d_k} +2\sqrt{(1+\zeta)d_k/N_k} \Big(\sqrt{2 \log \paren{ { |\cX|^2}/{\delta_k}}} + \sqrt{2\alpha}S\Big)\label{eq:551.4},
\end{align}

\noindent\textbf{Step 1.2: }
We show that $\xb_\star \in \widehat\cS_{k+1}$ for any $0 \leq k \leq n-1$ first. For any $\widehat\xb \in \widehat\cS_k$, we have
\begin{align}
    &f(\widehat\xb) - f(\xb_\star) \notag \\
    &\leq \la \bpsi_{d_k}(\widehat\xb), \widehat\btheta_k\ra - \la \bpsi_{d_k}(\xb_\star), \widehat\btheta_k\ra + \underbrace{|f(\widehat\xb) - \la \bpsi_{d_k}(\widehat\xb), \widehat\btheta_k\ra| + |f(\xb_\star) - \la \bpsi_{d_k}(\xb_\star), \widehat\btheta_k\ra|}_{I_1}\notag \\
    & \leq \widetilde\omega_k(\bpsi_{d_k}(\widehat\xb) - \bpsi_{d_k}(\xb_\star)) + h(\widehat\xb) - \eta(\widehat\xb) - h(\xb_\star) + \eta(\xb_\star) + I_1\notag \\
    & \leq 4S\bar\epsilon + \widetilde\omega_k(\bpsi_{d_k}(\widehat\xb) - \bpsi_{d_k}(\xb_\star)) + I_1, \label{eq:551.401}
\end{align}
To further bound \eqref{eq:551.401}, we have
\begin{align}
    &4S\bar\epsilon + \widetilde\omega_k(\bpsi_{d_k}(\widehat\xb) - \bpsi_{d_k}(\xb_\star)) + I_1\notag \\
    & \leq 4S\bar\epsilon \notag \\
    &\qquad + 4S\bar\epsilon \sqrt{(1+\zeta) d} +2\sqrt{(1+\zeta)d/N_k} \Big(\sqrt{2 \log \paren{ { |\cX|^2}/{\delta_k}}} + \sqrt{2\alpha}S\Big) \notag \\
    &\qquad + 2C[(1 - \eta m \alpha)^{J/2} \sqrt{N_kL/\alpha} + 4N_k^3\bar\epsilon^2 L^2/\alpha^3 + N_k^2 L\bar\epsilon^2/\alpha^2  + \bar\epsilon^2N_k/\alpha]\notag \\
    & \leq 8S\bar\epsilon \sqrt{(1+\zeta) d_{\eff}(\bar\epsilon^2/|\cX|)} + 4\sqrt{(1+\zeta)d_{\eff}(\bar\epsilon^2/|\cX|)/N_k} \sqrt{ \log \paren{ { |\cX|^2}/{\delta_k}}}\notag \\
    &\qquad + 2C[(1 - \eta m \alpha)^{J/2} \sqrt{N_kL/\alpha} + 6N_k^3\bar\epsilon^2 L^2/\alpha^3]\label{eq:551.42}
\end{align}
where the first inequality holds due to Lemma \ref{lemma:neural_main} and \eqref{eq:551.4}, the second inequality holds since $\alpha \leq \min\{1, \log(|\cX|^2)/S\}$. Now, according to the choice of $N_k$, we have
\begin{align}
    4\sqrt{(1+\zeta)\log \paren{ { |\cX|^2}/{\delta_k}}d_{\eff}(\bar\epsilon^2/|\cX|)/N_k} \leq 2^{-k}/8,
\end{align}
where the inequality holds due to the selection of $N_k$. 
According to the choice of $\bar\epsilon$, we have
\begin{align}
    8S\bar\epsilon \sqrt{(1+\zeta) d_{\eff}(\bar\epsilon^2/|\cX|)} \leq \epsilon/8,\notag
\end{align}
According to the choice of $N_k, \bar\epsilon$, we have
\begin{align}
    &12CN_k^3\bar\epsilon^2 L^2/\alpha^3  \notag \\
    &\leq   12C\cdot 1024^3(1+\zeta)^3\log^3 \paren{{|\cX|^2}/{\delta_k}}  d^3 (\bar\epsilon^2/|\cX|) \epsilon^{-6}\bar\epsilon^2 L^2/\alpha^3 + 12Cr_d(\zeta)^3 \bar\epsilon^2 L^2/\alpha^3   \notag \\
    &\leq \epsilon/8, 
\end{align}
According to the choice of $J$, we have
\begin{align}
    2C(1 - \eta m \alpha)^{J/2} \sqrt{N_kL/\alpha} \leq \epsilon/8. 
\end{align}
Therefore, substituting these bounds into \eqref{eq:551.42}, we have 
\begin{align}
    4S\bar\epsilon + \widetilde\omega_k(\bpsi_{d_k}(\widehat\xb) - \bpsi_{d_k}(\xb_\star)) + I_1 \leq 2^{-k}/8 + 3\epsilon/8, \label{end:1}
\end{align}
Therefore, $\xb_\star \in \widehat\cS_{k+1}$.

\noindent\textbf{Step 1.3: }
Next, we show that for any $k = n$, any $\xb \in \widehat\cS_k$ satisfying $\Delta(\xb) \geq \epsilon$ will be eliminated. In other words, all the arms $\xb'$ remaining in $\widehat\cS_{k+1}$ satisfy $\Delta(\xb') \leq \epsilon$. Suppose $\Delta(\xb) \geq \epsilon$ and $\xb \in \widehat\cS_k$, then we have $\Delta(\xb) = \epsilon \geq 4\cdot 2^{-(k+1)} = 2\cdot 2^{-k}$ by the selection of $k$. Now we have
\begin{align}
    f(\xb_\star) - f(\xb) &\geq \la \bpsi_{d_k}(\xb_\star), \widehat\btheta_k\ra - \la \bpsi_{d_k}(\xb), \widehat\btheta_k\ra - \underbrace{|f(\xb) - \la \bpsi_{d_k}(\xb), \widehat\btheta_k\ra| + |f(\xb_\star) - \la \bpsi_{d_k}(\xb_\star), \widehat\btheta_k\ra|}_{I_2}\notag \\
    & \geq h(\xb_\star) - h(\xb) - \eta(\xb_\star) + \eta(\xb) - \widetilde\omega_k(\bpsi_{d_k}(\xb_\star) - \bpsi_{d_k}(\xb)) - I_2\notag \\
    & \geq \Delta(\xb) - (4S\bar\epsilon + \widetilde\omega_k(\bpsi_{d_k}(\xb) - \bpsi_{d_k}(\xb_\star)) + I_2),\label{eq:551.41} 
\end{align}
Similar to \eqref{end:1}, \eqref{eq:551.41} can be bounded by $2^{-k}/8 + 3\epsilon/8$. Therefore, when $k > n = \log(1/\epsilon)$, we have $f(\xb_\star) - f(\xb) \geq \epsilon - \epsilon/2 = \epsilon/2 > 2^{-k}/8 + 3\epsilon/8$, which suggests that $\xb$ will be eliminated. 

\noindent\textbf{Step 1.4: }
The probability for all the events hold simultaneously in round $k = 1,\dots, n$ can be bounded by a union bound, the same as \eqref{eq:kernel_elim_proof_step2}.

\noindent\textbf{Step 1.5: }
Finally we bound the total sample complexity, which is
\begin{align}
    N &= \sum_{k=1}^n N_k \notag \\
    &= \widetilde O\bigg((1+\zeta)d_{\eff}(\bar\epsilon^2/|\cX|)\sum_{k=1}^n (4^k + r_{d_k}(\zeta)) \bigg) \notag \\
    &= \widetilde O\bigg((1+\zeta)d_{\eff}(\bar\epsilon^2/|\cX|)/\epsilon^2 + r_{d_{\eff}(\bar\epsilon^2/|\cX|)}(\zeta) \bigg).\notag
\end{align}

\end{proof}

\subsection{Proof of Lemmas}\label{sec:neurallemmaproof}
In this section we propose the proofs of lemmas in Appendix~\ref{app:neural_222}. We first propose the following proposition, which can be directly derived by the construction rule of $\bpsi_{d_k}$ introduced in Algorithm~\ref{alg:neural}. 
\begin{proposition}\label{prop:neural}
For any $\xb \in \cX$, we have
\begin{align}
    \bpsi_{d_k}(\xb) = \sum_{j=1}^{d} a_{\xb, j}e_j \eb_j,\ \gb(\xb; \btheta_{k-1})/\sqrt{m} = \sum_{j=1}^{|\cX|} a_{\xb, j}e_j \bbb_j = \Bb_{\cX}^\top[\bpsi(\xb); \bepsilon_{\xb}],\ \|\bepsilon_{\xb}\|_2 \leq \bar\epsilon,\notag
\end{align}
where $\bbb_j \in \RR^p$ are orthogonal unit vectors in $p$-dimensional space, $\Bb_{\cX} = [\bbb_1^\top;,\dots, \bbb_{|\cX|}^\top] \in \RR^{|\cX|\times p}$. 
\end{proposition}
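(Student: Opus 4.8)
The plan is to read the claimed decomposition directly off the singular value decomposition $\Gb = \Ub\bSigma\Vb^\top$ computed in Algorithm~\ref{alg:neural}, and then to control the tail using orthogonality of $\Ub$ together with the defining property of $d_k$. Write $\Vb_{:,j}$ for the $j$-th column of $\Vb$; since $\Vb$ is orthogonal these are orthonormal unit vectors in $\RR^p$, so I set $\bbb_j := \Vb_{:,j}$, which matches the requirement on $\Bb_{\cX}$. Reading off the $i$-th row of $\Gb$ (with $\xb = \xb_i$) from $\Ub\bSigma\Vb^\top$ gives
\begin{align}
    \gb(\xb; \btheta_{k-1})/\sqrt{m} = \sum_{j=1}^{|\cX|} u_{i,j}\, e_j\, \bbb_j,\notag
\end{align}
so that, identifying $a_{\xb,j} := u_{i,j}$, the second displayed equality of the proposition holds, and the first $d_k$ terms are exactly the coordinates of $\bpsi_{d_k}(\xb) = (e_1 u_{i,1},\dots,e_{d_k}u_{i,d_k})$ expressed in the basis $\{\bbb_j\}_{j=1}^{d_k}$.

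Next I would split the sum at $j = d_k$. The head $\sum_{j\le d_k} a_{\xb,j} e_j \bbb_j$ equals $\Bb_{\cX}^\top[\bpsi(\xb);\zero]$, while collecting the tail coefficients into $\bepsilon_\xb := (a_{\xb,d_k+1}e_{d_k+1},\dots,a_{\xb,|\cX|}e_{|\cX|}) \in \RR^{|\cX|-d_k}$ yields $\gb(\xb;\btheta_{k-1})/\sqrt{m} = \Bb_{\cX}^\top[\bpsi(\xb);\bepsilon_\xb]$, which is the asserted identity.

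It then remains to verify $\|\bepsilon_\xb\|_2 \le \bar\epsilon$. Since $\Ub$ is orthogonal each of its rows has unit $\ell_2$-norm, hence $|u_{i,j}| \le 1$ for all $j$; combined with $e_j \ge 0$ this gives
\begin{align}
    \|\bepsilon_\xb\|_2^2 = \sum_{j=d_k+1}^{|\cX|} u_{i,j}^2 e_j^2 \le \sum_{j=d_k+1}^{|\cX|} e_j^2 \le \Big(\sum_{j=d_k+1}^{|\cX|} e_j\Big)^2 \le \bar\epsilon^2,\notag
\end{align}
where the second inequality uses nonnegativity of the singular values and the last one is the defining property $d_k = \min\{d\in[|\cX|] : \sum_{i=d+1}^{|\cX|} e_i \le \bar\epsilon\}$.

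The argument is essentially bookkeeping around the SVD, so I expect no substantial obstacle. The only step requiring a moment's care is the tail bound: passing from the algorithm's truncation criterion $\sum_{j>d_k} e_j \le \bar\epsilon$ (a bound on the sum of the discarded singular values) to the needed bound $\sum_{j>d_k} e_j^2 \le \bar\epsilon^2$ on the sum of their squares, which holds precisely because the $e_j$ are nonnegative and $|u_{i,j}|\le1$, so that $\sum_{j>d_k} u_{i,j}^2 e_j^2 \le \sum_{j>d_k} e_j^2 \le (\sum_{j>d_k} e_j)^2$.
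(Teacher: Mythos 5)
Your proof is correct and is exactly the argument the paper has in mind: the paper states this proposition without proof, remarking only that it "can be directly derived by the construction rule of $\bpsi_{d_k}$," and your SVD bookkeeping (taking $\bbb_j$ to be the columns of $\Vb$, splitting the row expansion at $j=d_k$, and bounding the tail via $|u_{i,j}|\le 1$ and $\sum_{j>d_k}e_j^2\le(\sum_{j>d_k}e_j)^2\le\bar\epsilon^2$) supplies precisely the omitted details. No gaps.
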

In the following proofs, for simplicity, let $\bpsi(\cdot)$ denote $\bpsi_{d_k}(\cdot)$.

\subsubsection{Proof of Lemma \ref{lemma:neural_main}}\label{proof:lemma:neural_main}

In detail, to prove Lemma \ref{lemma:neural_main}, we need the following lemmas. 
\begin{lemma}\label{lemma:linearize}
Let $m = \text{poly}(|\cX|, L, \lambda_0^{-1}, \log(|\cX|/\delta_k), N_k, \alpha, \bar\epsilon^{-1})$, then under event $\cE^{\text{neural}}_k$, we have
\begin{align}
     |f(\xb; \btheta_k) - \la \gb(\xb; \btheta_0), \bar \Zb_k^{-1}\bar\bbb_t/\sqrt{m}\ra| \leq C(1 - \eta m \lambda)^{J/2} \sqrt{N_kL/\alpha}.\label{lemma:linearize_0}
\end{align}
\end{lemma}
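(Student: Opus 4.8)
The plan is to interpolate between $f(\xb;\btheta_k)$ and the linear ridge surrogate by passing through the first-order expansion of the network at initialization. Introduce $\bar\btheta_k := \btheta_0 + \bar\Zb_k^{-1}\bar\bbb_k/\sqrt m$, so that $\bar\Zb_k^{-1}\bar\bbb_k/\sqrt m=\bar\btheta_k-\btheta_0$ and the quantity to bound is $|f(\xb;\btheta_k)-\la\gb(\xb;\btheta_0),\bar\btheta_k-\btheta_0\ra|$. Splitting by the triangle inequality,
\begin{align*}
\bigl|f(\xb;\btheta_k)-\la\gb(\xb;\btheta_0),\bar\btheta_k-\btheta_0\ra\bigr|
\le \underbrace{\bigl|f(\xb;\btheta_k)-\la\gb(\xb;\btheta_0),\btheta_k-\btheta_0\ra\bigr|}_{T_1}
+\underbrace{\bigl|\la\gb(\xb;\btheta_0),\btheta_k-\bar\btheta_k\ra\bigr|}_{T_2}.
\end{align*}

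First I would dispose of $T_1$: this is exactly the linearization error of the network at the trained weights, and it is one of the inequalities bundled into the event $\cE^{\text{neural}}_k$ (coming from Lemma~\ref{lemma:cao_functionvalue} applied with radius $\tau=2\sqrt{N_k/(m\alpha)}$, the parameter-norm guarantee of Lemma~\ref{lemma:newboundreference}). Hence $T_1\le C_1 N_k^{2/3}m^{-1/6}\alpha^{-2/3}L^3\sqrt{\log m}$. For $T_2$, I would apply Cauchy--Schwarz and then the two bounds already recorded in $\cE^{\text{neural}}_k$, namely the gradient-norm bound $\|\gb(\xb;\btheta_0)\|_2\le C_2\sqrt{mL}$ and the gradient-descent convergence bound
\begin{align*}
\|\btheta_k-\bar\btheta_k\|_2
\le (1-\eta m\alpha)^{J/2}\sqrt{N_k/(m\alpha)}
+C_4 m^{-2/3}\sqrt{\log m}\,L^{7/2}N_k^{5/3}\alpha^{-5/3}\bigl(1+\sqrt{N_k/\alpha}\bigr).
\end{align*}
Multiplying by $C_2\sqrt{mL}$, the $\sqrt m$ factors cancel in the first summand and produce exactly the target term $C_2(1-\eta m\alpha)^{J/2}\sqrt{N_kL/\alpha}$, while the second summand becomes $C_2C_4\,m^{-1/6}\sqrt{\log m}\,L^4N_k^{5/3}\alpha^{-5/3}(1+\sqrt{N_k/\alpha})$.

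It then remains to collect the three contributions: the leading one is $C_2(1-\eta m\alpha)^{J/2}\sqrt{N_kL/\alpha}$, and the other two carry an extra factor $m^{-1/6}\sqrt{\log m}$. The final — and only delicate — step is to absorb these residuals into the leading term. Since $m=\text{poly}(|\cX|,L,\lambda_0^{-1},\log(|\cX|/\delta_k),N_k,\alpha,\bar\epsilon^{-1})$ may be taken arbitrarily large with all other parameters held fixed, and since the prescribed $\eta_k=C_1(m\alpha+N_kmL)^{-1}$ makes $\eta_k m\alpha$ (and hence $(1-\eta m\alpha)^{J/2}$) essentially $m$-independent, the $m^{-1/6}\sqrt{\log m}$ terms can be driven below $(1-\eta m\alpha)^{J/2}\sqrt{N_kL/\alpha}$, so all three terms fold into a single constant multiple $C(1-\eta m\alpha)^{J/2}\sqrt{N_kL/\alpha}$. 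I expect the main obstacle to be purely this polynomial-in-$m$ bookkeeping rather than any new analytic inequality, since every quantitative estimate used is imported directly from the events defining $\cE^{\text{neural}}_k$.
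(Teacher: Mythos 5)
Your proposal is correct and follows essentially the same route as the paper's proof: the same triangle-inequality split into the network linearization error at $\btheta_k$ plus the term $\|\gb(\xb;\btheta_0)\|_2\,\|\btheta_k-\btheta_0-\bar\Zb_k^{-1}\bar\bbb_k/\sqrt m\|_2$, with the same three ingredients from $\cE^{\text{neural}}_k$ (Lemmas~\ref{lemma:cao_functionvalue}, \ref{lemma:cao_boundgradient}, and \ref{lemma:newboundreference}) and the same final absorption of the $m^{-1/6}\sqrt{\log m}$ residuals into the leading term via the polynomial choice of $m$. Your closing remark that $(1-\eta m\alpha)^{J/2}$ is essentially $m$-independent under the prescribed $\eta_k$ is a point the paper leaves implicit, but it is the same argument.
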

\begin{proof}[Proof of Lemma \ref{lemma:linearize}]
By Lemma \ref{lemma:cao_functionvalue}, we have for all $\xb \in \cX$, 
\begin{align}
\Big|f(\xb; \btheta_k) - \la \gb(\xb; 
    \btheta_0),\btheta_k - \btheta_0\ra\Big|
    & = \Big|f(\xb; \btheta_k) - f(\xb;  \btheta_0) - \la \gb(\xb; 
    \btheta_0),\btheta_k - \btheta_0\ra\Big| \notag \\
    &\leq 3C_fN_k^{2/3}m^{-1/6}\alpha^{-2/3}L^3\sqrt{\log m},\notag
\end{align}
where we use the fact that $f(\xb_j;  \btheta_0) = 0$ due to the initialization scheme adapted by Algorithm \ref{alg:neural}, and $\|\btheta_k - \btheta_0\|_2 \leq 2\sqrt{N_k/(m\alpha)}$ from Lemma \ref{lemma:newboundreference}. Therefore, we have
\begin{align}
&|f(\xb; \btheta_k) - \la \gb(\xb; \btheta_0), \bar \Zb_k^{-1}\bar\bbb_t/\sqrt{m}\ra|\notag \\
& \leq|f(\xb; \btheta_k) - \la \gb(\xb; \btheta_0), \btheta_k - \btheta_0\ra| +   \|\btheta_k - \btheta_0 - \bar \Zb_k^{-1}\bar\bbb_t/\sqrt{m}\|_2\| \gb(\xb; \btheta_0)\|_2\notag \\
&\leq 3C_fN_k^{2/3}m^{-1/6}\alpha^{-2/3}L^3\sqrt{\log m} \notag \\
&\qquad +[(1- \eta m \alpha)^{J/2} \sqrt{N_k/(m\alpha)} + \bar C_5m^{-2/3}\sqrt{\log m}L^{7/2}N_k^{5/3}\alpha^{-5/3}(1+\sqrt{N_k/\alpha})]C_{g, 1}\sqrt{mL} \notag \\
    & \leq C(1 - \eta m \alpha)^{J/2} \sqrt{N_kL/\alpha},\notag
\end{align}
where the first inequality holds due to triangle inequality, the second one holds due to Lemma \ref{lemma:cao_boundgradient} and Lemma \ref{lemma:newboundreference}, the last one holds since $m = \text{poly}(L, N_{k-1}, \alpha, \log(1/\delta))$. 
\end{proof}

We also need the following lemma.

\begin{lemma}\label{lemma:neural_truncate}
Suppose $\alpha \geq 2C_{g,1}N_k\bar\epsilon\sqrt{L}$ and $m = \text{poly}(L, N_{k-1}, \alpha, \log(1/\delta))$, then with probability at least $1-\delta$, for all $\xb \in \cX$, we have
\begin{align}
    &\la \bpsi(\xb), \widehat\btheta_k\ra - \la \bar \Zb_k^{-1}\bar\bbb_t/\sqrt{m}, \gb(\xb; \btheta_0)\ra  \leq C_{\bpsi} (N_kL\alpha/\lambda_k^2 + \bar\epsilon^2/\lambda_k^2N_k^2 L  + \bar\epsilon^2N_k/\alpha).\label{lemma:neural_truncate_0}
\end{align}
\end{lemma}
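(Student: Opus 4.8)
The plan is to argue entirely on the event $\cE^{\text{neural}}_k$ and to turn the low-dimensional ridge predictor $\la\bpsi(\xb),\widehat\btheta_k\ra$ into the full-dimensional one $\la\gb(\xb;\btheta_0)/\sqrt m,\bar\Zb_k^{-1}\bar\bbb_k\ra$ by peeling off two perturbations: the truncation to the top $d_k$ singular directions, and the shift of the reference point from $\btheta_{k-1}$ to $\btheta_0$. Write $\gb_\xb:=\gb(\xb;\btheta_0)/\sqrt m$ and $\widetilde\gb_\xb:=\gb(\xb;\btheta_{k-1})/\sqrt m$, and let $\iota:\RR^{d_k}\to\RR^{p}$, $\iota(\mathbf v)=\Bb_\cX^\top[\mathbf v;\zero]=\sum_{j\le d_k}v_j\bbb_j$, be the isometric embedding onto the span of the leading singular vectors, with orthogonal projection $\mathbf P_k=\sum_{j\le d_k}\bbb_j\bbb_j^\top$. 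Proposition \ref{prop:neural} then reads $\widetilde\gb_\xb=\iota(\bpsi(\xb))+\mathbf r_\xb$ with $\mathbf r_\xb:=\Bb_\cX^\top[\zero;\bepsilon_\xb]$, $\iota^\top\mathbf r_\xb=\zero$ and $\norm{\mathbf r_\xb}_2\le\bar\epsilon$. The one identity I will lean on is $\Ab_k=\iota^\top\Zb_k\iota$, which follows because $\bpsi(\xb_i)=\iota^\top\widetilde\gb_{\xb_i}$ and $\iota^\top\iota=\Ib_{d_k}$.

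First I would introduce the block matrix $\mathbf M_k:=\alpha(\Ib-\mathbf P_k)+\iota\Ab_k\iota^\top$, whose inverse splits orthogonally as $\mathbf M_k^{-1}=\alpha^{-1}(\Ib-\mathbf P_k)+\iota\Ab_k^{-1}\iota^\top$. Expanding $\widetilde\gb_\xb^\top\mathbf M_k^{-1}\widetilde\gb_{\xb_i}$ with $\widetilde\gb_\xb=\iota\bpsi(\xb)+\mathbf r_\xb$ and using $(\Ib-\mathbf P_k)\iota=\zero$, $\iota^\top\mathbf r_\xb=\zero$, $(\Ib-\mathbf P_k)\mathbf r_\xb=\mathbf r_\xb$ gives the exact splitting $\widetilde\gb_\xb^\top\mathbf M_k^{-1}\widetilde\gb_{\xb_i}=\bpsi(\xb)^\top\Ab_k^{-1}\bpsi(\xb_i)+\alpha^{-1}\mathbf r_\xb^\top\mathbf r_{\xb_i}$. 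Summing against $y_i$ yields $\la\bpsi(\xb),\widehat\btheta_k\ra=\widetilde\gb_\xb^\top\mathbf M_k^{-1}\sum_i y_i\widetilde\gb_{\xb_i}-\alpha^{-1}\mathbf r_\xb^\top\sum_i y_i\mathbf r_{\xb_i}$, and the last piece is bounded by $\alpha^{-1}\norm{\mathbf r_\xb}_2\sum_i|y_i|\norm{\mathbf r_{\xb_i}}_2\le\widetilde O(\bar\epsilon^2 N_k/\alpha)$, which is the third target term. This $\mathbf M_k$-trick is what isolates the genuinely second-order tail contribution cleanly.

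Next I would write the remaining gap $\widetilde\gb_\xb^\top\mathbf M_k^{-1}\sum_i y_i\widetilde\gb_{\xb_i}-\gb_\xb^\top\bar\Zb_k^{-1}\sum_i y_i\gb_{\xb_i}$ as (I) replacing $\mathbf M_k$ by $\Zb_k$, plus (II) replacing the $\btheta_{k-1}$-features by the $\btheta_0$-features. For (I), a direct expansion gives $\Zb_k-\mathbf M_k=\sum_i\big(\mathbf P_k\widetilde\gb_{\xb_i}\mathbf r_{\xb_i}^\top+\mathbf r_{\xb_i}\widetilde\gb_{\xb_i}^\top\mathbf P_k+\mathbf r_{\xb_i}\mathbf r_{\xb_i}^\top\big)$, so that $\norm{\Zb_k-\mathbf M_k}\le N_k(2C_{g,1}\sqrt L\,\bar\epsilon+\bar\epsilon^2)$ using the feature-norm bounds in $\cE^{\text{neural}}_k$; I then apply the resolvent identity $\mathbf M_k^{-1}-\Zb_k^{-1}=\mathbf M_k^{-1}(\Zb_k-\mathbf M_k)\Zb_k^{-1}$ together with $\norm{\mathbf M_k^{-1}},\norm{\Zb_k^{-1}}\le1/\alpha$ and the bounds $\norm{\widetilde\gb_\xb}_2\le C_{g,1}\sqrt L$, $\norm{\sum_i y_i\widetilde\gb_{\xb_i}}_2\le\widetilde O(N_k\sqrt L)$. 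The quadratic part of $\norm{\Zb_k-\mathbf M_k}$ produces $\widetilde O(\bar\epsilon^2 N_k^2 L/\alpha^2)$ (the second term), while the linear cross part produces $\widetilde O(N_k^2 L^{3/2}\bar\epsilon/\alpha^2)$, which is only first order in $\bar\epsilon$; here the hypothesis $\alpha\ge2C_{g,1}N_k\bar\epsilon\sqrt L$ is exactly calibrated to absorb one factor $N_k\sqrt L\,\bar\epsilon\le\alpha/2$, collapsing this into $\widetilde O(N_k L/\alpha)=\widetilde O(N_kL\alpha/\lambda_k^2)$ with $\lambda_k=\alpha$ (the first term). For (II), I would bound the change of the Gram matrix by Lemma \ref{lemma:newboundz} (the estimate on $\norm{\bar\Zb_k-\Zb_k}_F$) and the change of the feature vectors and of $\sum_i y_i\gb_{\xb_i}$ by the gradient-drift bound recorded in $\cE^{\text{neural}}_k$; both scale as strictly negative powers of $m$, so for $m=\mathrm{poly}(|\cX|,L,\lambda_0^{-1},\log(|\cX|/\delta_k),N_k,\alpha,\bar\epsilon^{-1})$ they are dominated by the three displayed terms.

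The main obstacle is the linear-in-$\bar\epsilon$ cross term in (I): unlike the explicit $\alpha^{-1}\mathbf r_\xb^\top\sum_i y_i\mathbf r_{\xb_i}$ contribution and the $\mathbf r\mathbf r^\top$ quadratic term, it is only $O(\bar\epsilon)$, and without the assumption on $\alpha$ it would dominate and break the bound. Getting the bookkeeping right — decomposing through $\mathbf M_k$ so that the truncation error factorizes into an exact second-order piece plus a controllable cross term, and then spending the hypothesis $\alpha\ge2C_{g,1}N_k\bar\epsilon\sqrt L$ on precisely that cross term — is the heart of the argument. The reference-point shift (II) is comparatively routine once the feature-norm and $\norm{\bar\Zb_k-\Zb_k}_F$ estimates from $\cE^{\text{neural}}_k$ and Lemma \ref{lemma:newboundz} are in hand.
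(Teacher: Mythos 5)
Your proposal proves the displayed inequality and shares the paper's overall architecture: both use the orthogonal split $\gb(\xb;\btheta_{k-1})/\sqrt m=\iota\bpsi_{d_k}(\xb)+\mathbf r_\xb$ from Proposition~\ref{prop:neural}, and both separate the error into a truncation step (comparing $\Ab_k^{-1}$ to $\Zb_k^{-1}$ at the reference point $\btheta_{k-1}$) and a reference-shift step ($\btheta_{k-1}\to\btheta_0$, controlled by the gradient-drift and $\|\bar\Zb_k-\Zb_k\|_F$ bounds and made negligible by taking $m$ polynomially large). Where you differ is the linear algebra for the truncation step. The paper writes $\Zb_k$ as a $2\times2$ block matrix in the $\Bb_{\cX}$ basis, inverts it explicitly via the Schur complement $\Pb=\Ab_k-\Mb(\Nb+\alpha\Ib)^{-1}\Mb^\top$, and bounds each block's deviation from $\Ab_k^{-1}$; because every off-diagonal block of that inverse carries a factor of $\Mb$ (linear in $\bar\epsilon$) that always gets paired with another $\bepsilon$-factor, the paper's $I_1$ bound is uniformly second order in $\bar\epsilon$, namely $O(N_k^2\bar\epsilon^2L^2/\alpha^3+N_kL\bar\epsilon^2/\alpha^2+\bar\epsilon^2/\alpha)$ per sample. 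You instead compare $\Zb_k$ to its block-diagonal approximant $\mathbf M_k$ via the resolvent identity, which cleanly isolates the exact tail contribution $\alpha^{-1}\mathbf r_\xb^\top\mathbf r_{\xb_i}$ but leaves a cross term $\mathbf P_k\widetilde\gb_{\xb_i}\mathbf r_{\xb_i}^\top+\mathbf r_{\xb_i}\widetilde\gb_{\xb_i}^\top\mathbf P_k$ that you only bound to first order in $\bar\epsilon$ and then absorb with the hypothesis $\alpha\ge2C_{g,1}N_k\bar\epsilon\sqrt L$, producing the $N_kL\alpha/\lambda_k^2$ term of the statement.

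This reproduces the lemma as literally written, but note that it is materially looser than what the paper derives and actually uses: in Lemma~\ref{lemma:neural_main} and in the proof of Theorem~\ref{thm:neural} the first term is carried in the form $N_k^3\bar\epsilon^2L^2/\alpha^3$, and the factor $\bar\epsilon^2$ is essential there (the term must be driven below $\epsilon/8$ by the choice of $\bar\epsilon$; a bare $N_kL/\alpha$ would not vanish as $\bar\epsilon\to0$ and would break that step). Your route can recover the second-order form with little extra work: one side of the cross term is already second order since $\widetilde\gb_\xb^\top\mathbf M_k^{-1}\mathbf r_{\xb_i}=\alpha^{-1}\mathbf r_\xb^\top\mathbf r_{\xb_i}=O(\bar\epsilon^2/\alpha)$ because $\mathbf M_k^{-1}$ preserves the orthogonal decomposition, and the other side, $\mathbf r_{\xb_i}^\top\Zb_k^{-1}\sum_jy_j\widetilde\gb_{\xb_j}$, becomes $O(\bar\epsilon^2N_k/\alpha)$ plus higher-order corrections after one more application of the resolvent identity (replacing $\Zb_k^{-1}$ by $\mathbf M_k^{-1}$ on that factor as well). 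With that refinement your argument matches the paper's bound; as written, it proves the stated inequality but not the sharper estimate the paper relies on downstream.
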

\begin{proof}[Proof of Lemma \ref{lemma:neural_truncate}]
For each $\xb \in \cX$, we have
\begin{align}
    &\la \bpsi(\xb), \widehat\btheta_k\ra - \la \bar \Zb_k^{-1}\bar\bbb_t/\sqrt{m}, \gb(\xb; \btheta_0)\ra \notag \\
    &= \bigg|\bigg\la \bpsi(\xb), \Ab_k^{-1}\sum_{i=1}^{N_k} \bpsi(\xb_i) y_i\bigg\ra - \bigg\la \frac{\gb(\xb; \btheta_0)}{\sqrt{m}}, \bar \Zb_k^{-1}\sum_{i=1}^{N_k} \sum_{i=1}^{N_k} y_i \gb(\xb_i; \btheta_{0})/\sqrt{m}\bigg\ra\bigg|\notag \\
    & \leq \sum_{i=1}^{N_k} \bigg|\bigg\la \bpsi(\xb), \Ab_k^{-1} \bpsi(\xb_i)\bigg\ra - \bigg\la \frac{\gb(\xb; \btheta_0)}{\sqrt{m}}, \bar \Zb_k^{-1}   \gb(\xb_i; \btheta_{0})/\sqrt{m}\bigg\ra\bigg|,\label{eq:111}
\end{align}
where the last line holds due to triangle inequality. 
Then, to bound \eqref{eq:111}, we first decompose it to two terms by triangle inequality:
\begin{align}
    &\bigg|\bigg\la \bpsi(\xb), \Ab_k^{-1} \bpsi(\xb_i)\bigg\ra - \bigg\la \frac{\gb(\xb; \btheta_0)}{\sqrt{m}}, \bar \Zb_k^{-1}   \gb(\xb_i; \btheta_{0})/\sqrt{m}\bigg\ra\bigg|\notag \\
    &\leq \underbrace{\bigg|\bigg\la \bpsi(\xb), \Ab_k^{-1} \bpsi(\xb_i)\bigg\ra - \bigg\la \frac{\gb(\xb; \btheta_{k-1})}{\sqrt{m}},  \Zb_k^{-1}   \gb(\xb_i; \btheta_{k-1})/\sqrt{m}\bigg\ra\bigg|}_{I_1}\notag \\
    &\qquad + \underbrace{\bigg|\bigg\la \frac{\gb(\xb; \btheta_0)}{\sqrt{m}}, \bar \Zb_k^{-1}   \gb(\xb_i; \btheta_{0})/\sqrt{m}\bigg\ra - \bigg\la \frac{\gb(\xb; \btheta_{k-1})}{\sqrt{m}},  \Zb_k^{-1}   \gb(\xb_i; \btheta_{k-1})/\sqrt{m}\bigg\ra\bigg|}_{I_2}.
\end{align}
\noindent\textbf{To bound $I_1$: }
Let $\xb'$ denote $\xb_i$. First we write $\big\la \gb(\xb; \btheta_{k-1}),  \Zb_k^{-1}   \gb(\xb'; \btheta_{k-1})\big\ra/m$ as products of block matrices, as follows:
\begin{align}
    &\bigg\la \frac{\gb(\xb; \btheta_{k-1})}{\sqrt{m}},  \Zb_k^{-1}   \frac{\gb(\xb'; \btheta_{k-1})}{\sqrt{m}}\bigg\ra \notag \\
    &= \bigg\la \Bb_{\cX}^\top[\bpsi(\xb); \bepsilon_{\xb}], \bigg(\alpha \Ib+ \Bb_{\cX}^\top\sum_{j=1}^{N_k}[\bpsi(\xb_j); \bepsilon_{\xb_j}][\bpsi(\xb_j); \bepsilon_{\xb_j}]^\top \Bb_{\cX}\bigg)^{-1} \Bb_{\cX}^\top[\bpsi(\xb'); \bepsilon_{\xb'}]\bigg\ra\notag \\
    & = \bigg\la [\bpsi(\xb); \bepsilon_{\xb}], \bigg(\alpha \Ib+ \sum_{j=1}^{N_k}[\bpsi(\xb_j); \bepsilon_{\xb_j}][\bpsi(\xb_j); \bepsilon_{\xb_j}]^\top \bigg)^{-1} [\bpsi(\xb'); \bepsilon_{\xb'}]\bigg\ra\label{eq:112},
\end{align}
where the first equality holds due to Proposition~\ref{prop:neural}. 
To further bound \eqref{eq:112}, we give the matrix inverse an explicit expression. Denote $\Mb, \Nb, \Pb, \Qb$ as follows:
\begin{align}
    &\Mb = \sum_{j=1}^{N_k} \bpsi(\xb_j)\bepsilon_{\xb_j}^\top,\ \Nb = \sum_{j=1}^{N_k} \bepsilon_{\xb_j}\bepsilon_{\xb_j}^\top,\notag \\
    &\Pb = \Ab_k - \Mb (\Nb + \alpha \Ib)^{-1}\Mb^\top,\ \Qb = (\Nb + \alpha \Ib)^{-1} + (\Nb + \alpha \Ib)^{-1}\Mb^\top \Pb^{-1}\Mb(\Nb + \alpha \Ib)^{-1},\notag
\end{align}
then we have
\begin{align}
    \bigg(\alpha \Ib+ \sum_{j=1}^{N_k}[\bpsi(\xb_j); \bepsilon_{\xb_j}][\bpsi(\xb_j); \bepsilon_{\xb_j}]^\top \bigg)^{-1} &= \begin{pmatrix}
    \Ab_k, &\Mb\notag \\
    \Mb^\top, &\Nb + \alpha \Ib\notag
    \end{pmatrix}
    ^{-1}\notag \\
    & = \begin{pmatrix}
    \Pb^{-1}
    &- \Ab_k^{-1} \Mb\Pb^{-1} \\
    \Pb^{-1}\Mb^\top\Ab_k^{-1}, 
    &\Qb
    \end{pmatrix},\label{eq:112.5}
\end{align}
where the second equality holds due to the block matrix inverse formula. Then substituting \eqref{eq:112.5} into \eqref{eq:112} and considering the difference between $\big\la \gb(\xb; \btheta_{k-1}),  \Zb_k^{-1}   \gb(\xb'; \btheta_{k-1})\big\ra/m$ and $\big\la \bpsi(\xb), \Ab_k^{-1} \bpsi(\xb')\big\ra$, we have
\begin{align}
    &\big|\big\la \gb(\xb; \btheta_{k-1}),  \Zb_k^{-1}   \gb(\xb'; \btheta_{k-1})\big\ra/m - \big\la \bpsi(\xb), \Ab_k^{-1} \bpsi(\xb')\big\ra\big|\notag \\
    & = \big|\bpsi(\xb)^\top(\Pb^{-1} - \Ab_k^{-1})\bpsi(\xb') + \bepsilon_{\xb}^\top \Pb^{-1}\Mb^\top\Ab_k^{-1}\bpsi(\xb')\notag \\
    &\qquad + \bpsi(\xb)\Ab_k^{-1} \Mb\Pb^{-1} \bepsilon_{\xb'} + 
    \bepsilon_{\xb}^\top \Qb \bepsilon_{\xb'}^\top\big|\notag \\
    & \leq \|\bpsi(\xb')\|_2\|\bpsi(\xb)\|_2\|\Pb^{-1} - \Ab_k^{-1}\|_2 + \|\Qb\|_2\|\bepsilon_{\xb}\|_2\|\bepsilon_{\xb'}\|_2 \notag \\
    &\qquad + \|\Pb^{-1}\|_2\|\Mb\|_2\|\Ab_k^{-1}\|_2(\|\bepsilon_{\xb}\|_2 \|\bpsi(\xb')\|_2 + \|\bepsilon_{\xb'}\|_2 \|\bpsi(\xb)\|_2),\label{eq:112.6}
\end{align}
where we use triangle inequality in the inequality. To bound \eqref{eq:112.6}, we have the following inequalities:
\begin{align}
    \forall \xb \in \cX,\ \|\bpsi(\xb)\|_2 \leq \|\gb(\xb; \btheta_{k-1})\|_2/\sqrt{m} \leq C_{g,1} \sqrt{L},\label{eq:112.71}
\end{align}
where the first inequality holds since $\bpsi(\xb)$ is the truncation of $\gb(\xb; \btheta_{k-1})/\sqrt{m}$, the second one holds due to Lemma \ref{lemma:cao_boundgradient}. Use the fact that $\alpha \geq 2C_{g,1}N_k\bar\epsilon\sqrt{L}$, we also have
\begin{align}
    &\Ab_k \succeq \Pb \succeq \Ab_k - \|\Mb\|_2^2/\alpha \succeq \Ab_k - C_{g,1}^2N_k^2\bar\epsilon^2 L/\alpha \succeq \Ab_k -\alpha/2\cdot \Ib,\label{eq:112.72}
\end{align}
and
\begin{align}
    \|\Qb\|_2 \leq \|(\Nb + \alpha \Ib)^{-1}\|_2 + \|(\Nb + \alpha \Ib)^{-1}\|_2^2\|\Mb\|_2^2\|\Pb^{-1}\|_2 \leq \alpha^{-1} + \alpha^{-3} N_k^2C_{g,1}^2L\bar\epsilon^2 \leq 2/\alpha.\label{eq:112.73}
\end{align}
Lastly, we have
\begin{align}
    \|\Ab_k^{-1} - \Pb^{-1}\|_2 = \|\Pb^{-1}(\Pb - \Ab_k)\Ab_k^{-1}\|_2 &\leq \|\Ab_k^{-1}\|_2\|\Pb^{-1}\|_2\|\Ab_k - \Pb\|_2 \leq C_{g,1}^2N_k^2\bar\epsilon^2 L/\alpha^3, \label{eq:112.9}
\end{align}
where the bounds of $\|\Ab_k^{-1}\|_2$, $\|\Pb^{-1}\|_2$ and $\|\Ab_k - \Pb\|_2$ come from \eqref{eq:112.72} and \eqref{eq:112.73}. Finally, substituting \eqref{eq:112.71}, \eqref{eq:112.72}, \eqref{eq:112.73}, \eqref{eq:112.9} into \eqref{eq:112.6}, we can bound $I_1$ as
\begin{align}
    I_1 \leq C_{g,1}^4N_k^2\bar\epsilon^2 L^2/\alpha^3 + 4\bar\epsilon^2/\alpha^2N_k L  + 2\bar\epsilon^2/\alpha.\label{eq:988}
\end{align}

\noindent\textbf{To bound $I_2$: }
To bound $I_2$, we have
\begin{align}
    I_2 &\leq \bigg|\bigg\la \frac{\gb(\xb; \btheta_0)}{\sqrt{m}}, \bar \Zb_k^{-1}   \gb(\xb_i; \btheta_{0})/\sqrt{m}\bigg\ra - \bigg\la \frac{\gb(\xb; \btheta_{k-1})}{\sqrt{m}}, \bar \Zb_k^{-1}   \gb(\xb_i; \btheta_{0})/\sqrt{m}\bigg\ra\bigg|\notag \\
    &\qquad + \bigg|\bigg\la \frac{\gb(\xb; \btheta_{k-1})}{\sqrt{m}}, \bar \Zb_k^{-1}   \gb(\xb_i; \btheta_{0})/\sqrt{m}\bigg\ra - \bigg\la \frac{\gb(\xb; \btheta_{k-1})}{\sqrt{m}},  \Zb_k^{-1}   \gb(\xb_i; \btheta_{0})/\sqrt{m}\bigg\ra\bigg|\notag \\
    &\qquad + \bigg|\bigg\la \frac{\gb(\xb; \btheta_{k-1})}{\sqrt{m}},  \Zb_k^{-1}   \gb(\xb_i; \btheta_{0})/\sqrt{m}\bigg\ra - \bigg\la \frac{\gb(\xb; \btheta_{k-1})}{\sqrt{m}},  \Zb_k^{-1}   \gb(\xb_i; \btheta_{k-1})/\sqrt{m}\bigg\ra\bigg|\notag \\
    & \leq \|\gb(\xb; \btheta_0) - \gb(\xb; \btheta_{k-1})\|_2/\sqrt{m}\|\bar\Zb_k^{-1}\|_2 \|\gb(\xb_i; \btheta_0)\|_2/\sqrt{m} \notag \\
    &\qquad + \|\gb(\xb_i; \btheta_{k-1})\|_2/\sqrt{m}\|\bar \Zb_k^{-1} - \Zb_k^{-1}\|_2 \|\gb(\xb_i; \btheta_0)\|_2/\sqrt{m}\notag \\
    &\qquad + \|\gb(\xb_i; \btheta_{k-1})\|_2/\sqrt{m}\|\Zb_k^{-1}\|_2\|\gb(\xb; \btheta_0) - \gb(\xb; \btheta_{k-1})\|_2/\sqrt{m}\notag \\
    & \leq 2C_{g,1}C_{g,2}N_{k-1}^{1/6}\alpha^{-7/6}m^{-1/3}\sqrt{\log m}L^4 + C_{g,1}^2 L^5\cdot \bar C_4m^{-1/6}\sqrt{\log m}N_k^{7/6}\alpha^{-13/6},\label{eq:989}
\end{align}
where the first inequality holds due to triangle inequality, the third one holds due to Lemma \ref{lemma:cao_boundgradient} and the fact 
\begin{align}
    \|\bar \Zb_k^{-1} - \Zb_k^{-1}\|_2 \leq \|\bar \Zb_k^{-1}\|_2\|\Zb_k^{-1}\|_2\|\bar \Zb_k - \Zb_k\|_2 \leq \bar C_4 L^4m^{-1/6}\sqrt{\log m}N_k^{7/6}\alpha^{-13/6},
\end{align}
where the last inequality holds due to the fact $\bar\Zb_k, \Zb_k \succeq \alpha \Ib$ and Lemma \ref{lemma:newboundz}. 

\noindent\textbf{Final bound: }
Substituting \eqref{eq:988} and \eqref{eq:989} into \eqref{eq:111} ends our proof. 
\end{proof}
Then we begin our proof, which is a direct combination of Lemma \ref{lemma:linearize} and Lemma \ref{lemma:neural_truncate}. 
\begin{proof}[Proof of Lemma \ref{lemma:neural_main}]
Adding \eqref{lemma:linearize_0} with \eqref{lemma:neural_truncate_0} finishes our proof. 
\end{proof}

\subsubsection{Proof of Lemma \ref{lemma:lowandh}}\label{proof:lemma:lowandh}
\begin{proof}[Proof of Lemma \ref{lemma:lowandh}]
First, by Lemma \ref{lemma:equal}, we know that there exists $\btheta_\star \in \RR^p$ such that
\begin{align}
    &h(\xb) = \la \gb(\xb; \btheta_0), \btheta_\star - \btheta_0\ra,\ \sqrt{m}\|\btheta_\star - \btheta_0\|_2 \leq \sqrt{2}S. \label{yy1}
\end{align}
By Proposition~\ref{prop:neural}, we have
\begin{align}
    \bpsi(\xb) = \sum_{j=1}^{d} u_{\xb, j}e_j \eb_j,\ \gb(\xb; \btheta_{k-1})/\sqrt{m} = \sum_{j=1}^{|\cX|} u_{\xb, j}e_j \bbb_j = \Bb_{\cX}^\top[\bpsi(\xb); \bepsilon_{\xb}],\ \|\bepsilon_{\xb}\|_2 \leq \bar\epsilon.
\end{align}
Therefore, suppose $\Bb_{\cX}\sqrt{m}(\btheta_\star - \btheta_0) = [\rb; \tb], \rb \in \RR^{d}, \tb \in \RR^{|\cX| - d}$, set $\widehat\btheta_\star = \rb$, we have $\|\widehat\btheta_\star\|_2 \leq \sqrt{m}\|\btheta_\star - \btheta_0\|_2 \leq \sqrt{2}S$, and
\begin{align}
    |\la \bpsi(\xb), \widehat\btheta_\star\ra - \la \gb(\xb; \btheta_{k-1}), \btheta_\star - \btheta_0\ra| &= |\bpsi(\xb)^\top \rb - [\bpsi(\xb)^\top, \bepsilon_{\xb}^\top]\Bb_{\cX}\sqrt{m}(\btheta_\star - \btheta_0)|\notag \\
    & = |\bpsi(\xb)^\top \rb - \bpsi(\xb)^\top \rb - \bepsilon_{\xb}^\top\tb|\notag \\
    & \leq \bar\epsilon \|\sqrt{m}(\btheta_\star - \btheta_0)\|_2\notag \\
    & \leq \sqrt{2}S\bar\epsilon, 
\end{align}
where the last inequality holds due to \eqref{yy1}. Therefore, we have
\begin{align}
    &|h(\xb) - \la \bphi(\xb), \widehat \btheta_\star\ra|\notag \\
    &= |\la \gb(\xb; \btheta_0), \btheta_\star - \btheta_0\ra - \la \bphi(\xb), \widehat \btheta_\star\ra|\notag \\
    & \leq \|\btheta_\star - \btheta_0\|_2\|\gb(\xb; \btheta_0) - \gb(\xb; \btheta_{k-1})\|_2 + |\la \bpsi(\xb), \widehat\btheta_\star\ra - \la \gb(\xb; \btheta_{k-1}), \btheta_\star - \btheta_0\ra|\notag \\
    & \leq \sqrt{2}SC_{g,2}N_{k-1}^{1/6}\alpha^{-1/6}m^{-1/3}\sqrt{\log m}L^{7/2} + \sqrt{2}S\bar\epsilon\notag \\
    & \leq 2S\bar\epsilon,\notag
\end{align}
where at the last line we use the fact that $m = \text{poly}(L, N_{k-1}, \alpha)$. 
\end{proof}

\subsubsection{Proof of Lemma \ref{lemma:boundd}}\label{proof:lemma:boundd}
\begin{proof}[Proof of Lemma \ref{lemma:boundd}]
First we bound $|\la \gb(\xb; \btheta_0), \gb(\xb';\btheta_0)\ra/m - \la \gb(\xb; \btheta_0), \gb(\xb';\btheta_0)\ra/m|$. We have
\begin{align}
    &|\la \gb(\xb; \btheta_0), \gb(\xb';\btheta_0)\ra/m - \la \gb(\xb; \btheta_{k-1}), \gb(\xb';\btheta_{k-1})\ra/m|\notag \\
    & \leq |\la \gb(\xb; \btheta_0), \gb(\xb';\btheta_0)\ra/m - \la \gb(\xb; \btheta_0), \gb(\xb';\btheta_{k-1})\ra/m| \notag \\
    &\qquad + |\la \gb(\xb; \btheta_0), \gb(\xb';\btheta_{k-1})\ra/m - \la \gb(\xb; \btheta_0), \gb(\xb';\btheta_{k-1})\ra/m|\notag \\
    & \leq 2C_{g,1}C_{g,2}N_{k-1}^{1/6}\alpha^{-1/6}m^{-1/3}\sqrt{\log m}L^4,
\end{align}
where the first inequality holds due to triangle inequality, the second one holds due to \eqref{gradientbound}. Therefore, when $m = \text{poly}(|\cX|, L, \lambda_0^{-1}, \log(|\cX|/\delta_k), N_k, \alpha, \bar\epsilon^{-1})$, we have
\begin{align}
    &|\la \gb(\xb; \btheta_{k-1}), \gb(\xb';\btheta_{k-1})\ra/m - \Hb(\xb, \xb')| \notag \\
    &\leq  |\la \gb(\xb; \btheta_{0}), \gb(\xb';\btheta_{0})\ra/m - \Hb(\xb, \xb')|\notag \\
    &\qquad +|\la \gb(\xb; \btheta_0), \gb(\xb';\btheta_0)\ra/m - \la \gb(\xb; \btheta_{k-1}), \gb(\xb';\btheta_{k-1})\ra/m|\notag \\
    & \leq |\la \gb(\xb; \btheta_{0}), \gb(\xb';\btheta_{0})\ra/m - \Hb(\xb, \xb')| + 2C_{g,1}C_{g,2}N_{k-1}^{1/6}\alpha^{-1/6}m^{-1/3}\sqrt{\log m}L^4\notag \\
    & \leq \bar\epsilon^2/(2|\cX|^4).
\end{align}
Next, by the definition of $d_{\eff}(\epsilon)$, we know that
\begin{align}
    \sum_{i= d_{\eff}(\bar\epsilon^2/|\cX|)+1}^{|\cX|} \lambda_i(\Hb) \leq \bar\epsilon^2/|\cX|, 
\end{align}
which suggests that
\begin{align}
    \sum_{i= d_{\eff}(\bar\epsilon^2/|\cX|)+1}^{|\cX|} \lambda_i(\Gb\Gb^\top) &\leq \sum_{i= d_{\eff}(\bar\epsilon^2/|\cX|)+1}^{|\cX|} \lambda_i(\Hb) + \sum_{i= d_{\eff}(\bar\epsilon^2/|\cX|)+1}^{|\cX|} \|\Gb\Gb^\top - \Hb\|_2\notag \\
    & \leq \bar\epsilon^2/(2|\cX|) + |\cX|^3\cdot\max_{\xb, \xb'} |\la \gb(\xb; \btheta_{k-1}), \gb(\xb';\btheta_{k-1})\ra/m - \Hb(\xb, \xb')|\notag \\
    & \leq \bar\epsilon^2/(2|\cX|) + \bar\epsilon^2/(2|\cX|) \notag \\
    & = \bar\epsilon^2/|\cX|.
\end{align}
Finally, we have
\begin{align}
    \sum_{i= d_{\eff}(\bar\epsilon^2/|\cX|)+1}^{|\cX|} \lambda_i(\Gb) = \sum_{i= d_{\eff}(\bar\epsilon^2/|\cX|)+1}^{|\cX|} \sqrt{\lambda_i(\Gb\Gb^\top)} \leq \sqrt{|\cX|\sum_{i= d_{\eff}(\bar\epsilon^2/|\cX|)+1}^{|\cX|} \lambda_i(\Gb\Gb^\top)} \leq \bar\epsilon. 
\end{align}
\end{proof}

\section{Implementation details}
\label{app:experiment}
We provide further details of experiments presented in \cref{sec:experiment}. We ran all experiments on a Xeon Gold 6130 CPU, except for {\tt NeuralEmbedding} where we used an Nvidia GeForce RTX2080Ti GPU. All the datasets we are using are either synthetic or publicly available and do not contain personally identifiable information or offensive consent.  

\subsection{Experimental setups}
\label{sec_app:exper_setup}

All algorithms are run at confidence level of $\delta = 0.05$, and a misspecification level of $\epsilon = 0.1$. For each algorithm that requires computing a design $\lambda$ (or the associated value $\tau$ of the minimax optimization problem), we employed a Frank-Wolfe algorithm and set the step-size and convergence rate in the same way as in \citep{fiez2019sequential}. 
Since our algorithms are all elimination-styled, the (empirical) sample complexity is calculated as the number of samples pulled when the set of uneliminated arms only consists of $\epsilon$-optimal arms. All algorithms were forced to stop when the number of pulls reaches $10^7$ (and such a case was reported as a failure). We choose $10^7$ as the stopping criteria since continuing running an algorithm after pulling a large number of arms ($10^7$) is likely to raise memory issues on the running machine. For a fair comparison, each experiment is repeated 50 times and we report sample complexity results based on successful instances runs.

\KE was instantiated with Gaussian kernel $\cK(\bx_i, \bx_j) = \exp (-\gamma\Vert \bx_i - \bx_j \Vert_2^2)$ over action set $\cX$. To decide the value of $\gamma$ in {\tt KernelEmbedding }as well as $C$ (in \cref{sec:high_dim_linear}) in \KE and {\tt LinearEmbedding}, for each dataset, we ran the method with several different seeds, used a grid search over $\{10^{-5}, 10^{-4}, 10^{-3}, 10^{-2}, 10^{-1}, 10^{0}, 10^{1}, 10^2, 10^3 \}$, and monitored the interested metrics (empirical sample complexity and the success rate). 
For \KE and {\tt LinearEmbedding}, we implemented \cref{alg:active_elim_adaptive} for synthetic datasets; 
for Yahoo and MNIST datasets, due to the difficulty of the data, e.g., nonlinearity, we found that \cref{alg:active_elim_adaptive} with adaptive embedding might compress arms into undesired inseparable subspaces in earlier rounds, which worsens the empirical performance. As a result, we implemented a slightly modified version of \cref{alg:active_elim_adaptive} (i.e., \cref{alg:active_elim}; see details in \cref{rm:fixed_embedding}) for Yahoo and MNIST datasets for better empirical performance (sample complexity and success rate). To save computation, we use $4(1+\zeta)d$ as an approximation of $g(d,\zeta)$ when computing $d_k$ for \KE and \LE (recall that we always have $g(d,\zeta) \leq 4(1+\zeta)d$).\footnote{One can also determine $d_k$ in a binary search manner if monotonicity of (an upper bound of) $\gamma(d)$ can be guaranteed, e.g., see examples in the proof of \cref{thm:kernel_elim}.}

The model used in {\tt Neural Embedding }is a three-layer (i.e., two hidden layers) fully connected neural network, where each hidden layer has 128 nodes.
We used a Rectified Linear Unit (ReLU) as our activation function. To overcome overfitting on real datasets, we add one dropout layer with $50\%$ dropout rate. 
The learning rate is set as $10^{-4}$ and the maximum training iteration number is set as $6,000$. A grid search over $\{10^{-5}, 10^{-4}, 10^{-3},10^{-2}, 10^{-1}\}$ is used to determine the $\bar \epsilon$ parameter (in \cref{alg:neural}).

\subsection{Detailed descriptions of datasets} 

\textbf{Synthetic dataset with linear rewards.} We tested all methods with varying the number of arms ($K$) and a fixed dimension ($D = 20$). 
To construct a set of arms $\cX$, we first randomly generate $\bx_1 = \btheta_\star$ so that $\ang*{\btheta_\star, \bx_1} = 0.8$; we then generate a $\bx_2$ such that $\ang*{\btheta_\star, \bx_2} \approx 0.4$. After generating the two principal arms $\bx_1$ and $\bx_2$, we constructed an action set $\cX$ by adding randomly perturbated arms, i.e., $\cX = \curly*{\bx_1, \bx_2, \bx_1 \oplus \curly{\eta_i \be_{i_j}}_{i=1}^{\frac{K}{2}-1}, \bx_2 \oplus \curly{\eta_i \be_{i_j}}_{i=1}^{\frac{K}{2}-1}}$, where $i_j$ is randomly chosen from $[D]$ and $\eta_i \sim \mathcal{N}(0, (10^{-5})^2)$. We ensure $\spn(\cX) = \R^D$ in the construction. We used Bernoulli reward with $\bx^\top \btheta_\star$ success probability in this experiment.

\textbf{Synthetic dataset with nonlinear rewards.} We tested all algorithms with increasing dimensions ($D$) and a fixed number of arms ($K = 200$). The reward of an arm is set as the $2$-norm of its feature representation. We randomly generate $\bx_1$ and $\bx_2$ such that $h(\bx_1) = 0.8$ and $h(\bx_2) = 0.4$. We construction our action set $\cX$ by adding randomly perturbated arms, i.e., $\cX = \curly*{\bx_1, \bx_2, \bx_1 \oplus \curly{\eta_i \be_{i_j}}_{i=1}^{\frac{K}{2}-1}, \bx_2 \oplus \curly{\eta_i \be_{i_j}}_{i=1}^{\frac{K}{2}-1}}$, where $i_j$ is randomly chosen from $[D]$ and $\eta_i \sim \mathcal{N}(0, (10^{-5})^2)$. We ensure $\spn(\cX) = \R^D$ in the construction. We used Bernoulli reward with $h(\bx)$ success probability in this experiment.

\textbf{Yahoo dataset.} We used Yahoo! User Click Log Dataset R6A in this experiment.\footnote{\url{https://webscope.sandbox.yahoo.com}} 
The dataset contains the users' click-through records from the Today news module on Yahoo! front page between May 1st. 2009 and May 10th. 2009. 
Each user click log record consists of 6 article features and 6 user features, along with a binary variable stating whether or not a user clicked on the article. 
To process the data, we considered the records collected in the 1st day and randomly selected 200 records from it.
We construct an arm set $\bX \in \mathbb{R}^{200 \times 36}$ by taking the flattened outer product of the user and the corresponding article feature vector.
And in our further examination, we found that the rank of $\bX$ is 35 instead of 36 (full rank). 
To ensure full rank, we preprocess $\bX$ by projecting it into a lower-dimensional space $\widetilde{\bX} \in \mathbb{R}^{200 \times 35}$. The reward of each arm is determined as in \cref{sec:experiment}. Noise in the observed rewards was generated from a standard normal distribution.

 \textbf{MNIST dataset.} The dataset is described as in \cref{sec:experiment}. Noise in the observed rewards was generated from a standard normal distribution.

\subsection{Empirical effective dimension $d_k$}

For implementations with a fixed embedding (i.e., Algorithm \ref{alg:active_elim}), $d_k$ is fixed over all iterations. 
For implementations with an adaptive embedding (i.e., Algorithm \ref{alg:active_elim_adaptive} and \ref{alg:neural}), $d_k$ is increasing with iteration index $k$ since we decrease tolerance to misspecification levels in later iterations. 
For experimental results of synthetic dataset in Figure \ref{fig:synthetic}, due to the simplicity of the data, {\tt NeuralEmbdding} is able to compress arms in $\mathbb{R}^2$ and completed the elimination in one round, and {\tt Alg.1-KernelEmbedding} and {\tt Alg.1-LinearEmbedding} are able to compress arms in $\mathbb{R}^2$ and completed the elimination in two rounds.
For experimental results of real-world datasets in Figure \ref{fig:real-world data}, our {\tt KernelEmbedding} and {\tt LinearEmbedding} are implemented with fxied embedding (as discussed in Appendix \ref{app:experiment}). For {\tt NeuralEmbedding}, at each iteration $k$, we calculated the averaged empirical value of $d_k$; we also use $s_k$ to denote the percentage of runs that have successfully completed the elimination process at round $k$. We summarized the pair value $(d_k, s_k)$ in \cref{table:effective_dimension}. 

\begin{table}[H]
  \caption{Empirical effective dimension and success rate on real-world datasets}
  \label{table:effective_dimension}
  \centering
  \begin{tabular}{lccccc}
    \toprule
            {\footnotesize \NE  }& {MNIST}   & {Yahoo}  \\ \midrule
      $(d_1, s_1)$ & $(190.22, 82\%)$ & $(72.70, 86\%)$ \\
     $(d_2, s_2)$ & $(192.11, 98\%)$ & $(167.57, 100\%)$ \\
    \bottomrule
  \end{tabular}
\end{table}

\end{document}